\documentclass[12pt]{article}

\usepackage[T1]{fontenc}
\usepackage{microtype}
\usepackage{mathtools}
\usepackage{booktabs}
\usepackage{tabularx}
\usepackage{tikz}
\usepackage{algorithm}
\usepackage{algorithmic}
\usepackage{pifont}
\usepackage[letterpaper,margin=1in]{geometry}
\usepackage{hyperref}
\usepackage[round]{natbib}
\usepackage{amssymb}
\usepackage{amsthm}
\usepackage{setspace}
\usepackage{authblk}

\numberwithin{equation}{section}

\newtheorem{theorem}{Theorem}
\newtheorem{proposition}{Proposition}
\newtheorem{lemma}{Lemma}

\doublespacing

\newcommand*{\email}[1]{\href{mailto:#1}{\nolinkurl{#1}}}

\hypersetup{colorlinks=true,citecolor=blue,urlcolor=blue}

\newcommand{\cmark}{\ding{51}}
\newcommand{\xmark}{\ding{55}}

\newcolumntype{Y}{>{\centering\arraybackslash}X}

\newcommand\keywords[1]{%
\begin{NoHyper}
\renewcommand\thefootnote{}\footnote{\emph{Keywords:} #1}%
\addtocounter{footnote}{-1}%
\end{NoHyper}
}

\def\ndot{n_{\scriptscriptstyle\bullet}}

\title{Scalable Subset Selection in Linear Mixed Models}
\author{Ryan Thompson\thanks{Corresponding author. Email: \email{ryan.thompson-1@uts.edu.au}}}
\author{Matt P. Wand}
\author{Joanna J. J. Wang}
\affil{School of Mathematical and Physical Sciences, University of Technology Sydney}

\begin{document}

\maketitle
\vspace{-0.2in}

\begin{abstract}
Linear mixed models (LMMs), which incorporate fixed and random effects, are key tools for analyzing heterogeneous data, such as in personalized medicine. Nowadays, this type of data is increasingly wide, sometimes containing thousands of candidate predictors, necessitating sparsity for prediction and interpretation. However, existing sparse learning methods for LMMs do not scale well beyond tens or hundreds of predictors, leaving a large gap compared with sparse methods for linear models, which ignore random effects. This paper closes the gap with a new $\ell_0$ regularized method for LMM subset selection that can run on datasets containing thousands of predictors in seconds to minutes. On the computational front, we develop a coordinate descent algorithm as our main workhorse and provide a guarantee of its convergence. We also develop a local search algorithm to help traverse the nonconvex optimization surface. Both algorithms readily extend to subset selection in generalized LMMs via a penalized quasi-likelihood approximation. On the statistical front, we provide a finite-sample bound on the Kullback--Leibler divergence of the new method. We then demonstrate its excellent performance in experiments involving synthetic and real datasets.
\end{abstract}

\keywords{coordinate descent, hierarchical sparsity, mixed modeling, penalized quasi-likelihood, variable selection}

\section{Introduction}
\label{sec:introduction}

Linear mixed models (LMMs) have a long and rich history in statistics, dating back at least to the 1950s \citep{Henderson1959}, and have more recently drawn attention in machine learning. The appeal of LMMs lies in their capacity to flexibly model heterogeneous effects in data containing multiple observations on clusters of related units. Examples include clinical studies with repeated measurements on the same patients at different times to capture patient-specific effects, or educational evaluations with repeated measurements on the same schools across different classes to isolate school-specific effects. The prevalence of data exhibiting this type of structure has motivated an enormous body of research into LMMs. We refer to \citet{Jiang2021} and \citet{West2022} for recent and accessible treatments.

Given $n_i$ observations on $\mathbf{y}_i\in\mathbb{R}^{n_i}$, $\mathbf{X}_i\in\mathbb{R}^{n_i\times p}$, and $\mathbf{Z}_i\in\mathbb{R}^{n_i\times d}$, representing a response vector and predictor matrices for cluster $i=1,\dots,m$, an LMM is typically written as
\begin{equation*}
\mathbf{y}_i=\mathbf{X}_i\boldsymbol{\beta}+\mathbf{Z}_i\mathbf{u}_i+\boldsymbol{\varepsilon}_i,\quad i=1,\dots,m,
\end{equation*}
where $\boldsymbol{\varepsilon}_i\in\mathbb{R}^{n_i}$ is stochastic noise.\footnote{To simplify presentation, we exclude intercept terms.} The coefficients $\boldsymbol{\beta}\in\mathbb{R}^p$ are known as fixed effects, and the coefficients $\mathbf{u}_i\in\mathbb{R}^d$ as random effects. The fixed effects are so-called because $\boldsymbol{\beta}$ is constant across all clusters $i$, whereas the random effects $\mathbf{u}_i$ vary across clusters according to a specified distribution, typically a Gaussian distribution $\mathbf{u}_i\sim\mathrm{N}(\mathbf{0},\boldsymbol{\Gamma})$. In modern data-analytic regimes where the predictor dimensionalities $p$ and $d$ are large, it is common to operate on the principle that $\boldsymbol{\beta}$ and $\mathbf{u}_i$ are sparse. The goal is then to learn if a predictor in $\mathbf{X}_i$ or $\mathbf{Z}_i$ has a nonzero effect. More generally, we wish to learn if a predictor has a zero, fixed, or random effect. In that case, which is our paper's focus, one takes $\mathbf{Z}_i=\mathbf{X}_i$, giving
\begin{equation}
\label{eq:lmm}
\mathbf{y}_i=\mathbf{X}_i(\boldsymbol{\beta}+\mathbf{u}_i)+\boldsymbol{\varepsilon}_i,\quad i=1,\dots,m.
\end{equation}
Consequently, unless the number of nonzero random effects is small, it is necessary to structure the covariance matrix to prevent the number of estimable parameters from growing quadratically. A standard and widely adopted structure, particularly suitable for modern, predictor-rich settings, is the diagonal matrix $\boldsymbol{\Gamma}=\operatorname{diag}(\boldsymbol{\gamma})$, where $\boldsymbol{\gamma}\in\mathbb{R}_+^p$ is a vector of random effect variances. Hence, the random effect $u_{ik}=0$ whenever its variance $\gamma_k=0$.

Methods for fitting sparse LMMs broadly fit into two categories: (i) those that select either fixed effects or random effects but not both \citep{Schelldorfer2011,Groll2014,Pan2014,Schelldorfer2014,Kowal2023}, and (ii) those that select both types of effect \citep{Bondell2010,Ibrahim2011,Fan2012,Peng2012,Lin2013,Hui2017a,Hui2017b,Li2018,Heiling2023,Sholokhov2024}. While methods in the first category sometimes scale to thousands of predictors (specifically those that select fixed effects), they assume either the set of relevant fixed effects or set of relevant random effects is known a priori---an assumption rarely satisfied in contemporary predictor-rich settings. The second category of methods addresses this limitation by selecting both types of effects simultaneously, but typically do not scale nearly as well and often do not enforce the hierarchical selection principle, whereby a random effect can only be nonzero if the corresponding fixed effect is also nonzero \citep[an exception is][]{Hui2017a}. Moreover, apart from \citet{Kowal2023} and \citet{Sholokhov2024}, these approaches rely on continuous shrinkage regularizers such as the $\ell_1$ (lasso) penalty \citep{Tibshirani1996}, smoothly clipped absolute deviation penalty \citep{Fan2001}, or minimax concave penalty \citep{Zhang2010}, which have the side effect of biasing (shrinking) any estimated nonzero coefficients toward zero. Further still, many only support Gaussian likelihoods, restricting their application to continuous responses. Table~\ref{tab:capabilities} concisely summarizes these capabilities and limitations.
\begin{table}[ht]
\centering
\small
\begingroup
\setlength{\tabcolsep}{2.8pt}
\begin{tabularx}{\linewidth}{lYYYYYYX}
\toprule
& \shortstack{Fixed \\ selection} & \shortstack{Random \\ selection} & \shortstack{Hier. \\ principle} & \shortstack{$\ell_0$ subset \\ selection} & \shortstack{\vspace{0.07in}GLMMs} & \multicolumn{1}{c}{\shortstack{Unrestr. \\ covariance}} & \multicolumn{1}{c}{\shortstack{Max \\ predictors}} \\
\midrule
\citet{Bondell2010} & \cmark & \cmark & \xmark & \xmark & \xmark & \cmark & $\approx10^1$ \\
\citet{Ibrahim2011} & \cmark & \cmark & \xmark & \xmark & \cmark & \cmark & $\approx10^1$ \\
\citet{Schelldorfer2011} & \cmark & \xmark & - & \xmark & \xmark & \cmark & $\approx10^3$ \\
\citet{Fan2012} & \cmark & \cmark & \xmark & \xmark & \xmark & \xmark & $\approx10^2$ \\
\citet{Peng2012} & \cmark & \cmark & \xmark & \xmark & \xmark & \cmark & $\approx10^1$ \\
\citet{Lin2013} & \cmark & \cmark & \xmark & \xmark & \xmark & \cmark & $\approx10^1$ \\
\citet{Groll2014} & \cmark & \xmark & - & \xmark & \cmark & \cmark & $\approx10^2$ \\
\citet{Pan2014} & \xmark & \cmark & - & \xmark & \cmark & \cmark & $\approx10^1$ \\
\citet{Schelldorfer2014} & \cmark & \xmark & - & \xmark & \cmark & \cmark & $\approx10^3$ \\
\citet{Hui2017a} & \cmark & \cmark & \cmark & \xmark & \cmark & \cmark & $\approx10^1$ \\
\citet{Hui2017b} & \cmark & \cmark & \xmark & \xmark & \cmark & \cmark & $\approx10^1$ \\
\citet{Li2018} & \cmark & \cmark & \xmark & \xmark & \xmark & \cmark & $\approx10^2$ \\
\citet{Heiling2023} & \cmark & \cmark & \xmark & \xmark & \cmark & \cmark & $\approx10^1$ \\
\citet{Kowal2023} & \cmark & \xmark & - & \cmark & \xmark & \cmark & $\approx10^2$ \\
\citet{Sholokhov2024} & \cmark & \cmark & \xmark & \cmark & \xmark & \xmark & $\approx10^3$ \\
Our proposal & \cmark & \cmark & \cmark & \cmark & \cmark & \xmark & $\approx10^4$ \\
\bottomrule
\end{tabularx}
\endgroup
\caption{Capabilities and limitations of existing tools for fitting sparse LMMs. Hier. principle indicates whether the method can enforce that random effects are selected only when their corresponding fixed effects are selected. Unrestr. covariance indicates whether the method allows unrestricted random effect covariance structures. Max predictors is derived from the largest experiment reported in each paper, with the $\approx$ symbol meaning ``on the order of.''}
\label{tab:capabilities}
\end{table}

In parallel to the above line of work, research on sparse linear models (LMs) (i.e., models with fixed effects only) has yielded several remarkable advances over the past few years. In particular, tools now exist that can rapidly solve $\ell_0$ regularized regression---also known as best subset selection---on datasets with thousands of predictors \citep{Hazimeh2020,Dedieu2021} despite the problem being NP-hard \citep{Natarajan1995}. Unlike continuous shrinkage regularizers, $\ell_0$ regularization explicitly penalizes the number of selected predictors, yielding superior performance in selection, interpretation, and, provided the underlying signal is sufficiently strong, prediction \citep{Bertsimas2016,Hastie2020}. These computational breakthroughs rely on coordinate-wise methods, pathwise optimization, and heuristics inspired by early work in \citet{Friedman2007}. Concurrently, these developments have also motivated Bayesian adaptations of $\ell_0$ regularized regression \citep{Kowal2022}, which follow a different computational paradigm but similarly leverage discrete selection to yield sparse solutions. Recent work also explores structured variants of $\ell_0$ subset selection, including grouped and hierarchical selection \citep{Hazimeh2023,Thompson2024}.

In view of the preceding discussion, we introduce a new estimator that facilitates scalable $\ell_0$ subset selection in LMMs. The new estimator is defined as a solution to the problem
\begin{equation}
\label{eq:l0lmm}
\min_{\substack{\boldsymbol{\beta}\in\mathbb{R}^p,\boldsymbol{\gamma}\in\mathbb{R}_+^p \\ \beta_k=0\Rightarrow\gamma_k=0}}\;l(\boldsymbol{\beta},\boldsymbol{\gamma};\mathbf{y},\mathbf{X})+\lambda\alpha\|\boldsymbol{\beta}\|_0+\lambda(1-\alpha)\|\boldsymbol{\gamma}\|_0,
\end{equation}
where $\mathbf{y}=(\mathbf{y}_1^\top,\dots,\mathbf{y}_m^\top)^\top$ and $\mathbf{X}=(\mathbf{X}_1^\top,\dots,\mathbf{X}_m^\top)^\top$ are the full response vector and predictor matrix, respectively, and $l$ is a negative log-likelihood. The $\ell_0$ norm $\|\cdot\|_0$ counts the number of nonzeros and thus induces sparsity in $\boldsymbol{\beta}$ and $\boldsymbol{\gamma}$, with $\lambda\geq0$ and $\alpha\in(0,1]$ serving as tuning parameters. The constraint $\beta_k=0\Rightarrow\gamma_k=0$ means that if a fixed effect's coefficient $\beta_k$ is zero, the corresponding random effect's variance $\gamma_k$ must also be zero, thereby enforcing a hierarchical selection structure. The estimator \eqref{eq:l0lmm} possesses several favorable properties not simultaneously achieved by any existing alternatives. First, it selects fixed \emph{and} random effects so one need not specify any relevant effects a priori. Indeed, with datasets containing many predictors, it is usually unknown which predictors are relevant and, for those that are, whether their effect is fixed or random.\footnote{If prior information is available, it is straightforward to enforce certain effect types for specific predictors by introducing parameter-specific penalty factors.} Second, it enforces the hierarchy that a predictor's random effect can only be active if its fixed effect is also active. From a modeling perspective, this constraint encodes the principle that random effects are deviations from a baseline fixed effect. Third, it allows the fixed and random effects to enter the model with their full effect. In contrast, estimators that use continuous regularizers require post-processing to unwind shrinkage, whereas our approach directly produces unshrunk (unbiased) coefficients.

Inspired by the remarkable computational advances for subset selection in LMs, we develop new algorithmic machinery for subset selection in LMMs on datasets containing at least an order of magnitude more predictors than has previously been feasible. The core components of our algorithmic framework are a coordinate descent method and a local search method, the former for rapidly computing a solution and the latter for refining that solution towards a global minimum. These algorithms readily extend to generalized LMMs (GLMMs) via a penalized quasi-likelihood approximation. On the statistical side, we provide a guarantee for the new estimator in the form of a high-probability bound on the Kullback--Leibler divergence from the true model. The bound does not rely on asymptotic arguments and holds for finite samples. We then evaluate the new estimator on synthetic datasets, stress-testing its scalability across various sample sizes, predictor dimensions, and correlation levels. We further demonstrate its utility on real datasets that require methods capable of handling heterogeneity, enforcing sparsity, and producing interpretable results.

The remaining sections of this paper are arranged as follows. Section~\ref{sec:computation} introduces the algorithmic framework. Section~\ref{sec:generalized} describes the sparse GLMM extension. Section~\ref{sec:statistical} derives the statistical guarantee. Section~\ref{sec:synthetic} presents the synthetic data experiments. Section~\ref{sec:riboflavin} reports the real data experiments. Finally, Section~\ref{sec:concluding} closes the paper with final remarks.

\section{Algorithmic Framework}
\label{sec:computation}

\subsection{Objective Function}

We begin with the Gaussian likelihood and defer discussion of other likelihoods to Section~\ref{sec:generalized}. The Gaussian (negative) log-likelihood, ignoring constants, takes the form
\begin{equation}
\label{eq:llsigma2}
l(\boldsymbol{\beta},\boldsymbol{\gamma},\sigma^2;\mathbf{y},\mathbf{X})=\sum_{i=1}^m\log\det\{\sigma^2\mathbf{V}_i(\boldsymbol{\gamma})\}+\frac{1}{\sigma^2}\sum_{i=1}^m(\mathbf{y}_i-\mathbf{X}_i\boldsymbol{\beta})^\top\mathbf{V}_i^{-1}(\boldsymbol{\gamma})(\mathbf{y}_i-\mathbf{X}_i\boldsymbol{\beta}),
\end{equation}
where $\sigma^2>0$ is the noise variance and $\mathbf{V}_i(\boldsymbol{\gamma})$ is a covariance matrix that captures the correlation between observations in the $i$th cluster as induced by the random effects:
\begin{equation}
\label{eq:vmat}
\mathbf{V}_i(\boldsymbol{\gamma})=\mathbf{I}+\mathbf{X}_i\operatorname{diag}(\boldsymbol{\gamma})\mathbf{X}_i^\top.
\end{equation}
This restricted covariance structure is a key ingredient in the scalability of the proposed method because changing a single $\gamma_k$ induces only a rank-one change in $\mathbf{V}_i(\boldsymbol{\gamma})$, which, as we will see, enables efficient updates of $\mathbf{V}_i^{-1}(\boldsymbol{\gamma})$ and $\log\det\{\mathbf{V}_i(\boldsymbol{\gamma})\}$. The predictor matrix $\mathbf{X}$ is assumed to be standardized to have columns with unit $\ell_2$ norm. Whereas the noise variance in non-mixed LMs can be ignored when estimating regression coefficients, it cannot be ignored in the mixed case since it is not independent of the other parameters. Nonetheless, it is still possible to simplify computation by profiling out $\sigma^2$ with its maximum-likelihood estimator, which admits a closed-form solution given $\boldsymbol{\beta}$ and $\boldsymbol{\gamma}$:
\begin{equation}
\label{eq:sigma2}
\hat{\sigma}^2(\boldsymbol{\beta},\boldsymbol{\gamma})=\frac{1}{\ndot}\sum_{i=1}^m(\mathbf{y}_i-\mathbf{X}_i\boldsymbol{\beta})^\top\mathbf{V}_i^{-1}(\boldsymbol{\gamma})(\mathbf{y}_i-\mathbf{X}_i\boldsymbol{\beta}),
\end{equation}
where $\ndot=\sum_{i=1}^mn_i$. Substituting \eqref{eq:sigma2} into \eqref{eq:llsigma2} and removing constants that do not depend on $\boldsymbol{\beta}$ or $\boldsymbol{\gamma}$ yields
\begin{equation}
\label{eq:ll}
l(\boldsymbol{\beta},\boldsymbol{\gamma};\mathbf{y},\mathbf{X})=\sum_{i=1}^m\log\det\{\mathbf{V}_i(\boldsymbol{\gamma})\}+\ndot\log\left\{\sum_{i=1}^m(\mathbf{y}_i-\mathbf{X}_i\boldsymbol{\beta})^\top\mathbf{V}_i^{-1}(\boldsymbol{\gamma})(\mathbf{y}_i-\mathbf{X}_i\boldsymbol{\beta})\right\}.
\end{equation}
We omit the dependence on $\mathbf{y}$ and $\mathbf{X}$ herein and write the negative log-likelihood as $l(\boldsymbol{\beta},\boldsymbol{\gamma})$.

Given $\boldsymbol{\beta}$ and $\boldsymbol{\gamma}$, it is often of interest to predict the (unobservable) random effects $\mathbf{u}_1,\dots,\mathbf{u}_m$. The best linear unbiased predictor for the random effects in cluster $i$ is
\begin{equation}
\label{eq:blup}
\hat{\mathbf{u}}_i=\operatorname{diag}(\boldsymbol{\gamma})\mathbf{X}_i^\top\mathbf{V}_i^{-1}(\boldsymbol{\gamma})(\mathbf{y}_i-\mathbf{X}_i\boldsymbol{\beta}).
\end{equation}
Though not directly part of the optimization process, $\hat{\mathbf{u}}_i$ are useful for generating predictions.

Returning to the optimization problem, we regularize the negative log-likelihood \eqref{eq:ll} as
\begin{equation}
\label{eq:l0lmm2}
\min_{\substack{\boldsymbol{\beta}\in\mathbb{R}^p,\boldsymbol{\gamma}\in\mathbb{R}_+^p \\ \beta_k=0\Rightarrow\gamma_k=0}}\;f(\boldsymbol{\beta},\boldsymbol{\gamma}):=l(\boldsymbol{\beta},\boldsymbol{\gamma})+\lambda\alpha\|\boldsymbol{\beta}\|_0+\lambda(1-\alpha)\|\boldsymbol{\gamma}\|_0.
\end{equation}
The role of the parameter $\alpha$ is to control the composition of sparsity. In particular, having fixed $\lambda$, a larger value of $\alpha$ means that a random effect is more likely to be nonzero when its corresponding fixed effect is nonzero. In the limiting case when $\alpha=1$, there is no penalty for a component in $\boldsymbol{\gamma}$ being nonzero once the corresponding component in $\boldsymbol{\beta}$ is nonzero.

The objective function $f$ in \eqref{eq:l0lmm2} is nonsmooth due to the $\ell_0$ norms. It is also nonconvex with two sources of nonconvexity: the negative log-likelihood $l$ and the $\ell_0$ norms. These properties rule out the direct application of off-the-shelf optimization software and necessitate the development of tailored algorithms that scale to real-world problem instances.

\subsection{Coordinate Descent}

Coordinate descent algorithms iteratively minimize multi-dimensional objective functions by optimizing one or more coordinates at a time while keeping all others fixed. This process involves repeated cycles through the coordinates until convergence is achieved. In contrast to simultaneous updating of all coordinates (i.e., gradient descent), coordinate-wise updating is well-suited to sparse, high-dimensional problems where only a small subset of coordinates is important. We refer the interested reader to \citet{Wright2015} for an overview.

The coordinate descent algorithm we develop here optimizes a block of two coordinates simultaneously: $(\beta_k,\gamma_k)$, i.e., the parameters corresponding to predictor $k$. The motivation for this design is two-fold: (i) these parameters are not easily separable due to the hierarchy constraint $\beta_k=0\Rightarrow\gamma_k=0$, and (ii) both parameters need several of the same quantities for their update, so we can reduce overheads by updating both at the same time. A single cycle of our coordinate descent algorithm is thus a single pass over all blocks $(\beta_1,\gamma_1),\dots,(\beta_p,\gamma_p)$.

Suppose we are in cycle $t$ and thus far we have optimized the parameter blocks $1,\dots,k-1$. A vanilla coordinate descent approach would proceed by updating the $k$th block as
\begin{equation}
\label{eq:cwmin1}
(\beta_k^{(t)},\gamma_k^{(t)})\gets\underset{\substack{\beta_k\in\mathbb{R},\gamma_k\in\mathbb{R}_+ \\ \beta_k=0\Rightarrow\gamma_k=0}}{\arg\min}\;f(\beta_1^{(t)},\dots,\beta_k,\dots,\beta_p^{(t-1)},\gamma_1^{(t)},\dots,\gamma_k,\dots,\gamma_p^{(t-1)}).
\end{equation}
The difficulty with this particular update is that no closed-form solution is available to the optimization problem in \eqref{eq:cwmin1}. A separate iterative algorithm is required. This difficulty is in contrast to non-mixed LMs, where the coordinate-wise update admits an analytical solution. Hence, rather than fully optimizing in each coordinate block, we partially optimize in each block by taking a single gradient descent step towards the optimum for that block.

Our partial minimization scheme can be understood in the context of the block descent lemma of \citet{Beck2013}, with which we can bound the negative log-likelihood:
\begin{equation}
\label{eq:bdlemma}
\begin{split}
l(\boldsymbol{\beta},\boldsymbol{\gamma})\leq \bar{l}_{\bar{L}_k}(\boldsymbol{\beta},\boldsymbol{\gamma};\tilde{\boldsymbol{\beta}},\tilde{\boldsymbol{\gamma}})&:=l(\tilde{\boldsymbol{\beta}},\tilde{\boldsymbol{\gamma}})+\nabla_{\beta_k}l(\tilde{\boldsymbol{\beta}},\tilde{\boldsymbol{\gamma}})(\beta_k-\tilde{\beta}_k)+\nabla_{\gamma_k}l(\tilde{\boldsymbol{\beta}},\tilde{\boldsymbol{\gamma}})(\gamma_k-\tilde{\gamma}_k) \\
&\hspace{2in}+\frac{\bar{L}_k}{2}\left\{(\beta_k-\tilde{\beta}_k)^2+(\gamma_k-\tilde{\gamma}_k)^2\right\},
\end{split}
\end{equation}
which holds for any $(\boldsymbol{\beta},\boldsymbol{\gamma})$ and $(\tilde{\boldsymbol{\beta}},\tilde{\boldsymbol{\gamma}})$ differing only in coordinate block $k$ (for the coordinate descent update below, $(\tilde{\boldsymbol{\beta}},\tilde{\boldsymbol{\gamma}})$ is taken to be the current iterate before block $k$ is updated). The notation $\nabla_{\beta_k}$ and $\nabla_{\gamma_k}$ is shorthand for gradients with respect to $\beta_k$ and $\gamma_k$, and $\bar{L}_k\geq L_k$ where $L_k$ is the Lipschitz constant of $l$ as a function of $\beta_k$ and $\gamma_k$. We may now upper bound the original objective function $f(\boldsymbol{\beta},\boldsymbol{\gamma})$ by a new objective function $\bar{f}_{\bar{L}_k}(\boldsymbol{\beta},\boldsymbol{\gamma};\tilde{\boldsymbol{\beta}},\tilde{\boldsymbol{\gamma}})$, given by
\begin{equation}
\label{eq:obj}
f(\boldsymbol{\beta},\boldsymbol{\gamma})\leq\bar{f}_{\bar{L}_k}(\boldsymbol{\beta},\boldsymbol{\gamma};\tilde{\boldsymbol{\beta}},\tilde{\boldsymbol{\gamma}}):=\bar{l}_{\bar{L}_k}(\boldsymbol{\beta},\boldsymbol{\gamma};\tilde{\boldsymbol{\beta}},\tilde{\boldsymbol{\gamma}})+\lambda\alpha\|\boldsymbol{\beta}\|_0+\lambda(1-\alpha)\|\boldsymbol{\gamma}\|_0.
\end{equation}
Taking $\bar{f}$ as our new objective function, the update to the $k$th block of coordinates becomes
\begin{equation}
\label{eq:cwmin2}
(\beta_k^{(t)},\gamma_k^{(t)})\gets\underset{\substack{\beta_k\in\mathbb{R},\gamma_k\in\mathbb{R}_+ \\ \beta_k=0\Rightarrow\gamma_k=0}}{\arg\min}\;\bar{f}_{\bar{L}_k}(\beta_1^{(t)},\dots\beta_k,\dots,\beta_p^{(t-1)},\gamma_1^{(t)},\dots,\gamma_k,\dots,\gamma_p^{(t-1)};\boldsymbol{\beta}^{(t)},\boldsymbol{\gamma}^{(t)}),
\end{equation}
where the arguments $(\boldsymbol{\beta}^{(t)},\boldsymbol{\gamma}^{(t)})$ after the semicolon denote the current iterate used to evaluate the bound before block $k$ is updated, i.e., $(\tilde{\boldsymbol{\beta}},\tilde{\boldsymbol{\gamma}})=(\boldsymbol{\beta}^{(t)},\boldsymbol{\gamma}^{(t)})$ in \eqref{eq:obj}, with coordinates $1,\ldots,k-1$ already updated in cycle $t$ and coordinates $k,\ldots,p$ still at their cycle $t-1$ values. In contrast to \eqref{eq:cwmin1}, the update \eqref{eq:cwmin2} admits a closed-form solution, described by Proposition~\ref{prop:update}.
\begin{proposition}
\label{prop:update}
Define the thresholding function
\begin{equation}
\label{eq:threshold}
T_{\bar{L}}(\beta,\gamma;\lambda,\alpha)=
\begin{dcases}
(0,0) & \text{if }\beta^2<\frac{2\lambda\alpha}{\bar{L}}\text{ and }\beta^2+\gamma_+^2<\frac{2\lambda}{\bar{L}} \\
(\beta,0) & \text{if }\gamma_+^2<\frac{2\lambda(1-\alpha)}{\bar{L}} \\
(\beta,\gamma_+) & \text{otherwise},
\end{dcases}
\end{equation}
where $\gamma_+=\max(\gamma,0)$. Furthermore, define the gradients
\begin{equation}
\label{eq:gradbeta}
\nabla_{\beta_k}l(\boldsymbol{\beta},\boldsymbol{\gamma})=-2\ndot\frac{\sum_{i=1}^m\mathbf{x}_{ik}^\top\mathbf{V}_i^{-1}(\boldsymbol{\gamma})(\mathbf{y}_i-\mathbf{X}_i\boldsymbol{\beta})}{\sum_{i=1}^m(\mathbf{y}_i-\mathbf{X}_i\boldsymbol{\beta})^\top\mathbf{V}_i^{-1}(\boldsymbol{\gamma})(\mathbf{y}_i-\mathbf{X}_i\boldsymbol{\beta})}
\end{equation}
and
\begin{equation}
\label{eq:gradgamma}
\nabla_{\gamma_k}l(\boldsymbol{\beta},\boldsymbol{\gamma})=\sum_{i=1}^m\mathbf{x}_{ik}^\top\mathbf{V}_i^{-1}(\boldsymbol{\gamma})\mathbf{x}_{ik}-\ndot\frac{\sum_{i=1}^m\left\{\mathbf{x}_{ik}^\top\mathbf{V}_i^{-1}(\boldsymbol{\gamma})(\mathbf{y}_i-\mathbf{X}_i\boldsymbol{\beta})\right\}^2}{\sum_{i=1}^m(\mathbf{y}_i-\mathbf{X}_i\boldsymbol{\beta})^\top\mathbf{V}_i^{-1}(\boldsymbol{\gamma})(\mathbf{y}_i-\mathbf{X}_i\boldsymbol{\beta})}.
\end{equation}
Then the optimization problem \eqref{eq:cwmin2} is solved by
\begin{equation*}
(\beta_k^{(t)},\gamma_k^{(t)})=T_{\bar{L}_k}\left(\beta_k^{(t-1)}-\frac{1}{\bar{L}_k}\nabla_{\beta_k}l(\boldsymbol{\beta}^{(t)},\boldsymbol{\gamma}^{(t)}),\gamma_k^{(t-1)}-\frac{1}{\bar{L}_k}\nabla_{\gamma_k}l(\boldsymbol{\beta}^{(t)},\boldsymbol{\gamma}^{(t)});\lambda,\alpha\right).
\end{equation*}
\end{proposition}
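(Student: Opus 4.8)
The plan is to recognize that, in spite of its appearance, the update \eqref{eq:cwmin2} is a two-variable optimization admitting a closed-form solution, and to solve it by an exhaustive comparison of the possible support patterns of $(\beta_k,\gamma_k)$.

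As a preliminary I would confirm the gradient formulas \eqref{eq:gradbeta}--\eqref{eq:gradgamma} by differentiating \eqref{eq:ll}: since $\log\det\mathbf{V}_i(\boldsymbol\gamma)$ does not depend on $\boldsymbol\beta$, the gradient $\nabla_{\beta_k}l$ comes entirely from the $n\log\{\cdot\}$ term via the chain rule, yielding the stated ratio; for $\nabla_{\gamma_k}l$, use $\partial\mathbf{V}_i(\boldsymbol\gamma)/\partial\gamma_k=\mathbf{x}_{ik}\mathbf{x}_{ik}^\top$, so that Jacobi's formula gives $\partial\log\det\mathbf{V}_i/\partial\gamma_k=\mathbf{x}_{ik}^\top\mathbf{V}_i^{-1}(\boldsymbol\gamma)\mathbf{x}_{ik}$ and the identity $\partial\mathbf{V}_i^{-1}/\partial\gamma_k=-\mathbf{V}_i^{-1}\mathbf{x}_{ik}\mathbf{x}_{ik}^\top\mathbf{V}_i^{-1}$ handles the residual-sum-of-squares term. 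This is routine matrix calculus.

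Now fix every coordinate outside block $k$ at its current value and abbreviate $b:=\beta_k^{(t-1)}-\bar L_k^{-1}\nabla_{\beta_k}l(\boldsymbol\beta^{(t-1)},\boldsymbol\gamma^{(t-1)})$ and $c:=\gamma_k^{(t-1)}-\bar L_k^{-1}\nabla_{\gamma_k}l(\boldsymbol\beta^{(t-1)},\boldsymbol\gamma^{(t-1)})$, the unconstrained gradient-step point. Completing the square in the quadratic part of $\bar f_{\bar L_k}$, and using that with the other coordinates held fixed $\|\boldsymbol\beta\|_0$ and $\|\boldsymbol\gamma\|_0$ are constants plus the indicators of $\beta_k\neq0$ and $\gamma_k\neq0$, the objective in \eqref{eq:cwmin2} equals, up to an additive constant, $g(\beta_k,\gamma_k)=\tfrac{\bar L_k}{2}(\beta_k-b)^2+\tfrac{\bar L_k}{2}(\gamma_k-c)^2$ plus $\lambda\alpha$ when $\beta_k\neq0$ and $\lambda(1-\alpha)$ when $\gamma_k\neq0$; this is to be minimized over $\mathcal F=\{(\beta_k,\gamma_k):\gamma_k\ge0,\ \beta_k=0\Rightarrow\gamma_k=0\}$. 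I would then partition $\mathcal F$ into the three support patterns it admits --- $\{(0,0)\}$, $\{\beta_k\neq0,\gamma_k=0\}$, and $\{\beta_k\neq0,\gamma_k>0\}$, the hierarchy constraint excluding $\{\beta_k=0,\gamma_k>0\}$ --- and minimize $g$ over each piece. This produces the three candidate values $v_{00}=\tfrac{\bar L_k}{2}(b^2+c^2)$ at $(0,0)$, $v_{\beta0}=\tfrac{\bar L_k}{2}c^2+\lambda\alpha$ at $(b,0)$, and $v_{\beta\gamma}=\tfrac{\bar L_k}{2}(c-c_+)^2+\lambda$ at $(b,c_+)$ with $c_+:=\max(c,0)$. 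The three pairwise comparisons, simplified with the identity $c^2-(c-c_+)^2=c_+^2$, read $v_{00}\le v_{\beta0}\iff b^2\le 2\lambda\alpha/\bar L_k$, $\;v_{00}\le v_{\beta\gamma}\iff b^2+c_+^2\le 2\lambda/\bar L_k$, and $\;v_{\beta0}\le v_{\beta\gamma}\iff c_+^2\le 2\lambda(1-\alpha)/\bar L_k$. Selecting the pattern with smallest value --- first asking whether $(0,0)$ beats both competitors, and, if not, whether $(b,0)$ beats $(b,c_+)$ --- exactly reproduces the three branches of $T_{\bar L_k}$ in \eqref{eq:threshold} (with the strict inequalities there encoding a definite tie-break), with argmin $T_{\bar L_k}(b,c;\lambda,\alpha)$; substituting the definitions of $b$ and $c$ gives the update in the statement.

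The main obstacle is the geometry of $\mathcal F$, which is neither convex nor open --- a point glued to a punctured line glued to an open half-plane along the hierarchy constraint --- so one must verify that the minimum is genuinely attained at one of the three listed points rather than being a spurious infimum. Two checks are needed. First, on $\{\beta_k\neq0,\gamma_k>0\}$ the value $v_{\beta\gamma}$ is attained only when $c>0$; when $c\le0$ it is merely a boundary infimum, but then $c_+=0$ forces $v_{\beta\gamma}\ge v_{\beta0}$, so that case never governs the argmin and the minimum over $\mathcal F$ is attained. Second, the non-generic situation $b=0$ (and the boundary $c=0$) must be dispatched separately: there $(0,0)$ is the only feasible candidate worth considering, and the hierarchy constraint is precisely what discards the otherwise-formally-optimal point $(0,c_+)$ that \eqref{eq:threshold} would return; since $\{b=0\}$ is a measure-zero event this is a harmless edge case. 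Apart from these considerations, the proof is elementary algebra: completing the square, the three comparisons above, and matching them against the branches of \eqref{eq:threshold}.
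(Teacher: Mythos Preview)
Your proposal is correct and follows essentially the same route as the paper: complete the square in $\bar f_{\bar L_k}$ to reduce \eqref{eq:cwmin2} to a two-variable problem, enumerate the three admissible support patterns, and determine the minimizer via the three pairwise comparisons, with the gradient formulas verified by routine matrix calculus. Your treatment is in fact slightly more careful than the paper's on the attainment of the infimum over the non-closed piece $\{\beta_k\neq0,\gamma_k>0\}$ and on the $b=0$ edge case, neither of which the paper addresses explicitly.
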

\begin{proof}
See Appendix~\ref{app:update}.
\end{proof}
Proposition~\ref{prop:update} gives the update as a gradient descent step with step size equal to the inverse of the Lipschitz constant, followed by a threshold step. The particular form of the threshold that appears here follows from the $\ell_0$ regularizers and the hierarchy constraint. In practice, the step size scalar $\bar{L}_k$ is determined via backtracking line search by starting with a large value and iteratively decreasing the estimated constant until inequality \eqref{eq:bdlemma} is satisfied.

We now present Algorithm~\ref{alg:cd}, which summarizes the coordinate descent routine.
\begin{algorithm}[ht]
\caption{Coordinate Descent}
\label{alg:cd}
\begin{algorithmic}
\REQUIRE $(\boldsymbol{\beta}^{(0)},\boldsymbol{\gamma}^{(0)})\in\mathbb{R}^p\times\mathbb{R}_+^p$ and $T\in\mathbb{N}$
\FOR{$t=1,\dots,T$}
\STATE $(\boldsymbol{\beta}^{(t)},\boldsymbol{\gamma}^{(t)})\gets(\boldsymbol{\beta}^{(t-1)},\boldsymbol{\gamma}^{(t-1)})$
\FOR{$k=1,\dots,p$}
\STATE $(\beta_k^{(t)},\gamma_k^{(t)})\gets\underset{\substack{\beta_k\in\mathbb{R},\gamma_k\in\mathbb{R}_+ \\ \beta_k=0\Rightarrow\gamma_k=0}}{\arg\min}\;\bar{f}_{\bar{L}_k}(\beta_1^{(t)},\dots\beta_k,\dots,\beta_p^{(t-1)},\gamma_1^{(t)},\dots,\gamma_k,\dots,\gamma_p^{(t-1)};\boldsymbol{\beta}^{(t)},\boldsymbol{\gamma}^{(t)})$
\ENDFOR
\IF{converged}
\STATE \textbf{break}
\ENDIF
\ENDFOR
\RETURN $(\boldsymbol{\beta}^{(t)},\boldsymbol{\gamma}^{(t)})$
\end{algorithmic}
\end{algorithm}
Our implementation of the algorithm adopts several heuristics including gradient screening, gradient sorting, and active set updates. These heuristics are described in Appendix~\ref{app:heuristics}.

Each iteration of Algorithm~\ref{alg:cd} requires computation of the gradients \eqref{eq:gradbeta} and \eqref{eq:gradgamma}. These gradients, in turn, involve the inverses of the matrices $\mathbf{V}_1(\boldsymbol{\gamma}),\dots,\mathbf{V}_m(\boldsymbol{\gamma})$, each of dimension $n_i\times n_i$. Computing these inverses naïvely takes $O(n_i^3)$ operations, which is computationally prohibitive if $n_i$ is large or $p$ is large (since the inverses need to be computed $p$ times to do one full cycle over the coordinates). Fortunately, since each coordinate descent iteration only updates a single coordinate of $\boldsymbol{\gamma}$, we can perform a rank-one update to get the new inverse. Let $\tilde{\mathbf{V}}_i^{-1}=\mathbf{V}_i^{-1}(\gamma_1^{(t)},\dots,\gamma_k^{(t-1)},\dots,\gamma_p^{(t-1)})$ be the inverse from the current iteration and let $\hat{\mathbf{V}}_i^{-1}=\mathbf{V}_i^{-1}(\gamma_1^{(t)},\dots,\gamma_k^{(t)},\gamma_{k+1}^{(t-1)},\dots,\gamma_p^{(t-1)})$ be the inverse for the next iteration. Then, since $\boldsymbol{\gamma}$ has changed in the $k$th coordinate only, the Sherman-Morrison formula gives
\begin{equation*}
\hat{\mathbf{V}}_i^{-1}=\tilde{\mathbf{V}}_i^{-1}-\frac{\tilde{\mathbf{V}}_i^{-1}\mathbf{x}_{ik}\mathbf{x}_{ik}^\top\tilde{\mathbf{V}}_i^{-1}}{(\gamma_k^{(t)}-\gamma_k^{(t-1)})^{-1}+\mathbf{x}_{ik}^\top\tilde{\mathbf{V}}_i^{-1}\mathbf{x}_{ik}}.
\end{equation*}
This update requires only $O(n_i^2)$ operations in contrast to $O(n_i^3)$ for the naïve update.

Though the log determinant of $\mathbf{V}_i(\boldsymbol{\gamma})$ is not required for the gradients and thus not needed by Algorithm~\ref{alg:cd}, it can be useful for other purposes, e.g., monitoring convergence. Similar to the rank-one update strategy for the inverse, we can perform a rank-one update for the log determinant via the matrix determinant lemma \citep{Harville1997}, which gives
\begin{equation*}
\log\,\det(\hat{\mathbf{V}}_i)=\log\,\det(\tilde{\mathbf{V}}_i)+\log\left(1+(\gamma_k^{(t)}-\gamma_k^{(t-1)})\mathbf{x}_{ik}^\top\tilde{\mathbf{V}}_i^{-1}\mathbf{x}_{ik}\right).
\end{equation*}
As with the inverse, this update takes $O(n_i^2)$ operations versus the naïve update of $O(n_i^3)$.

We now establish the convergence of our coordinate descent routine in Theorem~\ref{thrm:converge}.
\begin{theorem}
\label{thrm:converge}
Suppose $\mathbf{y}$ does not lie in the column space of $\mathbf{X}$, and that $\boldsymbol{\beta}\in\mathbb{R}^p$ and $\boldsymbol{\gamma}\in\mathbb{R}_+^p$ with $\|\boldsymbol{\beta}\|_\infty\leq\bar{\beta}\in\mathbb{R}_+$ and $\|\boldsymbol{\gamma}\|_\infty\leq\bar{\gamma}\in\mathbb{R}_+$. Let $\{(\boldsymbol{\beta}^{(t)},\boldsymbol{\gamma}^{(t)})\}_{t\in\mathbb{N}}$ be a sequence of iterates generated by Algorithm~\ref{alg:cd}. Then, for any $\bar{L}_k\geq L_k$ for $k=1,\dots,p$, the sequence of objective values $\{f(\boldsymbol{\beta}^{(t)},\boldsymbol{\gamma}^{(t)})\}_{t\in\mathbb{N}}$ is decreasing, convergent, and satisfies the inequality
\begin{equation*}
f(\boldsymbol{\beta}^{(t)},\boldsymbol{\gamma}^{(t)})-f(\boldsymbol{\beta}^{(t+1)},\boldsymbol{\gamma}^{(t+1)})\geq\sum_{k=1}^p\left(\frac{\bar{L}_k-L_k}{2}(\beta_k^{(t)}-\beta_k^{(t+1)})^2+\frac{\bar{L}_k-L_k}{2}(\gamma_k^{(t)}-\gamma_k^{(t+1)})^2\right).
\end{equation*}
Furthermore, for any $\bar{L}_k>L_k$, the sequence of active sets $\{\mathcal{A}(\boldsymbol{\beta}^{(t)},\boldsymbol{\gamma}^{(t)})\}_{t\in\mathbb{N}}$, i.e., the set of predictors with nonzero effects, converges to a fixed set in finitely many iterations.
\end{theorem}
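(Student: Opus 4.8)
The plan is to establish, in order: (i) a per-cycle sufficient-decrease inequality for $f$; (ii) boundedness of $f$ from below --- which is where the hypothesis $\mathbf{y}\notin\operatorname{col}(\mathbf{X})$ enters --- so that together with (i) the objective values are nonincreasing, bounded below, hence convergent, and the per-block increments are square-summable; and (iii) a thresholding argument showing the active set can only shrink once those increments are small. Throughout I work on the box $\mathcal{B}=\{(\boldsymbol{\beta},\boldsymbol{\gamma}):\|\boldsymbol{\beta}\|_\infty\le\bar{\beta},\ \boldsymbol{\gamma}\in\mathbb{R}_+^p,\ \|\boldsymbol{\gamma}\|_\infty\le\bar{\gamma}\}$ fixed by the hypotheses, on which each per-block Lipschitz modulus $L_k$ of $l$ is finite and within which the iterates are taken to remain.

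\emph{Stage (i), sufficient decrease.} I would fix a cycle and write $P_0,\dots,P_p$ for the iterates obtained after updating blocks $1,\dots,p$ in turn, so $P_0=(\boldsymbol{\beta}^{(t)},\boldsymbol{\gamma}^{(t)})$, $P_p=(\boldsymbol{\beta}^{(t+1)},\boldsymbol{\gamma}^{(t+1)})$, and $P_{k-1},P_k$ agree except in block $k$, which moves from $(\beta_k^{(t)},\gamma_k^{(t)})$ to $(\beta_k^{(t+1)},\gamma_k^{(t+1)})$. Since $\bar{f}_{\bar{L}_k}(\,\cdot\,;P_{k-1})$ equals $f$ at its base point $P_{k-1}$ and the update \eqref{eq:cwmin2} defining $P_k$ minimizes it over block $k$ (Proposition~\ref{prop:update}), $\bar{f}_{\bar{L}_k}(P_k;P_{k-1})\le\bar{f}_{\bar{L}_k}(P_{k-1};P_{k-1})=f(P_{k-1})$. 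Using \eqref{eq:bdlemma} with the modulus $L_k$ (it holds for any constant $\ge L_k$) and re-expressing through $\bar{l}_{\bar{L}_k}$, for which only the quadratic term differs, gives $l(P_k)\le\bar{l}_{\bar{L}_k}(P_k;P_{k-1})-\tfrac{\bar{L}_k-L_k}{2}\{(\beta_k^{(t+1)}-\beta_k^{(t)})^2+(\gamma_k^{(t+1)}-\gamma_k^{(t)})^2\}$; adding the $\ell_0$ terms, identical on both sides, replaces $l,\bar{l}$ by $f,\bar{f}$. Combining with the preceding display yields $f(P_k)\le f(P_{k-1})-\tfrac{\bar{L}_k-L_k}{2}\{(\beta_k^{(t+1)}-\beta_k^{(t)})^2+(\gamma_k^{(t+1)}-\gamma_k^{(t)})^2\}$, and telescoping over $k=1,\dots,p$ gives exactly the stated inequality; with $\bar{L}_k\ge L_k$ its right-hand side is nonnegative, so $\{f(\boldsymbol{\beta}^{(t)},\boldsymbol{\gamma}^{(t)})\}$ is nonincreasing.

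\emph{Stage (ii), lower bound and convergence.} Since $\boldsymbol{\gamma}\ge\mathbf{0}$ we have $\mathbf{V}_i(\boldsymbol{\gamma})\succeq\mathbf{I}$, hence $\sum_i\log\det\mathbf{V}_i(\boldsymbol{\gamma})\ge0$, while $\mathbf{V}_i(\boldsymbol{\gamma})\preceq\{1+\bar{\gamma}\,\lambda_{\max}(\mathbf{X}_i\mathbf{X}_i^\top)\}\mathbf{I}$ using $\|\boldsymbol{\gamma}\|_\infty\le\bar{\gamma}$, so $\sum_i(\mathbf{y}_i-\mathbf{X}_i\boldsymbol{\beta})^\top\mathbf{V}_i^{-1}(\boldsymbol{\gamma})(\mathbf{y}_i-\mathbf{X}_i\boldsymbol{\beta})\ge c^{-1}\|\mathbf{y}-\mathbf{X}\boldsymbol{\beta}\|_2^2\ge c^{-1}\delta$, where $c=\max_i\{1+\bar{\gamma}\,\lambda_{\max}(\mathbf{X}_i\mathbf{X}_i^\top)\}$ and $\delta=\min_{\mathbf{b}}\|\mathbf{y}-\mathbf{X}\mathbf{b}\|_2^2$; the strict positivity $\delta>0$ is precisely where $\mathbf{y}\notin\operatorname{col}(\mathbf{X})$ is used. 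Hence $f\ge l\ge n\log(c^{-1}\delta)>-\infty$ on $\mathcal{B}$, so by Stage (i) the objective sequence is nonincreasing and bounded below, hence convergent. Telescoping the Stage (i) inequality over $t$ also gives $\sum_t\sum_k\tfrac{\bar{L}_k-L_k}{2}\{(\beta_k^{(t+1)}-\beta_k^{(t)})^2+(\gamma_k^{(t+1)}-\gamma_k^{(t)})^2\}<\infty$, so when $\bar{L}_k>L_k$ for every $k$ the increments $\beta_k^{(t+1)}-\beta_k^{(t)}$ and $\gamma_k^{(t+1)}-\gamma_k^{(t)}$ tend to $0$ for each $k$.

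\emph{Stage (iii), active-set stabilization} --- the delicate part, which I expect to be the main obstacle. I would first show that if a single block update sends $(\beta_k,\gamma_k)$ from $(0,0)$ to some $(\beta_k',\gamma_k')\ne(0,0)$, then $\max\{(\beta_k')^2,(\gamma_k')^2\}\ge\nu_k$ for a constant $\nu_k>0$ depending only on $\lambda,\alpha,\bar{L}_k$ (assuming $\lambda>0$; the case $\lambda=0$ is degenerate). This is a case check of $T_{\bar{L}_k}$ in \eqref{eq:threshold}: if the update returns the branch $(\beta,0)$, then negating the first branch's condition together with $\gamma_+^2<2\lambda(1-\alpha)/\bar{L}_k$ forces $\beta^2\ge2\lambda\alpha/\bar{L}_k$; if it returns $(\beta,\gamma_+)$, then $\gamma_+^2\ge2\lambda(1-\alpha)/\bar{L}_k$, which is positive for $\alpha<1$, while the boundary case $\alpha=1$ is handled directly because negating the first branch then reads $\beta^2+\gamma_+^2\ge2\lambda/\bar{L}_k$, whence $\max\{\beta^2,\gamma_+^2\}\ge\lambda/\bar{L}_k$. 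The hierarchy constraint makes $k\in\mathcal{A}$ equivalent to $(\beta_k,\gamma_k)\ne(0,0)$, since $\gamma_k\ne0$ cannot occur with $\beta_k=0$. In any cycle block $k$ is updated exactly once, from $(\beta_k^{(t)},\gamma_k^{(t)})$ to $(\beta_k^{(t+1)},\gamma_k^{(t+1)})$, so $k\notin\mathcal{A}(\boldsymbol{\beta}^{(t)},\boldsymbol{\gamma}^{(t)})$ together with $k\in\mathcal{A}(\boldsymbol{\beta}^{(t+1)},\boldsymbol{\gamma}^{(t+1)})$ would force $(\beta_k^{(t+1)}-\beta_k^{(t)})^2+(\gamma_k^{(t+1)}-\gamma_k^{(t)})^2=(\beta_k^{(t+1)})^2+(\gamma_k^{(t+1)})^2\ge\nu_k$. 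By Stage (ii) the left-hand side vanishes for every $k$, so beyond some cycle $T_0$ it is smaller than $\min_k\nu_k>0$ and no new coordinate can enter: $\mathcal{A}(\boldsymbol{\beta}^{(t+1)},\boldsymbol{\gamma}^{(t+1)})\subseteq\mathcal{A}(\boldsymbol{\beta}^{(t)},\boldsymbol{\gamma}^{(t)})$ for all $t\ge T_0$, and a nonincreasing chain of subsets of $\{1,\dots,p\}$ stabilizes within at most $p$ further cycles. The hard part is exactly this last stage's bookkeeping --- squeezing a genuinely positive $\nu_k$ out of the three-way threshold while keeping the hierarchy constraint and the degenerate $\alpha=1$ case under control.
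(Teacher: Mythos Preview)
Your proof is correct and follows essentially the same approach as the paper: Stages (i) and (ii) match the paper's sufficient-decrease and lower-bound arguments almost verbatim, and Stage (iii) rests on the same key observation that nonzero outputs of $T_{\bar{L}_k}$ are bounded away from zero. The only organizational difference is that the paper handles active-set stabilization by contradiction---any change to $\mathcal{A}$ (entering \emph{or} leaving) forces a fixed positive drop in $f$, so infinitely many changes would send $f\to-\infty$---whereas you show increments vanish so eventually no coordinate can enter, after which the active sets form a nonincreasing chain; both are valid and rest on identical ingredients, your version trading the explicit ``leaving'' case for the extra summability step.
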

\begin{proof}
See Appendix~\ref{app:converge}.
\end{proof}
Theorem~\ref{thrm:converge} ensures the sequence of objective values generated by Algorithm~\ref{alg:cd} is monotonically decreasing and convergent. Moreover, it guarantees the sequence of predictors with nonzero effects (the active set) converges to a fixed set in a finite number of iterations. These results are not direct applications of existing convergence results but rely on proving boundedness and Lipschitz continuity of the objective function, which hold provided $\boldsymbol{\beta}$ and $\boldsymbol{\gamma}$ lie in a compact set. While it is straightforward to ensure that Algorithm~\ref{alg:cd} satisfies this condition by introducing the box constraints $\|\boldsymbol{\beta}\|_\infty\leq\bar{\beta}\in\mathbb{R}_+$ and $\|\boldsymbol{\gamma}\|_\infty\leq\bar{\gamma}\in\mathbb{R}_+$, where $\|\cdot\|_\infty$ is the maximum absolute value, it appears unnecessary for convergence in practice.

\subsection{Local Search}

While coordinate descent algorithms can guarantee global minimizers for convex objective functions, the nonconvex nature of our objective function means Algorithm~\ref{alg:cd} cannot provide such a guarantee. This limitation is exacerbated when predictors are highly correlated, since several competing active sets may attain similar objective values, making coordinate descent more susceptible to converging to a suboptimal solution. To address this issue, we devise a local search routine to refine the solution from coordinate descent. Inspired by similar strategies proposed in \citet{Beck2013a}, \citet{Hazimeh2020}, and \citet{Dedieu2021}, our routine systematically perturbs the active set of predictors by swapping previously excluded predictors into the model while simultaneously removing active ones. The process is iterated with a configurable neighborhood size $s\in\mathbb{N}$, which governs the number of predictors swapped, until no further iterations improve the objective value.

The local search problem described above can be formalized mathematically as
\begin{equation}
\label{eq:ls}
\min_{\substack{\boldsymbol{\xi}\in\mathbb{R}^p,\boldsymbol{\zeta}\in\mathbb{R}_+^p,\xi_k=0\Rightarrow\zeta_k=0 \\ \mathbf{z}_1\in\{0,1\}^p,\mathbf{z}_2\in\{0,1\}^p \\ \operatorname{Supp}(\mathbf{z}_1)\subseteq\mathcal{A},\,\operatorname{Supp}(\mathbf{z}_2)\subseteq\mathcal{A}^c \\ \|\mathbf{z}_1\|_0\leq s,\,\|\mathbf{z}_2\|_0\leq s}}\;f(\boldsymbol{\beta}-\mathbf{z}_1\odot\boldsymbol{\beta}+\mathbf{z}_2\odot\boldsymbol{\xi},\boldsymbol{\gamma}-\mathbf{z}_1\odot\boldsymbol{\gamma}+\mathbf{z}_2\odot\boldsymbol{\zeta}),
\end{equation}
where $\operatorname{Supp}(\mathbf{z}):=\{j:z_j\neq0\}$. In words, given an incumbent solution $(\boldsymbol{\beta},\boldsymbol{\gamma})$, local search finds a subset of size $s$ to remove from the current active set $\mathcal{A}$ and replace with a subset of size $s$ from the inactive set $\mathcal{A}^c$ such that the objective function $f(\boldsymbol{\beta},\boldsymbol{\gamma})$ is minimized. Here, $\mathbf{z}_1$ and $\mathbf{z}_2$ are binary indicator vectors specifying the predictors to be removed and added, respectively, while $\boldsymbol{\xi}$ and $\boldsymbol{\zeta}$ contain the parameter values for the newly included predictors. When $s=1$, the problem simplifies to evaluating the objective for all possible single swaps, making it computationally efficient whilst having the capability to escape poor local optima.

After solving the local search problem \eqref{eq:ls}, the updated solution serves as the initialization point for a subsequent run of coordinate descent. This iterative interplay between the two methods---described in Algorithm~\ref{alg:ls}---combines the fast, fine-grained updates from coordinate descent with the broader, exploratory improvements from local search.
\begin{algorithm}[ht]
\caption{Coordinate Descent with Local Search}
\label{alg:ls}
\begin{algorithmic}
\REQUIRE $(\hat{\boldsymbol{\beta}}^{(0)},\hat{\boldsymbol{\gamma}}^{(0)})\in\mathbb{R}^p\times\mathbb{R}_+^p$ and $T\in\mathbb{N}$
\FOR{$t=1,\dots,T$}
\STATE Take $(\boldsymbol{\beta}^{(t)},\boldsymbol{\gamma}^{(t)})$ as the output of Algorithm~\ref{alg:cd} initialized at $(\hat{\boldsymbol{\beta}}^{(t-1)},\hat{\boldsymbol{\gamma}}^{(t-1)})$
\STATE Take $(\hat{\boldsymbol{\beta}}^{(t)},\hat{\boldsymbol{\gamma}}^{(t)})$ as the solution to the local search problem \eqref{eq:ls} at $(\boldsymbol{\beta}^{(t)},\boldsymbol{\gamma}^{(t)})$
\IF{$f(\hat{\boldsymbol{\beta}}^{(t)},\hat{\boldsymbol{\gamma}}^{(t)})=f(\boldsymbol{\beta}^{(t)},\boldsymbol{\gamma}^{(t)})$}
\STATE \textbf{break}
\ENDIF
\ENDFOR
\RETURN $(\boldsymbol{\beta}^{(t)},\boldsymbol{\gamma}^{(t)})$
\end{algorithmic}
\end{algorithm}
The algorithm terminates when neither coordinate descent nor local search achieves further improvement. Our implementation of Algorithm~\ref{alg:ls} solves the local search problem with $s=1$.

\subsection{Pathwise Optimization}

Typical practice in sparse modeling is to estimate multiple models with different values of $\lambda$ so different sparsity levels can be compared (e.g., using cross-validation). It is usually computationally expedient to compute these models in a pathwise fashion by sequentially initializing the optimizer at the previous solution. In our case, the pathwise optimization approach has the benefit of (i) reducing overall computation time since the solution at $\lambda=\lambda^{(r+1)}$ is likely near the solution at $\lambda=\lambda^{(r)}$, and (ii) helping the optimizer navigate the nonconvex optimization surface, thereby improving the quality of the models.

We construct the regularization sequence $\{\lambda^{(r)}\}_{r=1}^R$ so the model path evolves in a sparse-to-dense manner, beginning with a large amount of regularization and ending with a small amount, i.e., $\lambda^{(r)}>\lambda^{(r+1)}$. Moreover, we configure the sequence such that the model from the solve at $\lambda=\lambda^{(r+1)}$ has a different set of active predictors to the model from the solve with $\lambda=\lambda^{(r)}$. Proposition~\ref{prop:lambda} provides a formula for choosing $\lambda^{(r+1)}$ to guarantee this behavior.
\begin{proposition}
\label{prop:lambda}
Let $(\boldsymbol{\beta}^{(r)},\boldsymbol{\gamma}^{(r)})$ be a coordinate descent solution from $\lambda=\lambda^{(r)}$. Denote its active set by $\mathcal{A}^{(r)}$. Then running coordinate descent initialized at $(\boldsymbol{\beta}^{(r)},\boldsymbol{\gamma}^{(r)})$ with $\lambda$ set to
\begin{equation*}
\begin{split}
&\lambda^{(r+1)}= \\
&\hspace{0.4in}c\max_{k\not\in\mathcal{A}^{(r)}}\;\max\left(\frac{\{\nabla_{\beta_k}l(\boldsymbol{\beta}^{(r)},\boldsymbol{\gamma}^{(r)})\}^2}{2\alpha\bar{L}_k},\frac{\{\nabla_{\beta_k}l(\boldsymbol{\beta}^{(r)},\boldsymbol{\gamma}^{(r)})\}^2+\max(-\nabla_{\gamma_k}l(\boldsymbol{\beta}^{(r)},\boldsymbol{\gamma}^{(r)}),0)^2}{2\bar{L}_k}\right)
\end{split}
\end{equation*}
yields a solution $(\boldsymbol{\beta}^{(r+1)},\boldsymbol{\gamma}^{(r+1)})$ whose active set $\mathcal{A}^{(r+1)}$ differs from $\mathcal{A}^{(r)}$ for $c\in[0,1)$.
\end{proposition}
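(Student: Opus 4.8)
The plan is to show that if we initialize coordinate descent at $(\boldsymbol{\beta}^{(r)},\boldsymbol{\gamma}^{(r)})$ with $\lambda=\lambda^{(r+1)}$ as stated, then the very first cycle does \emph{not} change the parameters for any $k\not\in\mathcal{A}^{(r)}$ (they stay at zero), \emph{and} moreover the stationarity structure is such that no later cycle can bring a new predictor in either. Combined with the fact that $(\boldsymbol{\beta}^{(r)},\boldsymbol{\gamma}^{(r)})$ was already a coordinate descent fixed point at $\lambda^{(r)}>\lambda^{(r+1)}$, a smaller $\lambda$ cannot \emph{remove} anything that was active (the threshold in \eqref{eq:threshold} only gets easier to pass), so $\mathcal{A}^{(r+1)}\supseteq\mathcal{A}^{(r)}$; the claim is really that for $c$ strictly less than $1$, at least the "most eligible" inactive coordinate $k^\star$ becomes active, so $\mathcal{A}^{(r+1)}\neq\mathcal{A}^{(r)}$. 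Actually, re-reading the statement, the cleanest reading is: for $c<1$ the chosen $\lambda^{(r+1)}$ is strictly below the threshold at which coordinate $k^\star$ (the argmax) would be forced to stay zero, so that coordinate enters — hence the active sets differ. I would prove precisely that.

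First I would write out the coordinate-wise update from Proposition~\ref{prop:update} applied at the point $(\boldsymbol{\beta}^{(r)},\boldsymbol{\gamma}^{(r)})$ for an index $k\not\in\mathcal{A}^{(r)}$, so $\beta_k^{(r)}=\gamma_k^{(r)}=0$. The trial point before thresholding is then $\tilde\beta_k = -\nabla_{\beta_k}l/\bar L_k$ and $\tilde\gamma_k = -\nabla_{\gamma_k}l/\bar L_k$, with $\tilde\gamma_{k,+}=\max(-\nabla_{\gamma_k}l,0)/\bar L_k$. Plugging into the thresholding function \eqref{eq:threshold}: the update keeps $(\beta_k,\gamma_k)=(0,0)$ precisely when $\tilde\beta_k^2 < 2\lambda\alpha/\bar L_k$ and $\tilde\beta_k^2+\tilde\gamma_{k,+}^2 < 2\lambda/\bar L_k$, i.e., when
\begin{equation*}
\frac{\{\nabla_{\beta_k}l\}^2}{2\alpha\bar L_k} < \lambda \quad\text{and}\quad \frac{\{\nabla_{\beta_k}l\}^2+\max(-\nabla_{\gamma_k}l,0)^2}{2\bar L_k} < \lambda .
\end{equation*}
So coordinate $k$ stays inactive on this cycle iff $\lambda$ strictly exceeds the inner $\max(\cdot,\cdot)$ quantity in the proposition's formula evaluated at $k$. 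Taking the $\max$ over all $k\not\in\mathcal{A}^{(r)}$ gives a threshold $\Lambda^{(r)}$; choosing $\lambda^{(r+1)} = c\,\Lambda^{(r)}$ with $c\in[0,1)$ makes $\lambda^{(r+1)} < \Lambda^{(r)}$, hence for the maximizing index $k^\star$ the "stay at zero" conditions fail, so the update moves $(\beta_{k^\star},\gamma_{k^\star})$ off $(0,0)$ — either to $(\tilde\beta_{k^\star},0)$ or $(\tilde\beta_{k^\star},\tilde\gamma_{k^\star,+})$, both nonzero in the $\beta$ component since $\tilde\beta_{k^\star}\neq0$ (its square exceeds a positive quantity). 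Thus $k^\star$ enters the active set on the first cycle.

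Next I would argue this change persists: once $k^\star$ is active with a nonzero $\beta$-value, Theorem~\ref{thrm:converge} guarantees the objective decreases monotonically and the active set converges in finitely many iterations; I would note that a coordinate that has just entered cannot be driven back out on a later cycle by the same $\lambda^{(r+1)}$ unless its gradient-driven trial point shrinks below the threshold, but I would instead rely on the simpler observation that the returned solution's active set $\mathcal{A}^{(r+1)}$ contains $\mathcal{A}^{(r)}$ (since at $(\boldsymbol{\beta}^{(r)},\boldsymbol{\gamma}^{(r)})$, a fixed point at $\lambda^{(r)}$, decreasing $\lambda$ weakens every "set to zero" condition, so no active coordinate is zeroed on the first cycle, and monotone descent keeps it from being a worse point) together with $k^\star\in\mathcal{A}^{(r+1)}\setminus\mathcal{A}^{(r)}$, giving strict containment and hence $\mathcal{A}^{(r+1)}\neq\mathcal{A}^{(r)}$. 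The main obstacle I anticipate is the persistence argument — ensuring that the newly activated coordinate $k^\star$ (and the previously active ones) do not get thresholded back to zero during subsequent cycles of coordinate descent before termination. Handling this carefully may require either a monotonicity argument on the per-coordinate thresholds along the descent path, or restricting the claim to "the active set after the first cycle differs," and then invoking that local search / later cycles only ever change the set further; I would state the cleanest sufficient version and flag that in practice (and under the box constraints of Theorem~\ref{thrm:converge}) the stronger statement holds. Routine bookkeeping — verifying $\tilde\beta_{k^\star}\neq0$, checking the edge case $\nabla_{\beta_{k^\star}}l=0$ (then the $\beta^2$ term is zero and activation is driven purely by the $\gamma$ term), and confirming $\Lambda^{(r)}>0$ whenever at least one inactive coordinate has a nonzero gradient — I would dispatch in a line each.
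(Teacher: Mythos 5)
Your argument is essentially the paper's own proof: you derive from the thresholding function \eqref{eq:threshold} the two conditions under which an inactive coordinate remains at zero, take the maximum over $k\not\in\mathcal{A}^{(r)}$, and observe that choosing $\lambda^{(r+1)}$ with $c\in[0,1)$ strictly below this maximum violates one of those conditions at the maximizing coordinate, so the active set must change. The persistence concern you flag is not addressed in the paper's proof either (it stops at the observation that a stay-inactive condition is violated), so your proposal matches the published argument in both route and level of rigor.
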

\begin{proof}
See Appendix~\ref{app:lambda}.
\end{proof}

\subsection{Computational Complexity}

Assuming the number of clusters $m$ is fixed, by leveraging low-rank matrix inverse updates, a single cycle of coordinate descent across all $p$ coordinates requires $O(p\sum_{i=1}^m n_i^2)$ operations. If instead $m$ varies and the cluster size is fixed (say $n_i=n$ for all $i=1,\dots,m$), the number of operations required becomes $O(\ndot p)$, where $\ndot=mn$. Running a single round of local search after coordinate descent does not alter the overall complexity since we need only perform $p$ swaps for each active predictor, with the number of active predictors fixed.

To verify the complexity analysis and obtain a sense of run times, we measure the time to fit a full regularization path over a sequence of 100 values of $\lambda$. Figure~\ref{fig:timings} reports these times as a function of both $\ndot$ and $p$.
\begin{figure}[t]
\centering
\input{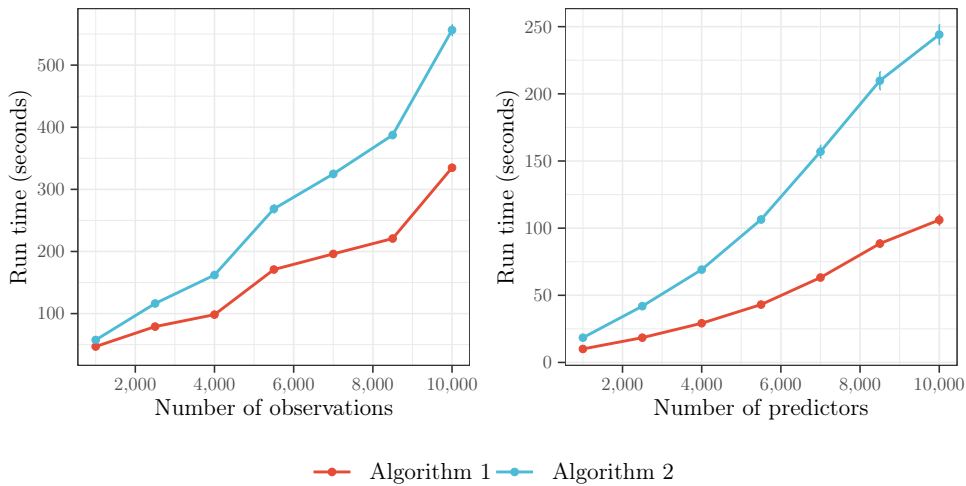}
\caption{Run time in seconds for Algorithm~\ref{alg:cd} (coordinate descent) and Algorithm~\ref{alg:ls} (coordinate descent with local search). The number of predictors $p=100$ in the left plot and the number of observations $\ndot=100$ in the right plot. The number of clusters $m=\lceil\ndot/10\rceil$. The averages (points) and standard errors (error bars) are measured over 100 synthetic datasets.}
\label{fig:timings}
\end{figure}
As predicted, the curves are mostly linear. Naturally, local search introduces additional computational costs, with the exact cost depending on $\ndot$ and $p$. Even so, one can fit a path of 100 models with 10,000 predictors in less than 5 minutes using coordinate descent with local search and 2 minutes using coordinate descent alone.

\section{Sparse GLMMs}
\label{sec:generalized}

\subsection{Non-Gaussian Likelihoods}

The preceding section focuses on $\ell_0$ regularized LMMs where the response is continuous and the likelihood is Gaussian. GLMMs are flexible generalizations of LMMs that handle a broad range of response types by extending the likelihood to include the entire exponential family \citep[see, e.g.,][]{Stroup2024}. For instance, similar to non-mixed generalized LMs (GLMs), GLMMs can accommodate binary response data via a Bernoulli likelihood and count response data via a Poisson likelihood. The conditional expectation of a GLMM satisfies
\begin{equation*}
g(\operatorname{E}[y_{ij}\mid\mathbf{u}_i])=\eta_{ij},\quad\eta_{ij}=\mathbf{x}_{ij}^\top(\boldsymbol{\beta}+\mathbf{u}_i),
\end{equation*}
where $g$ is a so-called link function (e.g., logit for Bernoulli or log for Poisson). Unfortunately, unlike their non-mixed GLM cousins, GLMMs are not nearly as computationally tractable as the likelihoods involved do not admit analytic forms except in the Gaussian case. As a consequence, it is not possible to fit an $\ell_0$ regularized GLMM by direct application of Algorithm~\ref{alg:cd} or Algorithm~\ref{alg:ls} since these rely on the availability of closed-form gradients.

\subsection{Penalized Quasi-Likelihood Approximation}

To address intractability, we employ a penalized quasi-likelihood (PQL) approximation \citep{Breslow1993} to fit $\ell_0$ regularized GLMMs in the non-Gaussian case. PQL approximations are among the most scalable techniques for fitting GLMMs since they effectively transform the likelihood into a tractable Gaussian form.\footnote{Although computationally efficient, PQL approximations can lead to biased parameter estimates; see \citet{Capanu2013} for a discussion on the relative merits of various GLMM estimation methods.} Briefly, the approximation replaces the likelihood with its quasi-likelihood and then iteratively linearizes the mean function around the current estimate $\hat{\mathbf{u}}_i$ of the random effects $\mathbf{u}_i$. At each iteration, the Gaussian likelihood is optimized with working responses $\mathbf{z}_i=(z_{i1},\dots,z_{in_i})^\top$ and weights $\mathbf{w}_i=(w_{i1},\dots,w_{in_i})^\top$, whose elements are given by
\begin{equation}
\label{eq:workresponse}
z_{ij}=\eta_{ij}+\frac{y_{ij}-\operatorname{E}[y_{ij}\mid\hat{\mathbf{u}}_i]}{\nabla g^{-1}(\eta_{ij})}
\end{equation}
and
\begin{equation}
\label{eq:workweights}
w_{ij}=\frac{\{\nabla g^{-1}(\eta_{ij})\}^2}{\operatorname{Var}[y_{ij}\mid\hat{\mathbf{u}}_i]}.
\end{equation}
Here, $\nabla g^{-1}$ is the gradient of the inverse link function, and $\operatorname{E}[y_{ij}\mid\hat{\mathbf{u}}_i]$ and $\operatorname{Var}[y_{ij}\mid\hat{\mathbf{u}}_i]$ are the conditional mean and conditional variance of $y_{ij}$ as prescribed under the distributional family assumed on the response, both evaluated at the current estimate of the random effects. Recall that these estimates are computable using the best linear unbiased predictor in \eqref{eq:blup}.

Algorithm~\ref{alg:pql} implements the above weighting scheme with the Gaussian likelihood optimization performed by coordinate descent and (optionally) local search.
\begin{algorithm}[ht]
\caption{PQL Approximation}
\label{alg:pql}
\begin{algorithmic}[1]
\REQUIRE $(\boldsymbol{\beta}^{(0)},\boldsymbol{\gamma}^{(0)})\in\mathbb{R}^p\times\mathbb{R}_+^p$ and $T\in\mathbb{N}$
\FOR{$t=1,\dots,T$}
\STATE Plug $\boldsymbol{\beta}^{(t-1)}$ and $\boldsymbol{\gamma}^{(t-1)}$ into \eqref{eq:blup} to get $\hat{\mathbf{u}}_1,\dots,\hat{\mathbf{u}}_m$
\STATE Plug $\boldsymbol{\beta}^{(t-1)}$ and $\hat{\mathbf{u}}_1,\dots,\hat{\mathbf{u}}_m$ into \eqref{eq:workresponse} and \eqref{eq:workweights} to get $\mathbf{z}_1,\dots,\mathbf{z}_m$ and $\mathbf{w}_1,\dots,\mathbf{w}_m$
\STATE Run Algorithm~\ref{alg:cd} or \ref{alg:ls} with $(\boldsymbol{\beta}^{(t-1)},\boldsymbol{\gamma}^{(t-1)})$, $\mathbf{y}_i=\mathbf{z}_i$, and $\mathbf{w}_i$, $i=1,\dots,m$, to get $(\boldsymbol{\beta}^{(t)},\boldsymbol{\gamma}^{(t)})$
\IF{converged}
\STATE \textbf{break}
\ENDIF
\ENDFOR
\RETURN $(\boldsymbol{\beta}^{(t)},\boldsymbol{\gamma}^{(t)})$
\end{algorithmic}
\end{algorithm}
To incorporate weights into either of these algorithms one need only replace the covariance matrix \eqref{eq:vmat} with
\begin{equation*}
\mathbf{V}_i(\boldsymbol{\gamma})=\operatorname{diag}(\mathbf{w}_i^{-1})+\mathbf{X}_i\operatorname{diag}(\boldsymbol{\gamma})\mathbf{X}_i^\top,
\end{equation*}
where $\mathbf{w}_i^{-1}$ is the elementwise reciprocal. The calls to coordinate descent or local search inside Algorithm~\ref{alg:pql} are initialized at the previous iterates, making the algorithm significantly faster than multiple individual runs of coordinate descent or local search. We remark that the algorithm resembles iteratively weighted least squares, which is used in the well-known and highly scalable \texttt{glmnet} package to fit $\ell_1$ regularized GLMs \citep{Tay2023}.

\subsection{Toolkit}

We implement our method for learning sparse GLMMs in the \texttt{R} package \texttt{glmmsel} (\underline{g}eneralized \underline{l}inear \underline{m}ixed \underline{m}odel \underline{sel}ection). At the time of writing, \texttt{glmmsel} supports Gaussian and Bernoulli likelihoods for handling regression and classification tasks. The package's core components are written in \texttt{C++} to facilitate fast computation, with the \texttt{R} code serving as a user-friendly interface. This modular design also makes it straightforward to develop interfaces for other languages (e.g., \texttt{Python} or \texttt{Julia}) since the algorithms need not be reimplemented. \texttt{glmmsel} is available open-source on the \texttt{R} repository \texttt{CRAN} and \texttt{GitHub} at
\begin{center}
\url{https://github.com/ryan-thompson/glmmsel}.
\end{center}

\section{Statistical Guarantee}
\label{sec:statistical}

\subsection{Setup}

We now develop a statistical guarantee on the finite-sample performance of our estimator. Towards this end, we assume the data-generating process is an LMM of the form
\begin{equation}
\label{eq:lmm2}
\mathbf{y}_i=\mathbf{X}_i\boldsymbol{\beta}^0+\boldsymbol{\nu}_i,\quad\boldsymbol{\nu}_i\sim\mathrm{N}\left(\mathbf{0},\sigma^2\mathbf{V}_i(\boldsymbol{\gamma}^0)\right),\quad i=1,\dots,m,
\end{equation}
where the noise variance $\sigma^2>0$ and the covariance matrix $\mathbf{V}_i(\boldsymbol{\gamma})=\mathbf{I}+\mathbf{X}_i\operatorname{diag}(\boldsymbol{\gamma})\mathbf{X}_i^\top$. We remark that model \eqref{eq:lmm2} is a standard and equivalent way of writing model \eqref{eq:lmm}. The predictor matrix $\mathbf{X}$ is assumed to be deterministic (i.e., non-random) and have columns with unit $\ell_2$ norm. The fixed effect coefficients $\boldsymbol{\beta}^0\in\mathbb{R}^p$ and random effect variances $\boldsymbol{\gamma}^0\in\mathbb{R}_+^p$ are assumed to be $s$-sparse with nested support, i.e., $\|\boldsymbol{\beta}^0\|_0\leq s$ and $\operatorname{Supp}(\boldsymbol{\gamma}^0)\subseteq\operatorname{Supp}(\boldsymbol{\beta}^0)$.

Given a sample of data $(\mathbf{y},\mathbf{X})$, we fit model \eqref{eq:lmm2} using the $\ell_0$ regularized estimator
\begin{equation}
\label{eq:boundestimator}
(\hat{\boldsymbol{\beta}},\hat{\boldsymbol{\gamma}})\in\underset{(\boldsymbol{\beta},\boldsymbol{\gamma})\in\mathcal{C}(s)}{\arg\min}\;l(\boldsymbol{\beta},\boldsymbol{\gamma}),
\end{equation}
where the negative log-likelihood 
\begin{equation*}
l(\boldsymbol{\beta},\boldsymbol{\gamma})=\sum_{i=1}^m\log\det\{\sigma^2\mathbf{V}_i(\boldsymbol{\gamma})\}+\frac{1}{\sigma^2}\sum_{i=1}^m(\mathbf{y}_i-\mathbf{X}_i\boldsymbol{\beta})^\top\mathbf{V}_i^{-1}(\boldsymbol{\gamma})(\mathbf{y}_i-\mathbf{X}_i\boldsymbol{\beta})
\end{equation*}
and the constraint set
\begin{equation*}
\mathcal{C}(s)=\left\{(\boldsymbol{\beta},\boldsymbol{\gamma})\in\mathbb{R}^p\times\mathbb{R}_+^p:\|\boldsymbol{\beta}\|_0\leq s,\|\boldsymbol{\beta}\|_\infty\leq\bar{\beta},\|\boldsymbol{\gamma}\|_\infty\leq\bar{\gamma},\operatorname{Supp}(\boldsymbol{\gamma})\subseteq\operatorname{Supp}(\boldsymbol{\beta})\right\}.
\end{equation*}
The estimator \eqref{eq:boundestimator} encodes the $\ell_0$ regularizer as a hard constraint and is hence recognizable as the constrained version of the penalized estimator \eqref{eq:l0lmm2} with $\alpha=1$ (chosen here to reduce the complexity of the analysis). The correspondence between these two versions of the estimator can be understood by the fact that for any value of the regularization parameter $\lambda$ there exists a sparsity level $s$ such that the two estimators produce identical estimates. The box constraints $\|\boldsymbol{\beta}\|_\infty\leq\bar{\beta}$ and $\|\boldsymbol{\gamma}\|_\infty\leq\bar{\gamma}$ are included in the constraint set to facilitate the technical analysis. Here, the upper bound parameters $\bar{\beta}$ and $\bar{\gamma}$ are taken to satisfy $\bar{\beta}\geq\|\boldsymbol{\beta}^0\|_\infty$ and $\bar{\gamma}\geq\|\boldsymbol{\gamma}^0\|_\infty$. Such bounds are commonly assumed when deriving statistical guarantees on sparse estimators \citep[see, e.g.,][]{Dedieu2021,Nguyen2024}.

\subsection{Result}

The object of our analysis is the Kullback--Leibler (KL) divergence between the true distribution $\mathrm{N}(\mathbf{X}\boldsymbol{\beta}^0,\sigma^2\mathbf{V}(\boldsymbol{\gamma}^0))$ and the fitted distribution $\mathrm{N}(\mathbf{X}\hat{\boldsymbol{\beta}},\sigma^2\mathbf{V}(\hat{\boldsymbol{\gamma}}))$ produced by the estimator \eqref{eq:boundestimator}, where $\mathbf{V}(\boldsymbol{\gamma})=\operatorname{bdiag}\left\{\mathbf{V}_1(\boldsymbol{\gamma}),\dots,\mathbf{V}_m(\boldsymbol{\gamma})\right\}$ is the covariance matrix over the full sample. Theorem~\ref{thrm:bound} gives a finite-sample (non-asymptotic) probabilistic upper bound on this divergence, where the notation $\lesssim$ is used to indicate that the bound holds up to a positive universal multiplicative constant independent of the parameters and data.
\begin{theorem}
\label{thrm:bound}
Fix $\delta\in(0,1]$. Let $L(\delta)>0$ and $L>0$ be the upper bounds on the Lipschitz constants of $l(\boldsymbol{\beta},\boldsymbol{\gamma})$ and $\operatorname{E}[l(\boldsymbol{\beta},\boldsymbol{\gamma})]$ guaranteed by Lemmas~\ref{lemma:lipschitz2} and \ref{lemma:lipschitz3} in Appendix~\ref{app:bound}, respectively. Then, with probability at least $1-\delta$, the estimator \eqref{eq:boundestimator} satisfies
\begin{equation}
\label{eq:bound}
\begin{split}
&\operatorname{KL}\left[\mathrm{N}\left(\mathbf{X}\boldsymbol{\beta}^0,\sigma^2\mathbf{V}(\boldsymbol{\gamma}^0)\right)\parallel\mathrm{N}\left(\mathbf{X}\hat{\boldsymbol{\beta}},\sigma^2\mathbf{V}(\hat{\boldsymbol{\gamma}})\right)\right] \\
&\hspace{2in}\lesssim\max\left\{(1+\bar{\gamma}s)\max\left(\sqrt{\ndot\Pi},\Pi\right),\frac{s\bar{\beta}}{\sigma}\sqrt{(1+\bar{\gamma}s)\Pi}\right\},
\end{split}
\end{equation}
where
\begin{equation*}
\Pi=s\log\left(\frac{p}{s}\right)+\max\left(L(\delta),L\right)s\sqrt{s}(\bar{\beta}+\bar{\gamma})+\log\left(\frac{1}{\delta}\right).
\end{equation*}
\end{theorem}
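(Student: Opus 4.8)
The plan is to derive an oracle inequality from the optimality of the estimator and then reduce the whole problem to a uniform concentration bound for the negative log-likelihood over the constraint set $\mathcal{C}(s)$. First I would record the Gaussian identity that, under the true model \eqref{eq:lmm2}, $\operatorname{E}[l(\boldsymbol{\beta},\boldsymbol{\gamma})]-\operatorname{E}[l(\boldsymbol{\beta}^0,\boldsymbol{\gamma}^0)]$ equals twice the divergence $\operatorname{KL}[\mathrm{N}(\mathbf{X}\boldsymbol{\beta}^0,\sigma^2\mathbf{V}(\boldsymbol{\gamma}^0))\parallel\mathrm{N}(\mathbf{X}\boldsymbol{\beta},\sigma^2\mathbf{V}(\boldsymbol{\gamma}))]$; this is a direct computation in which the log-determinant term is deterministic and the quadratic term contributes a trace term and a mean-shift term. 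Since $\|\boldsymbol{\beta}^0\|_0\leq s$, $\operatorname{Supp}(\boldsymbol{\gamma}^0)\subseteq\operatorname{Supp}(\boldsymbol{\beta}^0)$, and $\bar{\beta}\geq\|\boldsymbol{\beta}^0\|_\infty$, $\bar{\gamma}\geq\|\boldsymbol{\gamma}^0\|_\infty$, the truth $(\boldsymbol{\beta}^0,\boldsymbol{\gamma}^0)$ is feasible, so $l(\hat{\boldsymbol{\beta}},\hat{\boldsymbol{\gamma}})\leq l(\boldsymbol{\beta}^0,\boldsymbol{\gamma}^0)$. Adding and subtracting expectations then gives
\begin{equation*}
2\operatorname{KL}\left[\mathrm{N}\left(\mathbf{X}\boldsymbol{\beta}^0,\sigma^2\mathbf{V}(\boldsymbol{\gamma}^0)\right)\parallel\mathrm{N}\left(\mathbf{X}\hat{\boldsymbol{\beta}},\sigma^2\mathbf{V}(\hat{\boldsymbol{\gamma}})\right)\right]\leq2\sup_{(\boldsymbol{\beta},\boldsymbol{\gamma})\in\mathcal{C}(s)}\left|l(\boldsymbol{\beta},\boldsymbol{\gamma})-\operatorname{E}[l(\boldsymbol{\beta},\boldsymbol{\gamma})]\right|,
\end{equation*}
and because the right-hand side does not involve the divergence itself, no localization or self-bounding step is needed and it only remains to bound the supremum.

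For the supremum, I would note that the log-determinant part of $l$ is nonrandom and cancels, so $l(\boldsymbol{\beta},\boldsymbol{\gamma})-\operatorname{E}[l(\boldsymbol{\beta},\boldsymbol{\gamma})]$ is $\sigma^{-2}$ times the sum of a centered quadratic form $\boldsymbol{\nu}^\top\mathbf{V}^{-1}(\boldsymbol{\gamma})\boldsymbol{\nu}-\operatorname{E}[\boldsymbol{\nu}^\top\mathbf{V}^{-1}(\boldsymbol{\gamma})\boldsymbol{\nu}]$ in the noise $\boldsymbol{\nu}=\mathbf{y}-\mathbf{X}\boldsymbol{\beta}^0$ and a Gaussian-linear cross term $2(\boldsymbol{\beta}^0-\boldsymbol{\beta})^\top\mathbf{X}^\top\mathbf{V}^{-1}(\boldsymbol{\gamma})\boldsymbol{\nu}$. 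Whitening $\boldsymbol{\nu}=\sigma\mathbf{V}^{1/2}(\boldsymbol{\gamma}^0)\mathbf{g}$ with $\mathbf{g}\sim\mathrm{N}(\mathbf{0},\mathbf{I})$ turns the first term into a Gaussian chaos with matrix $\mathbf{V}^{1/2}(\boldsymbol{\gamma}^0)\mathbf{V}^{-1}(\boldsymbol{\gamma})\mathbf{V}^{1/2}(\boldsymbol{\gamma}^0)$; using $\mathbf{V}_i(\boldsymbol{\gamma})=\mathbf{I}+\mathbf{X}_i\operatorname{diag}(\boldsymbol{\gamma})\mathbf{X}_i^\top$ together with $\|\boldsymbol{\gamma}\|_0\leq s$ and unit-norm columns gives $\mathbf{I}\preceq\mathbf{V}_i(\boldsymbol{\gamma})\preceq(1+\bar{\gamma}s)\mathbf{I}$, so this matrix has operator norm at most $1+\bar{\gamma}s$ and Frobenius norm at most $\sqrt{n}(1+\bar{\gamma}s)$, and the Hanson--Wright inequality yields a fixed-$(\boldsymbol{\beta},\boldsymbol{\gamma})$ deviation of order $(1+\bar{\gamma}s)\max(\sqrt{nt},t)$. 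For the cross term I would bound $\|\mathbf{X}(\boldsymbol{\beta}^0-\boldsymbol{\beta})\|_2\leq\|\boldsymbol{\beta}^0-\boldsymbol{\beta}\|_1\lesssim s\bar{\beta}$ (the difference is $2s$-sparse with entries at most $2\bar{\beta}$ and columns have unit norm), so the cross term is Gaussian with variance proxy $\lesssim\sigma^{-2}(1+\bar{\gamma}s)s^2\bar{\beta}^2$, giving a fixed-$(\boldsymbol{\beta},\boldsymbol{\gamma})$ deviation of order $\sigma^{-1}s\bar{\beta}\sqrt{(1+\bar{\gamma}s)t}$.

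To make this uniform, I would union bound over the $\binom{p}{s}\leq(ep/s)^s$ possible supports and, within each support, over a fine net of the box $[-\bar{\beta},\bar{\beta}]^s\times[0,\bar{\gamma}]^s$, and then invoke Lemmas~\ref{lemma:lipschitz2} and \ref{lemma:lipschitz3} to transfer the net bound to all of $\mathcal{C}(s)$: the covering number contributes $s\log(p/s)$, the Lipschitz transfer against the net resolution contributes $\max(L(\delta),L)s\sqrt{s}(\bar{\beta}+\bar{\gamma})$, and the target confidence contributes $\log(1/\delta)$, so the effective tail parameter calibrates to $t\asymp\Pi$ (the constant $L(\delta)$ carries a $\delta$-dependence because $l$, unlike $\operatorname{E}[l]$, depends on the random noise, so its Lipschitz constant over the box must itself be controlled on a high-probability event). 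Substituting $t=\Pi$ into the two fixed-point deviation bounds, combining them with $a+b\asymp\max(a,b)$, and feeding the result back through the basic inequality of the first step yields \eqref{eq:bound}.

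The hard part will be the uniform control in the third step: the matrix defining the Gaussian chaos depends on the optimization variable $\boldsymbol{\gamma}$, so one is bounding a supremum of quadratic forms rather than a single quadratic form, and the nets over $\boldsymbol{\beta}$ and over $\boldsymbol{\gamma}$, the two Lipschitz constants, and the Hanson--Wright tail must all be reconciled while keeping the operator- and Frobenius-norm estimates for $\mathbf{V}_i(\boldsymbol{\gamma})$ and $\mathbf{V}_i^{-1}(\boldsymbol{\gamma})$ sharp enough that the final budget collapses exactly into $\Pi$. A secondary, more routine difficulty is that without a restricted-eigenvalue-type condition the crude bound $\|\mathbf{X}(\boldsymbol{\beta}^0-\hat{\boldsymbol{\beta}})\|_2\lesssim s\bar{\beta}$ is the best one can use, and this is what forces the $s\bar{\beta}$ rather than $\sqrt{s}\bar{\beta}$ scaling in the second branch of the maximum in \eqref{eq:bound}.
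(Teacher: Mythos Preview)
Your proposal is correct and follows essentially the same route as the paper: the basic inequality from optimality, the decomposition of $l-\operatorname{E}[l]$ into a centered Gaussian quadratic form (handled by Hanson--Wright with the eigenvalue bounds $\mathbf{I}\preceq\mathbf{V}_i(\boldsymbol{\gamma})\preceq(1+\bar{\gamma}s)\mathbf{I}$) plus a Gaussian-linear cross term (handled by the crude $\|\mathbf{X}(\boldsymbol{\beta}^0-\boldsymbol{\beta})\|_2\lesssim s\bar{\beta}$), and uniformization via a union bound over the $\binom{p}{s}$ supports and an $\epsilon$-net with Lipschitz transfer from Lemmas~\ref{lemma:lipschitz2} and~\ref{lemma:lipschitz3}. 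The paper makes the interplay you flag as ``the hard part'' explicit by choosing the net radius $\epsilon=t/\{4\max(L(\delta),L)\}$ so the Lipschitz residual is exactly $t/2$; the log covering number then becomes $8\max(L(\delta),L)s\sqrt{s}(\bar{\beta}+\bar{\gamma})/t$, which after solving for $t$ is what produces the middle term of $\Pi$---so your informal ``Lipschitz transfer against the net resolution'' is exactly right.
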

\begin{proof}
See Appendix~\ref{app:bound}.
\end{proof}
The quantities appearing on the right-hand side not previously described can be understood as follows. First, the term $s\log(p/s)$ represents the cost of searching for the best subset of predictors, indicating that the bound grows logarithmically with the number of candidate predictors. Meanwhile, the term $\max(L(\delta),L)$ is from the maximal Lipschitz constants $L(\delta)$ and $L$ of the negative log-likelihood and its expectation, respectively (exact forms of these terms are detailed in Appendix~\ref{app:bound}). Finally, the term $\log(1/\delta)$ relates the tightness of the bound to the probability it is satisfied, which is guaranteed to be at least $1-\delta$.

To our knowledge, no comparable guarantees are available for existing sparse LMM estimators, whether $\ell_0$ regularized or otherwise. \citet{Schelldorfer2011} provided a finite-sample bound on the KL divergence of an $\ell_1$ regularized LMM estimator, but their estimator does not select random effects. \citet{Raskutti2011} derived a now classic result for the $\ell_0$ regularized least squares estimator in the non-mixed LM setting. Rather than control the KL divergence, their bounds control the estimation and prediction errors of the estimator but nonetheless feature the same logarithmic scaling factor $s\log(p/s)$ that is in our bound.

Against this backdrop, Theorem~\ref{thrm:bound} provides a statistical justification for the hierarchical estimator proposed in Section~\ref{sec:computation}, at least in its constrained $\alpha=1$ form, which retains the same nested support condition. The bound shows that, under hierarchical sparsity, the dependence on the total number of candidate predictors is only logarithmic through the term $s\log(p/s)$. Our estimator should therefore be most effective in high-dimensional settings with a genuinely sparse hierarchical signal. The experiments in the next section examine performance in finite samples and assess robustness beyond the exact assumptions of the theorem.

\section{Synthetic Datasets}
\label{sec:synthetic}

\subsection{Simulation Design}

We generate synthetic datasets with the response distributed as Gaussian or Bernoulli. For the Gaussian case, the response is simulated according to the data-generating process
\begin{equation*}
y_{ij}\sim \mathrm{N}(\mathbf{x}_{ij}^\top(\boldsymbol{\beta}+\mathbf{u}_i),\sigma^2),\quad j=1,\dots,n_i,\quad i=1,\dots,m,
\end{equation*}
where the noise variance $\sigma^2>0$ is set such that the signal-to-noise ratio is one. Similarly, for the Bernoulli case, the response is simulated according to the data-generating process
\begin{equation*}
y_{ij}\sim\mathrm{Bern}\left(\frac{1}{1+\exp\{-\mathbf{x}_{ij}^\top(\boldsymbol{\beta}+\mathbf{u}_i)\}}\right),\quad j=1,\dots,n_i,\quad i=1,\dots,m.
\end{equation*}
The total sample size $\ndot=\sum_{i=1}^mn_i$ is swept over a grid of seven values equidistant between $\ndot=30$ and $\ndot=1,000$ on the logarithmic scale. Each sample is randomly assigned to one of $m=\lceil\ndot/10\rceil$ clusters so the cluster sample sizes $n_1,\dots,n_m$ vary with an average size of 10.

For both types of response, the predictors $\mathbf{x}_{ij}\in\mathbb{R}^p$ are generated as $\mathrm{N}(\mathbf{0},\boldsymbol{\Sigma})$, where $\boldsymbol{\Sigma}$ has elements $\Sigma_{ij}=\rho^{|i-j|}$ for $i,j=1,\dots,p$. Since our interest lies in the sparse regime, we only specify a subset of these predictors as relevant. Specifically, for the fixed effects, we randomly sample a set $\mathcal{A}_1\subset\{1,\dots,p\}$ of relevant indices. We then take the fixed effects coefficients corresponding to $\mathcal{A}_1$ to be one and all other coefficients to be zero, i.e., $\boldsymbol{\beta}_{\mathcal{A}_1}=\mathbf{1}$ and $\boldsymbol{\beta}_{\mathcal{A}_1^c}=\mathbf{0}$. Following the principle that fixed effects precede random effects, we sample a second subset $\mathcal{A}_2\subset\mathcal{A}_1$ indexing the predictors with random effects. The random effects coefficient vector $\mathbf{u}_i\sim\mathrm{N}(\mathbf{0},\operatorname{diag}(\boldsymbol{\gamma}))$ then has its variances configured as $\boldsymbol{\gamma}_{\mathcal{A}_2}=\mathbf{1}$ and $\boldsymbol{\gamma}_{\mathcal{A}_2^c}=\mathbf{0}$.

\subsection{Baselines and Metrics}

Our implementation \texttt{glmmsel} is benchmarked against several state-of-the-art baselines: \texttt{pysr3} \citep{Sholokhov2024}, \texttt{L0Learn} \citep{Hazimeh2020}, \texttt{glmnet} \citep{Friedman2010}, and \texttt{ncvreg} \citep{Breheny2011}. \texttt{L0Learn}, \texttt{glmnet}, and \texttt{ncvreg} are the best available toolkits for learning non-mixed sparse LMs. Meanwhile, \texttt{pysr3} is the only existing toolkit for learning sparse LMMs that is capable of scaling beyond tens of predictors. For completeness, Appendix~\ref{app:small} includes \texttt{glmmPen} \citep{Heiling2023} and \texttt{rpql} \citep{Hui2017b} as additional baselines in small-scale experiments. All sparsity parameters are tuned using a separate validation set generated independently and identically to the training set. Appendix~\ref{app:implementation} details the regularizer of each method and the candidate tuning parameter values considered.

We evaluate the competing methods according to several metrics that characterize different qualities of the estimated model. As a measure of prediction performance, we report the squared error of the fitted linear predictor relative to the null model:
\begin{equation*}
\text{Prediction error}:=\frac{\sum_{i=1}^m\|\mathbf{X}_i(\boldsymbol{\beta}+\mathbf{u}_i)-\mathbf{X}_i(\hat{\boldsymbol{\beta}}+\hat{\mathbf{u}}_i)\|_2^2}{\sum_{i=1}^m\|\mathbf{X}_i(\boldsymbol{\beta}+\mathbf{u}_i)\|_2^2},
\end{equation*}
where $\hat{\boldsymbol{\beta}}$ and $\hat{\mathbf{u}}_1,\dots,\hat{\mathbf{u}}_m$ are estimates of the fixed and random effect coefficients. Here, the null model is the model with all fixed and random effects equal to zero. Thus, a value of zero indicates perfect prediction accuracy, and a value of one indicates no improvement over the null model. This same linear predictor-based definition is used for both Gaussian and Bernoulli responses. As a measure of interpretability, we report the sparsity level (i.e., the number of predictors selected to have nonzero fixed effects or nonzero random effects):
\begin{equation*}
\text{Sparsity}:=\|\hat{\boldsymbol{\beta}}+\hat{\mathbf{u}}_1\|_0.
\end{equation*}
Lower values are desirable from an interpretation standpoint, with a value of five reflecting the correct sparsity level. As a measure of selection performance, we report the F1 score:
\begin{equation*}
\text{F1 score}:=\frac{2\mathrm{TP}}{2\mathrm{TP}+\mathrm{FP}+\mathrm{FN}},
\end{equation*}
where $\mathrm{TP}$ is the number of true positives, $\mathrm{FP}$ is the number of false positives, and $\mathrm{FN}$ is the number of false negatives. We consider two different types of F1 score. The first is an F1 score of the effect types, under which a predictor is counted as correct only if it is selected with the correct role, fixed or random. The second is an F1 score of the nonzeros, which ignores this distinction and counts a predictor as correct whenever it is selected at all. Hence, the effect type F1 score is the more stringent measure and is better suited to assessing whether a method correctly distinguishes fixed effects from random effects.

\subsection{Results}

Figures~\ref{fig:gaussian-1000-5} and \ref{fig:bernoulli-1000-5} report results for $p=1,000$ predictors with Gaussian and Bernoulli responses, respectively, in a sparse setting with 5 nonzero fixed effects and 3 nonzero random effects.
\begin{figure}[t]
\centering
\input{Figures/gaussian-1000-5.tex}
\caption{Comparisons on synthetic data with Gaussian response. The number of predictors $p=1,000$ with 5 nonzero fixed effects and 3 nonzero random effects. The correlation level $\rho=0.5$. The averages (points) and standard errors (error bars) are measured over 100 datasets. In the upper right panel, the dashed horizontal line indicates the true number of predictors with nonzero effects.}
\label{fig:gaussian-1000-5}
\end{figure}
\begin{figure}[t]
\centering
\input{Figures/bernoulli-1000-5.tex}
\caption{Comparisons on synthetic data with Bernoulli response. The number of predictors $p=1,000$ with 5 nonzero fixed effects and 3 nonzero random effects. The correlation level $\rho=0.5$. The averages (points) and standard errors (error bars) are measured over 100 datasets. In the upper right panel, the dashed horizontal line indicates the true number of predictors with nonzero effects.}
\label{fig:bernoulli-1000-5}
\end{figure}
In both cases, \texttt{glmmsel} demonstrates excellent performance in prediction, interpretation, and selection. At smaller sample sizes, where accurately estimating random effects is inherently challenging, \texttt{glmmsel}'s prediction error is comparable to that of the sparse LM methods \texttt{L0Learn}, \texttt{glmnet}, and \texttt{ncvreg}. However, as the sample size increases, \texttt{glmmsel} delivers substantially better predictions. Moreover, it accurately recovers the true predictors and, unlike the sparse LM methods, correctly distinguishes between fixed and random effects. In the Gaussian case, when $\ndot\approx1,000$, \texttt{pysr3} emerges as the next best performer, though a substantial gap remains in both its prediction and selection performance compared to \texttt{glmmsel}. Notably, \texttt{pysr3} is incompatible with non-Gaussian responses, making \texttt{glmmsel} the first tool capable of learning sparse GLMMs on data of this dimensionality.

Next, we consider a particularly challenging, ultra-high-dimensional setting with $p=10,000$ predictors, an order of magnitude more than in the previous experiments. Figure~\ref{fig:gaussian-10000-5} reports the results.
\begin{figure}[t]
\centering
\input{Figures/gaussian-10000-5.tex}
\caption{Comparisons on synthetic data with Gaussian response. The number of predictors $p=10,000$ with 5 nonzero fixed effects and 3 nonzero random effects. The correlation level $\rho=0.5$. The averages (points) and standard errors (error bars) are measured over 100 datasets. In the upper right panel, the dashed horizontal line indicates the true number of predictors with nonzero effects.}
\label{fig:gaussian-10000-5}
\end{figure}
Here, \texttt{glmmsel} is the only sparse LMM tool that can practically scale to this many predictors. \texttt{pysr3}, which was able to handle $p=1,000$, is unable to scale to this experiment. Otherwise, the story mirrors that of the $p=1,000$ case: \texttt{glmmsel} delivers strong performance and eventually dominates all competing methods as the sample size increases.

As a second challenging setting, we consider predictors with high correlation levels of $\rho=0.9$. To illustrate the advantage of local search in such settings, we add \texttt{glmmsel}'s Algorithm~\ref{alg:ls} to the experiments. We also run \texttt{L0Learn} with its local search algorithm. Figure~\ref{fig:gaussian-high-corr-1000-5} reports the results.
\begin{figure}[t]
\centering
\input{Figures/gaussian-high-corr-1000-5.tex}
\caption{Comparisons on synthetic data with Gaussian response. The number of predictors $p=1,000$ with 5 nonzero fixed effects and 3 nonzero random effects. The correlation level $\rho=0.9$. The averages (points) and standard errors (error bars) are measured over 100 datasets. In the upper right panel, the dashed horizontal line indicates the true number of predictors with nonzero effects.}
\label{fig:gaussian-high-corr-1000-5}
\end{figure}
Under Algorithm~\ref{alg:ls}, \texttt{glmmsel} demonstrates a clear edge: it maintains superior predictive accuracy relative to all baselines and excels at identifying the true effects, despite the complexity of navigating the highly correlated predictor space.

Additional results are provided in Appendices~\ref{app:denser} and \ref{app:sensitivity}. Appendix~\ref{app:denser} shows that the main conclusions remain stable in denser settings with more active predictors, while Appendix~\ref{app:sensitivity} shows that additional tuning of the concavity/shrinkage parameter $\gamma$ for \texttt{glmnet} and \texttt{ncvreg} can improve prediction error but yields much denser models and weaker selection performance.

\section{Real Datasets}
\label{sec:riboflavin}

The \texttt{riboflavin} dataset from \citet{Schelldorfer2011} contains data on riboflavin production by Bacillus subtilis, a bacterium used in industrial fermentation. Besides a continuous response variable measuring the log-transformed production rate in bacterium samples, the dataset comprises $p=4,088$ predictor variables in the form of bacterium gene expression levels. The task is to discover the genes relevant to riboflavin production, with the ultimate goal of genetically modifying the bacterium to enhance output in industrial settings. To support this analysis, the dataset includes observations on $m=28$ bacterium samples, with each sample measured $n_i\in\{2,\dots,6\}$ times, resulting in a total of $\ndot=111$ observations.

We randomly split the dataset into training, validation, and testing sets in 0.70-0.15-0.15 proportions. Table~\ref{tab:riboflavin} reports the results from 100 such splits.
\begin{table}[ht]
\centering
\small

\begin{tabular}{lrrrr}
\toprule
 &  & \multicolumn{3}{c}{Sparsity} \\ 
\cmidrule(lr){3-5}
 & Prediction error & Total & Fixed & Random \\ 
\midrule
\texttt{glmmsel} & 0.304 (0.016) & 11.7 (1.1) & 10.8 (1.1) & 0.9 (0.1) \\ 
\texttt{L0Learn} & 0.359 (0.015) & 7.6 (1.2) & 7.6 (1.2) & 0.0 (0.0) \\ 
\texttt{glmnet} & 0.309 (0.015) & 23.6 (2.0) & 23.6 (2.0) & 0.0 (0.0) \\ 
\texttt{ncvreg} & 0.401 (0.018) & 5.3 (0.5) & 5.3 (0.5) & 0.0 (0.0) \\ 
\bottomrule
\end{tabular}

\caption{Comparisons on the \texttt{riboflavin} dataset. The averages and standard errors (parentheses) are measured over 100 splits of the data. Prediction error is the squared error loss of the fitted linear predictor relative to the fixed intercept-only model.}
\label{tab:riboflavin}
\end{table}
The LMMs fit by \texttt{glmmsel} achieve the lowest prediction loss among all methods. \texttt{L0Learn} yields sparser LMs, but because they cannot capture heterogeneity across samples, their prediction loss is notably higher. The LMs fit by \texttt{glmnet} exhibit statistically comparable prediction loss to the LMMs fit by \texttt{glmmsel}, but contain more than twice as many genes on average. Meanwhile, \texttt{ncvreg} achieves the highest sparsity, selecting fewer genes on average than all other methods considered, at the cost of substantially worse prediction performance. Taken together, these results show that \texttt{glmmsel} achieves the most favorable prediction--sparsity trade-off, attaining the lowest prediction loss using 10.8 fixed effects and only 0.9 random effects on average. The fact that so few random effects are selected indicates that heterogeneity in the \texttt{riboflavin} data is present but sparse, with most of the signal explained by genes with shared effects across samples and only a very small number requiring sample-specific deviations. Ultimately, \texttt{glmmsel} is the only method capable of capturing heterogeneity while scaling effectively to datasets of this dimension.

As a second real dataset, Appendix~\ref{app:mashable} reports the performance of \texttt{glmmsel} on the classification of popularity of news articles posted to the online news platform Mashable.

\section{Concluding Remarks}
\label{sec:concluding}

LMMs are key tools for modeling heterogeneous patterns encountered in contemporary data-analytic problems, such as those that arise in personalized medicine. These problems increasingly involve datasets with a high dimensionality of predictors, making sparsity a desirable (even necessary) quality from the standpoints of prediction and interpretation. This paper introduces a new $\ell_0$ regularized estimator for subset selection in LMMs. We demonstrate that coordinate descent, coupled with a low-complexity local search routine, can scale our estimator to wide datasets well beyond the reach of existing methods. These algorithms also readily extend to subset selection in GLMMs via a PQL approximation. Besides scalability, the new estimator is shown to have excellent statistical properties in theory, simulation, and practice. Our \texttt{R} package \texttt{glmmsel} is available on \texttt{CRAN} and \texttt{GitHub}.

\section*{Acknowledgments}

The authors acknowledge financial support from the Australian Research Council under Discovery Project DP230101179.

\appendix

\section{Proof of Proposition~\ref{prop:update}}
\label{app:update}

The proof is delivered in three parts. First, we derive the form of the thresholding function. Second, we derive the gradient for $\beta_k$. Third, we derive the gradient for $\gamma_k$.

\paragraph{Thresholding Function}

Observe that \eqref{eq:bdlemma}, the upper bound on the negative log-likelihood, treated as a function of block $k$, satisfies
\begin{equation*}
\begin{split}
\bar{l}_{\bar{L}_k}(\boldsymbol{\beta},\boldsymbol{\gamma};\tilde{\boldsymbol{\beta}},\tilde{\boldsymbol{\gamma}})&=l(\tilde{\boldsymbol{\beta}},\tilde{\boldsymbol{\gamma}})+\nabla l(\tilde{\beta}_k)^\top(\beta_k-\tilde{\beta}_k)+\nabla l(\tilde{\gamma}_k)^\top(\gamma_k-\tilde{\gamma}_k) \\
&\hspace{2.5in}+\frac{\bar{L}_k}{2}\left\{(\beta_k-\tilde{\beta}_k)^2+(\gamma_k-\tilde{\gamma}_k)^2\right\} \\
&=\frac{\bar{L}_k}{2}\biggl\{\beta_k-\underbrace{\left(\tilde{\beta}_k-\frac{1}{\bar{L}_k}\nabla l(\tilde{\beta}_k)\right)}_{\hat{\beta}_k}\biggr\}^2+\frac{\bar{L}_k}{2}\biggl\{\gamma_k-\underbrace{\left(\tilde{\gamma}_k-\frac{1}{\bar{L}_k}\nabla l(\tilde{\gamma}_k)\right)}_{\hat{\gamma}_k}\biggr\}^2+\mathrm{const.} \\
&=\frac{\bar{L}_k}{2}(\beta_k-\hat{\beta}_k)^2+\frac{\bar{L}_k}{2}(\gamma_k-\hat{\gamma}_k)^2+\mathrm{const.},
\end{split}
\end{equation*}
where $\mathrm{const.}$ represents terms independent of $\beta_k$ and $\gamma_k$. It immediately follows that \eqref{eq:obj}, the upper bound on the objective function, satisfies
\begin{equation*}
\bar{f}_{\bar{L}_k}(\boldsymbol{\beta},\boldsymbol{\gamma};\tilde{\boldsymbol{\beta}},\tilde{\boldsymbol{\gamma}})=\frac{\bar{L}_k}{2}(\beta_k-\hat{\beta}_k)^2+\frac{\bar{L}_k}{2}(\gamma_k-\hat{\gamma}_k)^2+\lambda\alpha1\{\beta_k\neq0\}+\lambda(1-\alpha)1\{\gamma_k\neq0\}+\mathrm{const.}
\end{equation*}
Hence, to minimize \eqref{eq:obj} as a function of block $k$, we need only minimize the right-hand side of the above function:
\begin{equation*}
\min_{\substack{\beta\in\mathbb{R},\gamma\in\mathbb{R}_+ \\ \beta=0\Rightarrow\gamma=0}}\;\frac{\bar{L}}{2}(\beta-\hat{\beta})^2+\frac{\bar{L}}{2}(\gamma-\hat{\gamma})^2+\lambda\alpha1\{\beta\neq0\}+\lambda(1-\alpha)1\{\gamma\neq0\}
\end{equation*}
or equivalently
\begin{equation*}
\min_{\substack{\beta\in\mathbb{R},\gamma\in\mathbb{R} \\ \beta=0\Rightarrow\gamma=0}}\;\frac{\bar{L}}{2}(\beta-\hat{\beta})^2+\frac{\bar{L}}{2}(\gamma-\hat{\gamma}_+)^2+\lambda\alpha1\{\beta\neq0\}+\lambda(1-\alpha)1\{\gamma\neq0\},
\end{equation*}
where $\hat{\gamma}_+=\max(\hat{\gamma},0)$ and we drop the subscript $k$ to simplify notation. The hierarchy constraint gives rise to three possible cases for the optimal solutions $\beta^\star$ and $\gamma^\star$:
\begin{enumerate}
\item $\beta^\star=0$ and $\gamma^\star=0$, so the optimal objective value is $f_1^\star=\bar{L}/2\hat{\beta}^2+\bar{L}/2\hat{\gamma}_+^2$;
\item $\beta^\star=\hat{\beta}$ and $\gamma^\star=0$, so the optimal objective value is $f_2^\star=\bar{L}/2\hat{\gamma}_+^2+\lambda\alpha$; and
\item $\beta^\star=\hat{\beta}$ and $\gamma^\star=\hat{\gamma}_+$, so the optimal objective value is $f_3^\star=\lambda$.
\end{enumerate}
It is straightforward to check that
\begin{enumerate}
\item $f_1^\star$ is less than $f_2^\star$ and $f_3^\star$ if $\hat{\beta}^2<2\lambda\alpha/\bar{L}$ and $\hat{\beta}^2+\hat{\gamma}_+^2<2\lambda/\bar{L}$;
\item $f_2^\star$ is less than $f_1^\star$ and $f_3^\star$ if $\hat{\beta}^2>2\lambda\alpha/\bar{L}$ and $\hat{\gamma}_+^2<2\lambda(1-\alpha)/\bar{L}$; and
\item $f_3^\star$ is less than $f_1^\star$ and $f_2^\star$ if $\hat{\beta}^2+\hat{\gamma}_+^2>2\lambda/\bar{L}$ and $\hat{\gamma}_+^2>2\lambda(1-\alpha)/\bar{L}$.
\end{enumerate}
Combining these conditions, we have
\begin{equation*}
(\beta^\star,\gamma^\star)=
\begin{dcases}
(0,0)&\text{if }\hat{\beta}^2<\frac{2\lambda\alpha}{\bar{L}}\text{ and }\hat{\beta}^2+\hat{\gamma}_+^2<\frac{2\lambda}{\bar{L}} \\
(\hat{\beta},0)&\text{if }\hat{\gamma}_+^2<\frac{2\lambda(1-\alpha)}{\bar{L}} \\
(\hat{\beta},\hat{\gamma}_+)&\text{otherwise}.
\end{dcases}
\end{equation*}

\paragraph{Gradient for $\beta_k$}

Consider the gradient of $l$ with respect to $\beta_k$. Since $\mathbf{V}_i(\boldsymbol{\gamma})$ does not depend on $\beta_k$, we have
\begin{equation*}
\nabla_{\beta_k}l(\boldsymbol{\beta},\boldsymbol{\gamma})=\ndot\nabla_{\beta_k}\log\{S(\boldsymbol{\beta},\boldsymbol{\gamma})\},
\end{equation*}
where
\begin{equation*}
S(\boldsymbol{\beta},\boldsymbol{\gamma})=\sum_{i=1}^m(\mathbf{y}_i-\mathbf{X}_i\boldsymbol{\beta})^\top\mathbf{V}_i^{-1}(\boldsymbol{\gamma})(\mathbf{y}_i-\mathbf{X}_i\boldsymbol{\beta}).
\end{equation*}
The chain rule gives
\begin{equation*}
\nabla_{\beta_k}\log\{S(\boldsymbol{\beta},\boldsymbol{\gamma})\}=\frac{\nabla_{\beta_k}S(\boldsymbol{\beta},\boldsymbol{\gamma})}{S(\boldsymbol{\beta},\boldsymbol{\gamma})},
\end{equation*}
with the gradient on the right-hand side being
\begin{equation*}
\nabla_{\beta_k}S(\boldsymbol{\beta},\boldsymbol{\gamma})=-2\sum_{i=1}^m\mathbf{x}_{ik}^\top\mathbf{V}_i^{-1}(\boldsymbol{\gamma})(\mathbf{y}_i-\mathbf{X}_i\boldsymbol{\beta}).
\end{equation*}
The gradient claimed by the proposition immediately follows as
\begin{equation*}
\nabla_{\beta_k}l(\boldsymbol{\beta},\boldsymbol{\gamma})=-2\ndot\frac{\sum_{i=1}^m\mathbf{x}_{ik}^\top\mathbf{V}_i^{-1}(\boldsymbol{\gamma})(\mathbf{y}_i-\mathbf{X}_i\boldsymbol{\beta})}{\sum_{i=1}^m(\mathbf{y}_i-\mathbf{X}_i\boldsymbol{\beta})^\top\mathbf{V}_i^{-1}(\boldsymbol{\gamma})(\mathbf{y}_i-\mathbf{X}_i\boldsymbol{\beta})}.
\end{equation*}

\paragraph{Gradient for $\gamma_k$}

Consider now the gradient of $l$ with respect to $\gamma_k$. It holds
\begin{equation}
\label{eq:gammaderiv}
\nabla_{\gamma_k}l(\boldsymbol{\beta},\boldsymbol{\gamma})=\sum_{i=1}^m\nabla_{\gamma_k}\log\det\{\mathbf{V}_i(\boldsymbol{\gamma})\}+\ndot\nabla_{\gamma_k}\log\{S(\boldsymbol{\beta},\boldsymbol{\gamma})\}.
\end{equation}
Using the standard matrix calculus result that $\nabla\log\det\{\mathbf{A}(x)\}=\operatorname{tr}\{\mathbf{A}^{-1}(x)\nabla\mathbf{A}(x)\}$, the first term on the right-hand side of \eqref{eq:gammaderiv} evaluates to
\begin{equation}
\label{eq:gammaderiv1}
\nabla_{\gamma_k}\log\det\{\mathbf{V}_i(\boldsymbol{\gamma})\}=\operatorname{tr}\left(\mathbf{V}_i^{-1}(\boldsymbol{\gamma})\mathbf{x}_{ik}\mathbf{x}_{ik}^\top\right)=\operatorname{tr}\left(\mathbf{x}_{ik}^\top\mathbf{V}_i^{-1}(\boldsymbol{\gamma})\mathbf{x}_{ik}\right)=\mathbf{x}_{ik}^\top\mathbf{V}_i^{-1}(\boldsymbol{\gamma})\mathbf{x}_{ik},
\end{equation}
which follows from the cyclic property of the trace function and the final term being a scalar. For the second term on the right-hand side of \eqref{eq:gammaderiv}, we have
\begin{equation*}
\nabla_{\gamma_k}\log\{S(\boldsymbol{\beta},\boldsymbol{\gamma})\}=\frac{\nabla_{\gamma_k}S(\boldsymbol{\beta},\boldsymbol{\gamma})}{S(\boldsymbol{\beta},\boldsymbol{\gamma})}.
\end{equation*}
Applying the standard matrix calculus results that $\nabla(\mathbf{z}^\top\mathbf{A}^{-1}(x)\mathbf{z})=\mathbf{z}^\top\nabla\mathbf{A}^{-1}(x)\mathbf{z}$ and $\nabla\mathbf{A}^{-1}(x)=-\mathbf{A}^{-1}(x)\nabla\mathbf{A}(x)\mathbf{A}^{-1}(x)$ gives
\begin{equation*}
\begin{split}
\nabla_{\gamma_k} S(\boldsymbol{\beta},\boldsymbol{\gamma})&=-\sum_{i=1}^m(\mathbf{y}_i-\mathbf{X}_i\boldsymbol{\beta})^\top\mathbf{V}_i^{-1}(\boldsymbol{\gamma})\nabla_{\gamma_k}\mathbf{V}_i(\boldsymbol{\gamma})\mathbf{V}_i^{-1}(\boldsymbol{\gamma})(\mathbf{y}_i-\mathbf{X}_i\boldsymbol{\beta}) \\
&=-\sum_{i=1}^m(\mathbf{y}_i-\mathbf{X}_i\boldsymbol{\beta})^\top\mathbf{V}_i^{-1}(\boldsymbol{\gamma})\mathbf{x}_{ik}\mathbf{x}_{ik}^\top\mathbf{V}_i^{-1}(\boldsymbol{\gamma})(\mathbf{y}_i-\mathbf{X}_i\boldsymbol{\beta}) \\
&=-\sum_{i=1}^m\left\{\mathbf{x}_{ik}^\top\mathbf{V}_i^{-1}(\boldsymbol{\gamma})(\mathbf{y}_i-\mathbf{X}_i\boldsymbol{\beta})\right\}^2,
\end{split}
\end{equation*}
where the final equality follows from the fact that $\mathbf{z}^\top(\mathbf{u}\mathbf{u}^\top)\mathbf{z}=(\mathbf{u}^\top\mathbf{z})^2$. We now have
\begin{equation}
\label{eq:gammaderiv2}
\nabla_{\gamma_k}\log\{S(\boldsymbol{\beta},\boldsymbol{\gamma})\}=-\frac{\sum_{i=1}^m\left\{\mathbf{x}_{ik}^\top\mathbf{V}_i^{-1}(\boldsymbol{\gamma})(\mathbf{y}_i-\mathbf{X}_i\boldsymbol{\beta})\right\}^2}{\sum_{i=1}^m(\mathbf{y}_i-\mathbf{X}_i\boldsymbol{\beta})^\top\mathbf{V}_i^{-1}(\boldsymbol{\gamma})(\mathbf{y}_i-\mathbf{X}_i\boldsymbol{\beta})}.
\end{equation}
Finally, substituting \eqref{eq:gammaderiv2} and \eqref{eq:gammaderiv1} into \eqref{eq:gammaderiv} yields the claimed gradient as
\begin{equation*}
\nabla_{\gamma_k}l(\boldsymbol{\beta},\boldsymbol{\gamma})=\sum_{i=1}^m\mathbf{x}_{ik}^\top\mathbf{V}_i^{-1}(\boldsymbol{\gamma})\mathbf{x}_{ik}-\ndot\frac{\sum_{i=1}^m\left\{\mathbf{x}_{ik}^\top\mathbf{V}_i^{-1}(\boldsymbol{\gamma})(\mathbf{y}_i-\mathbf{X}_i\boldsymbol{\beta})\right\}^2}{\sum_{i=1}^m(\mathbf{y}_i-\mathbf{X}_i\boldsymbol{\beta})^\top\mathbf{V}_i^{-1}(\boldsymbol{\gamma})(\mathbf{y}_i-\mathbf{X}_i\boldsymbol{\beta})}.
\end{equation*}

\section{Coordinate Descent Heuristics}
\label{app:heuristics}

Our implementation of Algorithm~\ref{alg:cd} adopts three heuristics that accelerate its convergence.

\paragraph{Gradient Screening}

Prior to running coordinate descent, the predictors are designated strong or weak. Strong predictors are those whose gradients are among (say) the 100 largest in magnitude. The remaining predictors are weak predictors. Coordinate descent is then run on the strong set until convergence. At that point, we do one cycle over the weak predictors. If the active set is unchanged, we stop. Otherwise, if any weak predictors become active, they are shifted from the weak set to the strong set, and coordinate descent is run again.

\paragraph{Gradient Sorting}

Before running coordinate descent, the predictors are sorted in descending order according to the magnitude of their gradients. This sorting ensures that predictors with the greatest potential impact are considered first, usually leading to faster convergence.

\paragraph{Active Set Updates}

After several iterations of coordinate descent, the active set usually stabilizes, with subsequent iterations simply polishing the estimates of the nonzero parameters. For this reason, we restrict the coordinate descent updates solely to the active set once it has stabilized (e.g., after three cycles with no changes). After convergence on the active set, one additional cycle through the inactive predictors is performed to confirm overall convergence.

\section{Proof of Theorem~\ref{thrm:converge}}
\label{app:converge}

\subsection{Preliminary Lemmas}

The proof requires two technical lemmas which we state and prove in turn. The first lemma ensures the negative log-likelihood \eqref{eq:ll} is bounded below. The second lemma ensures the gradients \eqref{eq:gradbeta} and \eqref{eq:gradgamma} of the negative log-likelihood are Lipschitz continuous.

\begin{lemma}
\label{lemma:bounded}
Suppose $\mathbf{y}$ does not lie in the column space of $\mathbf{X}$. Then there exists $c\in\mathbb{R}$ such that the negative log-likelihood $l(\boldsymbol{\beta},\boldsymbol{\gamma})$ given in \eqref{eq:ll} satisfies $l(\boldsymbol{\beta},\boldsymbol{\gamma})\geq c$ for all $\boldsymbol{\beta}\in\mathbb{R}^p$ and $\boldsymbol{\gamma}\in\mathbb{R}_+^p$ such that $\|\boldsymbol{\gamma}\|_\infty\leq\bar{\gamma}$.
\end{lemma}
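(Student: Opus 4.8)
The plan is to recall the explicit form of the negative log-likelihood,
\[
l(\boldsymbol{\beta},\boldsymbol{\gamma})=\sum_{i=1}^m\log\det\{\mathbf{V}_i(\boldsymbol{\gamma})\}+n\log\left\{\sum_{i=1}^m(\mathbf{y}_i-\mathbf{X}_i\boldsymbol{\beta})^\top\mathbf{V}_i^{-1}(\boldsymbol{\gamma})(\mathbf{y}_i-\mathbf{X}_i\boldsymbol{\beta})\right\},
\]
and bound the two summands separately from below. First I would observe that since $\boldsymbol{\gamma}\in\mathbb{R}_+^p$, each $\mathbf{V}_i(\boldsymbol{\gamma})=\mathbf{I}+\mathbf{X}_i\operatorname{diag}(\boldsymbol{\gamma})\mathbf{X}_i^\top$ has all eigenvalues at least $1$, so $\log\det\{\mathbf{V}_i(\boldsymbol{\gamma})\}\geq0$ for every $i$. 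Hence the first sum contributes nonnegatively, and the whole burden of the lemma falls on controlling the quadratic-form term from below.

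Next I would lower-bound the inner quadratic form. Because $\|\boldsymbol{\gamma}\|_\infty\leq\bar{\gamma}$, each $\mathbf{X}_i\operatorname{diag}(\boldsymbol{\gamma})\mathbf{X}_i^\top$ has operator norm at most $\bar{\gamma}\|\mathbf{X}_i\|_{\mathrm{op}}^2$, so the largest eigenvalue of $\mathbf{V}_i(\boldsymbol{\gamma})$ is at most $1+\bar{\gamma}\|\mathbf{X}_i\|_{\mathrm{op}}^2$, whence $\mathbf{V}_i^{-1}(\boldsymbol{\gamma})\succeq(1+\bar{\gamma}\|\mathbf{X}_i\|_{\mathrm{op}}^2)^{-1}\mathbf{I}$. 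Setting $\kappa:=\min_{1\le i\le m}(1+\bar{\gamma}\|\mathbf{X}_i\|_{\mathrm{op}}^2)^{-1}>0$, I get
\[
\sum_{i=1}^m(\mathbf{y}_i-\mathbf{X}_i\boldsymbol{\beta})^\top\mathbf{V}_i^{-1}(\boldsymbol{\gamma})(\mathbf{y}_i-\mathbf{X}_i\boldsymbol{\beta})\geq\kappa\sum_{i=1}^m\|\mathbf{y}_i-\mathbf{X}_i\boldsymbol{\beta}\|_2^2=\kappa\|\mathbf{y}-\mathbf{X}\boldsymbol{\beta}\|_2^2.
\]
The key structural input now enters: since $\mathbf{y}$ does not lie in the column space of $\mathbf{X}$, the continuous function $\boldsymbol{\beta}\mapsto\|\mathbf{y}-\mathbf{X}\boldsymbol{\beta}\|_2^2$ is bounded below by its minimum over $\mathbb{R}^p$ (attained at the least-squares projection), and that minimum is strictly positive; call it $\delta^2>0$. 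Therefore the inner sum is at least $\kappa\delta^2>0$, and applying $\log$ (monotone) gives $l(\boldsymbol{\beta},\boldsymbol{\gamma})\geq0+n\log(\kappa\delta^2)=:c$, a finite constant independent of $\boldsymbol{\beta}$ and $\boldsymbol{\gamma}$.

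The main obstacle — really the only subtle point — is justifying that the residual sum of squares is bounded away from zero uniformly in $\boldsymbol{\beta}$; this is exactly where the hypothesis $\mathbf{y}\notin\operatorname{colspace}(\mathbf{X})$ is indispensable. Without it the infimum would be $0$, the argument of the logarithm could approach $0$, and $l$ would be unbounded below. I would make this precise by decomposing $\mathbf{y}=\mathbf{P}\mathbf{y}+(\mathbf{I}-\mathbf{P})\mathbf{y}$ with $\mathbf{P}$ the orthogonal projector onto the column space, noting $\|\mathbf{y}-\mathbf{X}\boldsymbol{\beta}\|_2^2=\|\mathbf{P}\mathbf{y}-\mathbf{X}\boldsymbol{\beta}\|_2^2+\|(\mathbf{I}-\mathbf{P})\mathbf{y}\|_2^2\geq\|(\mathbf{I}-\mathbf{P})\mathbf{y}\|_2^2=:\delta^2$, and $\delta^2>0$ precisely because $\mathbf{y}\notin\operatorname{colspace}(\mathbf{X})$. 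This completes the proof with the explicit constant $c=n\log(\kappa\delta^2)$.
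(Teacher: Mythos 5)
Your proposal is correct and follows essentially the same route as the paper's proof: the log-determinant term is bounded below by zero via $\lambda_{\min}\{\mathbf{V}_i(\boldsymbol{\gamma})\}\geq1$, the quadratic form is bounded below using $\lambda_{\max}\{\mathbf{V}_i(\boldsymbol{\gamma})\}\leq1+\bar{\gamma}\lambda_{\max}(\mathbf{X}\mathbf{X}^\top)$, and the hypothesis $\mathbf{y}\notin\operatorname{colspace}(\mathbf{X})$ keeps the residual sum of squares strictly positive. Your explicit projection decomposition just makes the paper's appeal to a positive constant $c_2$ concrete; otherwise the arguments coincide.
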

\begin{proof}
Recall that the negative log-likelihood is
\begin{equation}
\label{eq:ll2}
l(\boldsymbol{\beta},\boldsymbol{\gamma})=\sum_{i=1}^m\log\det\{\mathbf{V}_i(\boldsymbol{\gamma})\}+\ndot\log\left\{\sum_{i=1}^m(\mathbf{y}_i-\mathbf{X}_i\boldsymbol{\beta})^\top\mathbf{V}_i^{-1}(\boldsymbol{\gamma})(\mathbf{y}_i-\mathbf{X}_i\boldsymbol{\beta})\right\},
\end{equation}
where
\begin{equation*}
\mathbf{V}_i(\boldsymbol{\gamma})=\mathbf{I}+\mathbf{X}_i\operatorname{diag}(\boldsymbol{\gamma})\mathbf{X}_i^\top.
\end{equation*}
Begin by choosing any $\boldsymbol{\beta}\in\mathbb{R}^p$ and $\boldsymbol{\gamma}\in\mathbb{R}_+^p$ such that $\|\boldsymbol{\gamma}\|_\infty\leq\bar{\gamma}$. Consider the first term $\sum_{i=1}^m\log\det\{\mathbf{V}_i(\boldsymbol{\gamma})\}$ on the right-hand side of \eqref{eq:ll2}. Since $\operatorname{diag}(\boldsymbol{\gamma})$ is positive semi-definite (because $\gamma_k\geq0$ for all $k$), the product $\mathbf{X}_i\operatorname{diag}(\boldsymbol{\gamma})\mathbf{X}_i^\top$ must also be positive semi-definite. Hence, $\mathbf{V}_i(\boldsymbol{\gamma})$ must be positive definite and its minimal eigenvalue $\lambda_\mathrm{min}$ must satisfy
\begin{equation*}
\lambda_\mathrm{min}\{\mathbf{V}_i(\boldsymbol{\gamma})\}=\lambda_\mathrm{min}(\mathbf{I}+\mathbf{X}_i\operatorname{diag}(\boldsymbol{\gamma})\mathbf{X}_i^\top)\geq\lambda_\mathrm{min}(\mathbf{I})=1.
\end{equation*}
It follows immediately from the property of determinants that $\det\{\mathbf{V}_i(\boldsymbol{\gamma})\}\geq1$ and hence
\begin{equation}
\label{eq:lb1}
\sum_{i=1}^m\log\det\{\mathbf{V}_i(\boldsymbol{\gamma})\}\geq0.
\end{equation}
Consider now the second term $\ndot\log\{\sum_{i=1}^m(\mathbf{y}_i-\mathbf{X}_i\boldsymbol{\beta})^\top\mathbf{V}_i^{-1}(\boldsymbol{\gamma})(\mathbf{y}_i-\mathbf{X}_i\boldsymbol{\beta})\}$ on the right-hand side of \eqref{eq:ll2}. The maximal eigenvalue $\lambda_\mathrm{max}$ of $\mathbf{V}_i(\boldsymbol{\gamma})$ satisfies
\begin{equation*}
\lambda_\mathrm{max}\{\mathbf{V}_i(\boldsymbol{\gamma})\}=\lambda_\mathrm{max}(\mathbf{I}+\mathbf{X}_i\operatorname{diag}(\boldsymbol{\gamma})\mathbf{X}_i^\top)\leq1+\bar{\gamma}\lambda_\mathrm{max}(\mathbf{X}_i\mathbf{X}_i^\top)\leq c_1:=1+\bar{\gamma}\lambda_\mathrm{max}(\mathbf{X}\mathbf{X}^\top)>0,
\end{equation*}
and hence the minimal eigenvalue of $\mathbf{V}_i^{-1}(\boldsymbol{\gamma})$ satisfies
\begin{equation*}
\lambda_\mathrm{min}\{\mathbf{V}_i^{-1}(\boldsymbol{\gamma})\}=\frac{1}{\lambda_\mathrm{max}\{\mathbf{V}_i(\boldsymbol{\gamma})\}}\geq\frac{1}{c_1}.
\end{equation*}
It follows from the property of eigenvalues that
\begin{equation*}
\sum_{i=1}^m(\mathbf{y}_i-\mathbf{X}_i\boldsymbol{\beta})^\top\mathbf{V}_i^{-1}(\boldsymbol{\gamma})(\mathbf{y}_i-\mathbf{X}_i\boldsymbol{\beta})\geq \sum_{i=1}^m\frac{1}{c_1}\|\mathbf{y}_i-\mathbf{X}_i\boldsymbol{\beta}\|_2^2=\frac{1}{c_1}\|\mathbf{y}-\mathbf{X}\boldsymbol{\beta}\|_2^2.
\end{equation*}
By assumption, $\mathbf{y}$ is not in the column space spanned by $\mathbf{X}$, so there is a $c_2>0$ such that
\begin{equation*}
\|\mathbf{y}-\mathbf{X}\boldsymbol{\beta}\|_2^2\geq c_2.
\end{equation*}
Thus, we have
\begin{equation*}
\sum_{i=1}^m(\mathbf{y}_i-\mathbf{X}_i\boldsymbol{\beta})^\top\mathbf{V}_i^{-1}(\boldsymbol{\gamma})(\mathbf{y}_i-\mathbf{X}_i\boldsymbol{\beta})\geq\frac{c_2}{c_1},
\end{equation*}
and hence
\begin{equation}
\label{eq:lb2}
\ndot\log\left\{\sum_{i=1}^m(\mathbf{y}_i-\mathbf{X}_i\boldsymbol{\beta})^\top\mathbf{V}_i^{-1}(\boldsymbol{\gamma})(\mathbf{y}_i-\mathbf{X}_i\boldsymbol{\beta})\right\}\geq c:=\ndot\log\left(\frac{c_2}{c_1}\right).
\end{equation}
Combining \eqref{eq:lb1} and \eqref{eq:lb2} yields the result of the lemma.
\end{proof}

\begin{lemma}
\label{lemma:lipschitz}
Suppose $\mathbf{y}$ does not lie in the column space of $\mathbf{X}$ and that $\boldsymbol{\beta}\in\mathbb{R}^p$ and $\boldsymbol{\gamma}\in\mathbb{R}_+^p$ satisfy $\|\boldsymbol{\beta}\|_\infty\leq\bar{\beta}$ and $\|\boldsymbol{\gamma}\|_\infty\leq\bar{\gamma}$. Then the negative log-likelihood gradients \eqref{eq:gradbeta} and \eqref{eq:gradgamma} are Lipschitz continuous in the $k$th block of coordinates.
\end{lemma}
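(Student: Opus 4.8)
The plan is to show that $l$, viewed as a function of the block $(\beta_k,\gamma_k)$ with all other coordinates held fixed, is twice continuously differentiable on the compact convex box $[-\bar\beta,\bar\beta]\times[0,\bar\gamma]$ with uniformly bounded second partial derivatives; Lipschitz continuity of the block gradient $(\nabla_{\beta_k}l,\nabla_{\gamma_k}l)$ then follows from the mean value inequality for vector-valued maps.

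First I would collect the uniform bounds that make the relevant quantities controllable. Since $\gamma_k\in[0,\bar\gamma]$ keeps $\operatorname{diag}(\boldsymbol{\gamma})$ positive semidefinite, each $\mathbf{V}_i(\boldsymbol{\gamma})$ has eigenvalues in $[1,\,1+\bar\gamma\lambda_{\max}(\mathbf{X}\mathbf{X}^\top)]$, so $\|\mathbf{V}_i^{-1}(\boldsymbol{\gamma})\|_{\mathrm{op}}\le1$, and the denominator $S(\boldsymbol{\beta},\boldsymbol{\gamma})=\sum_{i=1}^m(\mathbf{y}_i-\mathbf{X}_i\boldsymbol{\beta})^\top\mathbf{V}_i^{-1}(\boldsymbol{\gamma})(\mathbf{y}_i-\mathbf{X}_i\boldsymbol{\beta})$ is bounded below by a strictly positive constant---this is precisely the content of the proof of Lemma~\ref{lemma:bounded} and is where the hypothesis that $\mathbf{y}$ does not lie in the column space of $\mathbf{X}$ enters. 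Additionally, $\|\mathbf{y}_i-\mathbf{X}_i\boldsymbol{\beta}\|_2$ is bounded above on the box (as $\mathbf{y}$, $\mathbf{X}$ are fixed and $\boldsymbol{\beta}$ is constrained), and each $\mathbf{x}_{ik}$ is a fixed unit vector.

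Next I would differentiate the gradient expressions \eqref{eq:gradbeta} and \eqref{eq:gradgamma} once more with respect to $\beta_k$ and $\gamma_k$. The dependence of $\mathbf{V}_i^{-1}(\boldsymbol{\gamma})$ on $\gamma_k$ is handled by the identity $\nabla_{\gamma_k}\mathbf{V}_i^{-1}(\boldsymbol{\gamma})=-\mathbf{V}_i^{-1}(\boldsymbol{\gamma})\mathbf{x}_{ik}\mathbf{x}_{ik}^\top\mathbf{V}_i^{-1}(\boldsymbol{\gamma})$ (operator norm at most $1$), and $\nabla_{\beta_k}(\mathbf{y}_i-\mathbf{X}_i\boldsymbol{\beta})=-\mathbf{x}_{ik}$. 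Applying the quotient rule to the $\log S$ term and the sum rule to $\sum_i\mathbf{x}_{ik}^\top\mathbf{V}_i^{-1}(\boldsymbol{\gamma})\mathbf{x}_{ik}$, each second partial $\partial^2 l/\partial\beta_k^2$, $\partial^2 l/\partial\beta_k\partial\gamma_k$, $\partial^2 l/\partial\gamma_k^2$ is a finite algebraic combination of the bounded quantities above together with factors of $1/S$, hence bounded by some finite $L_k$ uniformly over the box. The mean value inequality then gives that the block gradient is $L_k$-Lipschitz on $[-\bar\beta,\bar\beta]\times[0,\bar\gamma]$, as claimed; since $l$ depends continuously on the remaining coordinates (which also lie in boxes), $L_k$ can be taken uniform over all of them, which is what Theorem~\ref{thrm:converge} requires.

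The main obstacle will be the bookkeeping in differentiating the ratio appearing in each gradient: the quotient rule generates several terms, some carrying a factor $1/S^2$, and one must confirm that the strict positive lower bound on $S$ keeps every such term bounded. This is exactly the step for which the assumption that $\mathbf{y}$ is not in the column space of $\mathbf{X}$ is indispensable---without it $S$ can vanish, making $\log S$ (and hence $l$) infinite, so that no smoothness or Lipschitz estimate is available.
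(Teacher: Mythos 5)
Your argument is correct, but it proceeds by a different route than the paper. You bound the block Hessian: since $l$ restricted to $(\beta_k,\gamma_k)$ is $C^2$ on the convex compact box $[-\bar\beta,\bar\beta]\times[0,\bar\gamma]$, and the second partials are algebraic combinations of $\mathbf{x}_{ik}^\top\mathbf{V}_i^{-1}(\boldsymbol{\gamma})\mathbf{x}_{ik}$, residual norms, and powers of $1/S$ — all uniformly controlled thanks to $\|\mathbf{V}_i^{-1}(\boldsymbol{\gamma})\|_2\le1$, the box constraints, and the strictly positive lower bound on $S$ inherited from the proof of Lemma~\ref{lemma:bounded} — the mean value inequality yields Lipschitz continuity of the block gradient. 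The paper instead never differentiates a second time: it writes $\nabla_{\beta_k}l=-2nA/B$, bounds the difference of gradients at two points by $c_3\bigl(|A^{(1)}-A^{(2)}|+|B^{(1)}-B^{(2)}|\bigr)$, and then controls $|A^{(1)}-A^{(2)}|$ and $|B^{(1)}-B^{(2)}|$ directly via the resolvent identity $\mathbf{V}_i^{-1}(\boldsymbol{\gamma}^{(1)})-\mathbf{V}_i^{-1}(\boldsymbol{\gamma}^{(2)})=\mathbf{V}_i^{-1}(\boldsymbol{\gamma}^{(1)})\{\mathbf{V}_i(\boldsymbol{\gamma}^{(2)})-\mathbf{V}_i(\boldsymbol{\gamma}^{(1)})\}\mathbf{V}_i^{-1}(\boldsymbol{\gamma}^{(2)})$ and the fact that the residual changes by $\mathbf{x}_{ik}(\beta_k^{(2)}-\beta_k^{(1)})$, treating $\nabla_{\gamma_k}l$ analogously. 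The underlying ingredients (denominator lower bound from the column-space assumption, unit-norm columns, eigenvalue bound on $\mathbf{V}_i^{-1}$) are identical, and both arguments give constants uniform over the remaining coordinates as Theorem~\ref{thrm:converge} requires. What the paper's first-order perturbation argument buys is explicit (if crude) constants and no need to verify twice-differentiability; what your approach buys is brevity and a cleaner conceptual statement, at the cost of leaving $L_k$ implicit and of the second-derivative bookkeeping you acknowledge but do not carry out — that bookkeeping is routine and would go through, so I see no gap, only unfinished detail.
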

\begin{proof}
To facilitate exposition, define the functions
\begin{equation*}
A(\boldsymbol{\beta},\boldsymbol{\gamma})=\sum_{i=1}^m\mathbf{x}_{ik}^\top\mathbf{V}_i^{-1}(\boldsymbol{\gamma})(\mathbf{y}_i-\mathbf{X}_i\boldsymbol{\beta})
\end{equation*}
and
\begin{equation*}
B(\boldsymbol{\beta},\boldsymbol{\gamma})=\sum_{i=1}^m(\mathbf{y}_i-\mathbf{X}_i\boldsymbol{\beta})^\top\mathbf{V}_i^{-1}(\boldsymbol{\gamma})(\mathbf{y}_i-\mathbf{X}_i\boldsymbol{\beta}).
\end{equation*}
Consider first the gradient \eqref{eq:gradbeta} of the negative log-likelihood with respect to $\beta_k$, which can be written in terms of $A(\boldsymbol{\beta},\boldsymbol{\gamma})$ and $B(\boldsymbol{\beta},\boldsymbol{\gamma})$ as
\begin{equation*}
\nabla_{\beta_k}l(\boldsymbol{\beta},\boldsymbol{\gamma})=-2\ndot\frac{A(\boldsymbol{\beta},\boldsymbol{\gamma})}{B(\boldsymbol{\beta},\boldsymbol{\gamma})}.
\end{equation*}
Our task is to show that $\nabla_{\beta_k}l(\boldsymbol{\beta},\boldsymbol{\gamma})$ is Lipschitz continuous in block $k$, i.e., that there exists a constant $L_{\beta_k}\geq0$ such that
\begin{equation*}
|\nabla_{\beta_k}l(\boldsymbol{\beta}^{(1)},\boldsymbol{\gamma}^{(1)})-\nabla_{\beta_k}l(\boldsymbol{\beta}^{(2)},\boldsymbol{\gamma}^{(2)})|\leq L_{\beta_k}\left(|\beta_k^{(1)}-\beta_k^{(2)}|+|\gamma_k^{(1)}-\gamma_k^{(2)}|\right),
\end{equation*}
for any feasible $\boldsymbol{\beta}^{(1)}$ and $\boldsymbol{\beta}^{(2)}$ and any feasible $\boldsymbol{\gamma}^{(1)}$ and $\boldsymbol{\gamma}^{(2)}$ differing only in coordinate $k$. Hereafter, we use the shorthand $A^{(1)}$ and $B^{(1)}$ for $A(\boldsymbol{\beta}^{(1)},\boldsymbol{\gamma}^{(1)})$ and $B(\boldsymbol{\beta}^{(1)},\boldsymbol{\gamma}^{(1)})$, and similarly for $A^{(2)}$ and $B^{(2)}$. First, observe that
\begin{equation*}
\begin{split}
|\nabla_{\beta_k}l(\boldsymbol{\beta}^{(1)},\boldsymbol{\gamma}^{(1)})-\nabla_{\beta_k}l(\boldsymbol{\beta}^{(2)},\boldsymbol{\gamma}^{(2)})|&=2\ndot\left|\frac{A^{(1)}}{B^{(1)}}-\frac{A^{(2)}}{B^{(2)}}\right| \\
&=2\ndot\left|\frac{A^{(1)}B^{(2)}-A^{(2)}B^{(1)}}{B^{(1)}B^{(2)}}\right| \\
&=2\ndot\left|\frac{B^{(1)}(A^{(1)}-A^{(2)})-A^{(1)}(B^{(1)}-B^{(2)})}{B^{(1)}B^{(2)}}\right| \\
&\leq\frac{2\ndot}{|B^{(1)}B^{(2)}|}\left(|B^{(1)}||A^{(1)}-A^{(2)}|+|A^{(1)}||B^{(1)}-B^{(2)}|\right).
\end{split}
\end{equation*}
By the same arguments used in the proof of Lemma~\ref{lemma:bounded}, there exists a positive constant (call it $c_1$) independent of $\beta_k$ and $\gamma_k$ that lower bounds $B^{(1)}$ and $B^{(2)}$, and hence $1/|B^{(1)}B^{(2)}|\leq 1/c_1^2$. To upper bound $|A^{(1)}|$ and $|B^{(1)}|$, we use that
\begin{equation*}
\|\mathbf{V}_i^{-1}(\boldsymbol{\gamma})\|_2=\lambda_\mathrm{max}\{\mathbf{V}_i^{-1}(\boldsymbol{\gamma})\}=\frac{1}{\lambda_\mathrm{min}\{\mathbf{V}_i(\boldsymbol{\gamma})\}}\leq\frac{1}{1}=1
\end{equation*}
and
\begin{equation*}
\|\mathbf{y}_i-\mathbf{X}_i\boldsymbol{\beta}\|_2\leq\|\mathbf{y}-\mathbf{X}\boldsymbol{\beta}\|_2\leq\|\mathbf{y}\|_2+\|\mathbf{X}\boldsymbol{\beta}\|_2\leq\|\mathbf{y}\|_2+\bar{\beta}\sum_{j=1}^p\|\mathbf{x}_j\|_2\leq c_2:=\|\mathbf{y}\|_2+\bar{\beta}p,
\end{equation*}
since the columns of $\mathbf{X}$ have unit $\ell_2$ norm. It follows from the above two chains of inequalities that
\begin{equation*}
|A^{(1)}|\leq\sum_{i=1}^m\|\mathbf{x}_{ik}\|_2\|\mathbf{V}_i^{-1}(\boldsymbol{\gamma}^{(1)})\|_2\|\mathbf{y}_i-\mathbf{X}_i\boldsymbol{\beta}^{(1)}\|_2\leq c_2\sum_{i=1}^m\|\mathbf{x}_{ik}\|_2\leq c_2\sqrt{m}
\end{equation*}
and
\begin{equation*}
|B^{(1)}|\leq\sum_{i=1}^m\|\mathbf{y}_i-\mathbf{X}_i\boldsymbol{\beta}^{(1)}\|_2\|\mathbf{V}_i^{-1}(\boldsymbol{\gamma}^{(1)})\|_2\|\mathbf{y}_i-\mathbf{X}_i\boldsymbol{\beta}^{(1)}\|_2\leq c_2^2m.
\end{equation*}
Hence, there exists a constant $c_3\geq0$ such that
\begin{equation*}
|\nabla_{\beta_k}l(\boldsymbol{\beta}^{(1)},\boldsymbol{\gamma}^{(1)})-\nabla_{\beta_k}l(\boldsymbol{\beta}^{(2)},\boldsymbol{\gamma}^{(2)})|\leq c_3\left(|A^{(1)}-A^{(2)}|+|B^{(1)}-B^{(2)}|\right).
\end{equation*}
We now bound $|A^{(1)}-A^{(2)}|$ and $|B^{(1)}-B^{(2)}|$ in terms of $|\beta_k^{(1)}-\beta_k^{(2)}|+|\gamma_k^{(1)}-\gamma_k^{(2)}|$. Towards this end, we use that
\begin{equation*}
\begin{split}
\|\mathbf{V}_i^{-1}(\boldsymbol{\gamma}^{(1)})-\mathbf{V}_i^{-1}(\boldsymbol{\gamma}^{(2)})\|_2&=\left\|\mathbf{V}_i^{-1}(\boldsymbol{\gamma}^{(1)})\left[\mathbf{V}_i(\boldsymbol{\gamma}^{(2)})-\mathbf{V}_i(\boldsymbol{\gamma}^{(1)})\right]\mathbf{V}_i^{-1}(\boldsymbol{\gamma}^{(2)})\right\|_2 \\
&\leq\|\mathbf{V}_i^{-1}(\boldsymbol{\gamma}^{(1)})\|_2\|\mathbf{V}_i(\boldsymbol{\gamma}^{(1)})-\mathbf{V}_i(\boldsymbol{\gamma}^{(2)})\|_2\|\mathbf{V}_i^{-1}(\boldsymbol{\gamma}^{(2)})\|_2 \\
&\leq\|(\mathbf{I}+\mathbf{X}_i\operatorname{diag}(\boldsymbol{\gamma}^{(1)})\mathbf{X}_i^\top)-(\mathbf{I}+\mathbf{X}_i\operatorname{diag}(\boldsymbol{\gamma}^{(2)})\mathbf{X}_i^\top)\|_2 \\
&=\|\mathbf{x}_{ik}\|_2^2|\gamma_k^{(1)}-\gamma_k^{(2)}| \\
&\leq|\gamma_k^{(1)}-\gamma_k^{(2)}|,
\end{split}
\end{equation*}
since $\|\mathbf{x}_{ik}\|_2\leq 1$, and
\begin{equation*}
\|(\mathbf{y}_i-\mathbf{X}_i\boldsymbol{\beta}^{(1)})-(\mathbf{y}_i-\mathbf{X}_i\boldsymbol{\beta}^{(2)})\|_2=\|\mathbf{X}_i(\boldsymbol{\beta}^{(1)}-\boldsymbol{\beta}^{(2)})\|_2=\|\mathbf{x}_{ik}\|_2|\beta_k^{(1)}-\beta_k^{(2)}|\leq|\beta_k^{(1)}-\beta_k^{(2)}|.
\end{equation*}
Using the above two inequalities, we upper bound $|A^{(1)}-A^{(2)}|$ as
\begin{equation*}
\begin{split}
|A^{(1)}-A^{(2)}|&=\left|\sum_{i=1}^m\mathbf{x}_{ik}^\top\mathbf{V}_i^{-1}(\boldsymbol{\gamma}^{(1)})(\mathbf{y}_i-\mathbf{X}_i\boldsymbol{\beta}^{(1)})-\sum_{i=1}^m\mathbf{x}_{ik}^\top\mathbf{V}_i^{-1}(\boldsymbol{\gamma}^{(2)})(\mathbf{y}_i-\mathbf{X}_i\boldsymbol{\beta}^{(2)})\right| \\
&\leq\sum_{i=1}^m\|\mathbf{x}_{ik}\|_2\left\|\mathbf{V}_i^{-1}(\boldsymbol{\gamma}^{(1)})(\mathbf{y}_i-\mathbf{X}_i\boldsymbol{\beta}^{(1)})-\mathbf{V}_i^{-1}(\boldsymbol{\gamma}^{(2)})(\mathbf{y}_i-\mathbf{X}_i\boldsymbol{\beta}^{(2)})\right\|_2 \\
&=\sum_{i=1}^m\|\mathbf{x}_{ik}\|_2\left\|\{\mathbf{V}_i^{-1}(\boldsymbol{\gamma}^{(1)})-\mathbf{V}_i^{-1}(\boldsymbol{\gamma}^{(2)})\}(\mathbf{y}_i-\mathbf{X}_i\boldsymbol{\beta}^{(1)})\right. \\
&\hspace{1.5in}\left.+\mathbf{V}_i^{-1}(\boldsymbol{\gamma}^{(2)})\{(\mathbf{y}_i-\mathbf{X}_i\boldsymbol{\beta}^{(1)})-(\mathbf{y}_i-\mathbf{X}_i\boldsymbol{\beta}^{(2)})\}\right\|_2 \\
&\leq\sum_{i=1}^m\|\mathbf{x}_{ik}\|_2\|\mathbf{V}_i^{-1}(\boldsymbol{\gamma}^{(1)})-\mathbf{V}_i^{-1}(\boldsymbol{\gamma}^{(2)})\|_2\|\mathbf{y}_i-\mathbf{X}_i\boldsymbol{\beta}^{(1)}\|_2 \\
&\hspace{1in}+\sum_{i=1}^m\|\mathbf{x}_{ik}\|_2\|\mathbf{V}_i^{-1}(\boldsymbol{\gamma}^{(2)})\|_2\|(\mathbf{y}_i-\mathbf{X}_i\boldsymbol{\beta}^{(1)})-(\mathbf{y}_i-\mathbf{X}_i\boldsymbol{\beta}^{(2)})\|_2 \\
&\leq c_2\sum_{i=1}^m\|\mathbf{x}_{ik}\|_2|\gamma_k^{(1)}-\gamma_k^{(2)}|+\sum_{i=1}^m\|\mathbf{x}_{ik}\|_2|\beta_k^{(1)}-\beta_k^{(2)}| \\
&\leq c_2\sqrt{m}|\gamma_k^{(1)}-\gamma_k^{(2)}|+\sqrt{m}|\beta_k^{(1)}-\beta_k^{(2)}| \\
&\leq c_4(|\beta_k^{(1)}-\beta_k^{(2)}|+|\gamma_k^{(1)}-\gamma_k^{(2)}|),
\end{split}
\end{equation*}
where $c_4:=(1+c_2)\sqrt{m}$. One can upper bound $|B^{(1)}-B^{(2)}|$ by a similar line of working. Call the constant in that upper bound $c_5\geq0$. It follows that
\begin{equation*}
\begin{split}
|\nabla_{\beta_k}l(\boldsymbol{\beta}^{(1)},\boldsymbol{\gamma}^{(1)})-\nabla_{\beta_k}l(\boldsymbol{\beta}^{(2)},\boldsymbol{\gamma}^{(2)})|&\leq c_3\left(|A^{(1)}-A^{(2)}|+|B^{(1)}-B^{(2)}|\right) \\
&\leq c_3\max(c_4,c_5)\left(|\beta_k^{(1)}-\beta_k^{(2)}|+|\gamma_k^{(1)}-\gamma_k^{(2)}|\right).
\end{split}
\end{equation*}
Hence, $\nabla_{\beta_k}l(\boldsymbol{\beta},\boldsymbol{\gamma})$ is Lipschitz with constant $L_{\beta_k}=c_3\max(c_4,c_5)$. The same line of argument, omitted here for brevity, establishes that $\nabla_{\gamma_k}l(\boldsymbol{\beta},\boldsymbol{\gamma})$ is also Lipschitz with constant $L_{\gamma_k}\geq0$. Finally, the proof is completed by taking $L_k=\max(L_{\beta_k},L_{\gamma_k})$.
\end{proof}

\subsection{Proof of Main Results}

With the preliminary lemmas in hand, we can now prove the main results. The proof is presented in two parts: (i) a proof of convergence of the objective value, and (ii) a proof of convergence of the active set. The steps in our proof follow those used in the proof of Theorem 1 in \citet{Dedieu2021}.

\paragraph{Convergence of Objective Value}

Let $(\boldsymbol{\beta}^\star,\boldsymbol{\gamma}^\star)$ be the result of applying the thresholding function \eqref{eq:threshold} to $(\tilde{\boldsymbol{\beta}},\tilde{\boldsymbol{\gamma}})$. Since $l(\boldsymbol{\beta},\boldsymbol{\gamma})$ is Lipschitz on a compact set (Lemma~\ref{lemma:lipschitz}), we can invoke inequality \eqref{eq:bdlemma}. Substituting $\boldsymbol{\beta}=\boldsymbol{\beta}^\star$ and $\boldsymbol{\gamma}=\boldsymbol{\gamma}^\star$ into \eqref{eq:obj} gives
\begin{equation*}
\begin{split}
f(\boldsymbol{\beta}^\star,\boldsymbol{\gamma}^\star)&\leq\bar{f}_{L_k}(\boldsymbol{\beta}^\star,\boldsymbol{\gamma}^\star;\tilde{\boldsymbol{\beta}},\tilde{\boldsymbol{\gamma}})=l(\tilde{\boldsymbol{\beta}},\tilde{\boldsymbol{\gamma}})+\nabla l(\tilde{\beta}_k)(\beta_k^\star-\tilde{\beta}_k)+\nabla l(\tilde{\gamma}_k)(\gamma_k^\star-\tilde{\gamma}_k) \\
&\hspace{1.2in}+\frac{L_k}{2}\left\{(\beta_k^\star-\tilde{\beta}_k)^2+(\gamma_k^\star-\tilde{\gamma}_k)^2\right\}+\lambda\alpha\|\boldsymbol{\beta}^\star\|_0+\lambda(1-\alpha)\|\boldsymbol{\gamma}^\star\|_0.
\end{split}
\end{equation*}
Next, we add and subtract $\bar{L}_k/2\{(\beta_k^\star-\tilde{\beta}_k)^2+(\gamma_k^\star-\tilde{\gamma}_k)^2\}$ on the right-hand side to get
\begin{equation*}
\begin{split}
f(\boldsymbol{\beta}^\star,\boldsymbol{\gamma}^\star)&\leq l(\tilde{\boldsymbol{\beta}},\tilde{\boldsymbol{\gamma}})+\nabla l(\tilde{\beta}_k)(\beta_k^\star-\tilde{\beta}_k)+\nabla l(\tilde{\gamma}_k)(\gamma_k^\star-\tilde{\gamma}_k) \\
&\hspace{0.5in}+\frac{\bar{L}_k}{2}\left\{(\beta_k^\star-\tilde{\beta}_k)^2+(\gamma_k^\star-\tilde{\gamma}_k)^2\right\}+\frac{L_k-\bar{L}_k}{2}\left\{(\beta_k^\star-\tilde{\beta}_k)^2+(\gamma_k^\star-\tilde{\gamma}_k)^2\right\} \\
&\hspace{1in}+\lambda\alpha\|\boldsymbol{\beta}^\star\|_0+\lambda(1-\alpha)\|\boldsymbol{\gamma}^\star\|_0 \\
&=\bar{f}_{\bar{L}_k}(\boldsymbol{\beta}^\star,\boldsymbol{\gamma}^\star;\tilde{\boldsymbol{\beta}},\tilde{\boldsymbol{\gamma}})+\frac{L_k-\bar{L}_k}{2}\left\{(\beta_k^\star-\tilde{\beta}_k)^2+(\gamma_k^\star-\tilde{\gamma}_k)^2\right\}.
\end{split}
\end{equation*}
Using that $\bar{f}_{\bar{L}_k}(\boldsymbol{\beta}^\star,\boldsymbol{\gamma}^\star;\tilde{\boldsymbol{\beta}},\tilde{\boldsymbol{\gamma}})\leq\bar{f}_{\bar{L}_k}(\tilde{\boldsymbol{\beta}},\tilde{\boldsymbol{\gamma}};\tilde{\boldsymbol{\beta}},\tilde{\boldsymbol{\gamma}})=f(\tilde{\boldsymbol{\beta}},\tilde{\boldsymbol{\gamma}})$ and rearranging terms, we get
\begin{equation*}
f(\tilde{\boldsymbol{\beta}},\tilde{\boldsymbol{\gamma}})-f(\boldsymbol{\beta}^\star,\boldsymbol{\gamma}^\star)\geq\frac{\bar{L}_k-L_k}{2}\left\{(\tilde{\beta}_k-\beta_k^\star)^2+(\tilde{\gamma}_k-\gamma_k^\star)^2\right\}.
\end{equation*}
Finally, taking $\boldsymbol{\beta}^\star=\boldsymbol{\beta}^{(t+1)}$, $\tilde{\boldsymbol{\beta}}=\boldsymbol{\beta}^{(t)}$, $\boldsymbol{\gamma}^\star=\boldsymbol{\gamma}^{(t+1)}$, and $\tilde{\boldsymbol{\gamma}}=\boldsymbol{\gamma}^{(t)}$, and recalling that all $p$ coordinates are updated in a single cycle of coordinate descent (i.e., $p$ block coordinate updates from iteration $t$ to $t+1$), we have
\begin{equation*}
f(\boldsymbol{\beta}^{(t)},\boldsymbol{\gamma}^{(t)})-f(\boldsymbol{\beta}^{(t+1)},\boldsymbol{\gamma}^{(t+1)})\geq\sum_{k=1}^p\frac{\bar{L}_k-L_k}{2}\left\{(\beta_k^{(t)}-\beta_k^{(t+1)})^2+(\gamma_k^{(t)}-\gamma_k^{(t+1)})^2\right\}.
\end{equation*}
The right-hand side is nonnegative for $\bar{L}_k\geq L_k$. Hence, the sequence $\{f(\boldsymbol{\beta}^{(t)},\boldsymbol{\gamma}^{(t)})\}_{t\in\mathbb{N}}$ is decreasing and, because it is bounded below on a compact set (Lemma~\ref{lemma:bounded}), convergent.

\paragraph{Convergence of Active Set}

We proceed using proof by contradiction. Suppose the active set does not converge in finitely many iterations. Choose $t$ such that $\mathcal{A}^{(t)}\neq\mathcal{A}^{(t+1)}$. Then for some $k=1,\dots,p$ either (i) $\beta_k^{(t)}=0$ and $\beta_k^{(t+1)}\neq0$, or (ii) $\beta_k^{(t)}\neq0$ and $\beta_k^{(t+1)}=0$. Consider case (i). By the previous result we have
\begin{equation*}
f(\boldsymbol{\beta}^{(t)},\boldsymbol{\gamma}^{(t)})-f(\boldsymbol{\beta}^{(t+1)},\boldsymbol{\gamma}^{(t+1)})\geq\frac{\bar{L}_k-L_k}{2}(\beta_k^{(t+1)})^2.
\end{equation*}
Because $\beta_k^{(t+1)}$ is the output of the thresholding function, it holds $|\beta_k^{(t+1)}|\geq\sqrt{2\lambda\alpha/\bar{L}_k}$, giving
\begin{equation*}
f(\boldsymbol{\beta}^{(t)},\boldsymbol{\gamma}^{(t)})-f(\boldsymbol{\beta}^{(t+1)},\boldsymbol{\gamma}^{(t+1)})\geq\frac{\bar{L}_k-L_k}{\bar{L}_k}\lambda\alpha,
\end{equation*}
which is positive for $\bar{L}_k>L_k$. The same line of working establishes a similar inequality for case (ii). The quantity on the right-hand side is positive, indicating the objective value strictly decreases. If the active set does not converge then the objective value must decrease indefinitely, which contradicts the fact that the objective value is bounded below on a compact set (Lemma~\ref{lemma:bounded}). Therefore, the active set must converge in finitely many iterations.

\section{Proof of Proposition~\ref{prop:lambda}}
\label{app:lambda}

Let $\lambda^{(r)}$ be the current value of the sparsity parameter. Under the conditions of Theorem~\ref{thrm:converge} the active set $\mathcal{A}^{(r)}$ converges in finitely many iterations, meaning that no active predictors are removed from the model by the thresholding function \eqref{eq:threshold} and no inactive predictors are added. Hence, by the first condition on the inactive predictors in \eqref{eq:threshold}, it must hold for all $k\not\in\mathcal{A}^{(r)}$ (i.e., all inactive predictors) that
\begin{equation*}
\begin{split}
\left(\frac{-\nabla l(\beta_k^{(r)})}{\bar{L}_k}\right)^2&<\frac{2\lambda^{(r)}\alpha}{\bar{L}_k} \\
\nabla l(\beta_k^{(r)})^2&<2\lambda^{(r)}\alpha\bar{L}_k \\
\frac{\nabla l(\beta_k^{(r)})^2}{2\alpha\bar{L}_k}&<\lambda^{(r)}.
\end{split}
\end{equation*}
Likewise, by the second condition on the inactive predictors in \eqref{eq:threshold}, we have
\begin{equation*}
\begin{split}
\left(\frac{-\nabla l(\beta_k^{(r)})}{\bar{L}_k}\right)^2+\left(\frac{\max(-\nabla l(\gamma_k^{(r)}),0)}{\bar{L}_k}\right)^2&<\frac{2\lambda^{(r)}}{\bar{L}_k} \\
\nabla l(\beta_k^{(r)})^2+\max(-\nabla l(\gamma_k^{(r)}),0)^2&<2\lambda^{(r)}\bar{L}_k \\
\frac{\nabla l(\beta_k^{(r)})^2+\max(-\nabla l(\gamma_k^{(r)}),0)^2}{2\bar{L}_k}&<\lambda^{(r)}.
\end{split}
\end{equation*}
Hence, if we take $\lambda=\lambda^{(r+1)}$, where
\begin{equation*}
\lambda^{(r+1)}<\max_{k\not\in\mathcal{A}^{(r)}}\;\max\left(\frac{\nabla l(\beta_k^{(r)})^2}{2\alpha\bar{L}_k},\frac{\nabla l(\beta_k^{(r)})^2+\max(-\nabla l(\gamma_k^{(r)}),0)^2}{2\bar{L}_k}\right),
\end{equation*}
then one of the above conditions will be violated and the active set must change.

\section{Proof of Theorem~\ref{thrm:bound}}
\label{app:bound}

\subsection{Preliminary Lemmas}

The proof of the theorem requires several technical lemmas, which we now state and prove. The first two lemmas provide probabilistic upper bounds on important quantities. The last two lemmas ensure the negative log-likelihood and its expectation are Lipschitz continuous.

\begin{lemma}
\label{lemma:prob1}
Let $(\boldsymbol{\beta},\boldsymbol{\gamma})\in\mathcal{C}(s)$. Then, for any $t>0$, it holds
\begin{equation*}
\begin{split}
&\Pr\left[\left|\frac{1}{\sigma^2}\sum_{i=1}^m\left[\boldsymbol{\nu}_i^\top\mathbf{V}_i^{-1}(\boldsymbol{\gamma})\boldsymbol{\nu}_i-\sigma^2\operatorname{tr}\left\{\mathbf{V}_i^{-1}(\boldsymbol{\gamma})\mathbf{V}_i(\boldsymbol{\gamma}^0)\right\}\right]\right|\geq t\right] \\
&\hspace{2.5in}\leq2\exp\left\{-c\min\left(\frac{t^2}{\ndot(1+\bar{\gamma}s)^2},\frac{t}{1+\bar{\gamma}s}\right)\right\},
\end{split}
\end{equation*}
where $c>0$ is a universal constant.
\end{lemma}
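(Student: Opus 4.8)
The plan is to recognize the centered sum inside the probability as a standard Gaussian quadratic form and apply the Hanson--Wright inequality, after which the two matrix norms that control the tail are bounded using $\mathbf{V}_i(\boldsymbol{\gamma})\succeq\mathbf{I}$, the $s$-sparsity of $\boldsymbol{\gamma}^0$, and the unit-$\ell_2$-norm columns of $\mathbf{X}$. Note first that the quantity does not depend on $\boldsymbol{\beta}$, and that $(\boldsymbol{\beta},\boldsymbol{\gamma})$ is fixed, so no union bound is required here.

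First I would Gaussianize. Since $\boldsymbol{\nu}_i\sim\mathrm{N}(\mathbf{0},\sigma^2\mathbf{V}_i(\boldsymbol{\gamma}^0))$, write $\boldsymbol{\nu}_i=\sigma\mathbf{V}_i^{1/2}(\boldsymbol{\gamma}^0)\mathbf{g}_i$ with $\mathbf{g}_i\sim\mathrm{N}(\mathbf{0},\mathbf{I}_{n_i})$ independent across $i$, and set $\mathbf{g}=(\mathbf{g}_1^\top,\dots,\mathbf{g}_m^\top)^\top\sim\mathrm{N}(\mathbf{0},\mathbf{I}_n)$. Defining $\mathbf{M}_i=\mathbf{V}_i^{1/2}(\boldsymbol{\gamma}^0)\mathbf{V}_i^{-1}(\boldsymbol{\gamma})\mathbf{V}_i^{1/2}(\boldsymbol{\gamma}^0)$ and the block-diagonal matrix $\mathbf{M}=\operatorname{bdiag}(\mathbf{M}_1,\dots,\mathbf{M}_m)$, the cyclic property of the trace gives $\operatorname{tr}(\mathbf{M}_i)=\operatorname{tr}\{\mathbf{V}_i^{-1}(\boldsymbol{\gamma})\mathbf{V}_i(\boldsymbol{\gamma}^0)\}$, so that $\sigma^{-2}\boldsymbol{\nu}_i^\top\mathbf{V}_i^{-1}(\boldsymbol{\gamma})\boldsymbol{\nu}_i=\mathbf{g}_i^\top\mathbf{M}_i\mathbf{g}_i$ and the whole bracketed sum equals $\mathbf{g}^\top\mathbf{M}\mathbf{g}-\operatorname{tr}(\mathbf{M})$. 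Each $\mathbf{M}_i$ is a congruence of the positive-definite matrix $\mathbf{V}_i^{-1}(\boldsymbol{\gamma})$, hence symmetric positive semi-definite, and so is $\mathbf{M}$. Next I would invoke the Hanson--Wright inequality (equivalently, since $\mathbf{M}\succeq\mathbf{0}$, the Laurent--Massart tail bound for weighted sums of $\chi^2_1$ variables applied to the eigenvalues of $\mathbf{M}$): there is a universal $c>0$ such that
\begin{equation*}
\Pr\left[\left|\mathbf{g}^\top\mathbf{M}\mathbf{g}-\operatorname{tr}(\mathbf{M})\right|\geq t\right]\leq2\exp\left\{-c\min\left(\frac{t^2}{\|\mathbf{M}\|_{\mathrm F}^2},\frac{t}{\|\mathbf{M}\|_2}\right)\right\},
\end{equation*}
where $\|\cdot\|_{\mathrm F}$ denotes the Frobenius norm. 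It then suffices to show $\|\mathbf{M}\|_2\leq1+\bar{\gamma}s$ and $\|\mathbf{M}\|_{\mathrm F}^2\leq n(1+\bar{\gamma}s)^2$.

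For the operator norm, $\|\mathbf{M}\|_2=\max_i\|\mathbf{M}_i\|_2\leq\max_i\|\mathbf{V}_i(\boldsymbol{\gamma}^0)\|_2\,\lambda_\mathrm{max}\{\mathbf{V}_i^{-1}(\boldsymbol{\gamma})\}$, and $\lambda_\mathrm{max}\{\mathbf{V}_i^{-1}(\boldsymbol{\gamma})\}=1/\lambda_\mathrm{min}\{\mathbf{V}_i(\boldsymbol{\gamma})\}\leq1$ because $\mathbf{V}_i(\boldsymbol{\gamma})=\mathbf{I}+\mathbf{X}_i\operatorname{diag}(\boldsymbol{\gamma})\mathbf{X}_i^\top\succeq\mathbf{I}$, exactly as in Lemma~\ref{lemma:bounded}. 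To bound $\|\mathbf{V}_i(\boldsymbol{\gamma}^0)\|_2$, let $S=\operatorname{Supp}(\boldsymbol{\gamma}^0)$ with $|S|\leq s$; then $\operatorname{diag}(\boldsymbol{\gamma}^0)\preceq\bar{\gamma}\sum_{k\in S}\mathbf{e}_k\mathbf{e}_k^\top$, hence $\mathbf{X}_i\operatorname{diag}(\boldsymbol{\gamma}^0)\mathbf{X}_i^\top\preceq\bar{\gamma}\sum_{k\in S}\mathbf{x}_{ik}\mathbf{x}_{ik}^\top$, and therefore $\lambda_\mathrm{max}\{\mathbf{X}_i\operatorname{diag}(\boldsymbol{\gamma}^0)\mathbf{X}_i^\top\}\leq\bar{\gamma}\sum_{k\in S}\|\mathbf{x}_{ik}\|_2^2\leq\bar{\gamma}s$, using $\|\mathbf{x}_{ik}\|_2\leq\|\mathbf{x}_k\|_2=1$. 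This gives $\|\mathbf{V}_i(\boldsymbol{\gamma}^0)\|_2\leq1+\bar{\gamma}s$ and hence $\|\mathbf{M}\|_2\leq1+\bar{\gamma}s$. For the Frobenius norm, each $\mathbf{M}_i$ acts on $\mathbb{R}^{n_i}$, so $\|\mathbf{M}_i\|_{\mathrm F}^2\leq n_i\|\mathbf{M}_i\|_2^2\leq n_i(1+\bar{\gamma}s)^2$, and summing over the $m$ diagonal blocks gives $\|\mathbf{M}\|_{\mathrm F}^2=\sum_{i=1}^m\|\mathbf{M}_i\|_{\mathrm F}^2\leq n(1+\bar{\gamma}s)^2$. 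Substituting these two bounds into the Hanson--Wright tail yields the claim with the stated universal constant $c$.

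The main obstacle is the spectral bookkeeping: one must propagate the factor $1+\bar{\gamma}s$ cleanly through the congruence defining $\mathbf{M}_i$, and take care that the unit-norm normalization is on the full columns $\mathbf{x}_k$ of $\mathbf{X}$, so that per cluster only $\|\mathbf{x}_{ik}\|_2\leq1$ holds, which is precisely what makes $\sum_{k\in S}\|\mathbf{x}_{ik}\|_2^2\leq s$ the right cluster-level bound (rather than anything depending on the cluster sizes). Beyond this, the argument is a routine application of a standard concentration inequality.
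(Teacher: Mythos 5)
Your proposal is correct and follows essentially the same route as the paper's proof: whiten $\boldsymbol{\nu}_i$ via $\mathbf{V}_i^{1/2}(\boldsymbol{\gamma}^0)$, apply the Hanson--Wright inequality to the block-diagonal quadratic form, and bound the operator and Frobenius norms using $\lambda_{\min}\{\mathbf{V}_i(\boldsymbol{\gamma})\}\geq1$, $\|\mathbf{V}_i(\boldsymbol{\gamma}^0)\|_2\leq1+\bar{\gamma}s$, and $\|\mathbf{A}_i\|_F^2\leq n_i\|\mathbf{A}_i\|_2^2$. The only cosmetic difference is that you bound the spectral norm of the symmetric congruence $\mathbf{M}_i$ directly, whereas the paper bounds $\|\mathbf{V}_i^{-1}(\boldsymbol{\gamma})\mathbf{V}_i(\boldsymbol{\gamma}^0)\|_2$ by submultiplicativity; both give the same constant.
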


\begin{proof}
Let $\mathbf{h}_i=\sigma^{-1}\mathbf{V}_i^{-\frac{1}{2}}(\boldsymbol{\gamma}^0)\boldsymbol{\nu}_i$ so that $\mathbf{h}_i\sim\mathrm{N}(\mathbf{0},\mathbf{I})$. Additionally, let
\begin{equation*}
\mathbf{A}_i=\mathbf{V}_i^{\frac{1}{2}}(\boldsymbol{\gamma}^0)\mathbf{V}_i^{-1}(\boldsymbol{\gamma})\mathbf{V}_i^{\frac{1}{2}}(\boldsymbol{\gamma}^0).
\end{equation*}
It follows
\begin{equation*}
\frac{1}{\sigma^2}\boldsymbol{\nu}_i^\top\mathbf{V}_i^{-1}(\boldsymbol{\gamma})\boldsymbol{\nu}_i=\mathbf{h}_i^\top\mathbf{A}_i\mathbf{h}_i
\end{equation*}
and
\begin{equation*}
\operatorname{tr}\left\{\mathbf{V}_i^{-1}(\boldsymbol{\gamma})\mathbf{V}_i(\boldsymbol{\gamma}^0)\right\}=\operatorname{tr}(\mathbf{A}_i).
\end{equation*}
For any $\mathbf{h}\sim\mathrm{N}(\mathbf{0},\mathbf{I})$, matrix $\mathbf{A}$, and $t>0$, the Hanson-Wright inequality \citep{Rudelson2013} gives
\begin{equation*}
\Pr\left[|\mathbf{h}^\top\mathbf{A}\mathbf{h}-\operatorname{E}[\mathbf{h}^\top\mathbf{A}\mathbf{h}]|\geq t\right]\leq 2\exp\left\{-c\min\left(\frac{t^2}{\|\mathbf{A}\|_F^2},\frac{t}{\|\mathbf{A}\|_2}\right)\right\},
\end{equation*}
for some universal constant $c>0$. Hence, using that $\operatorname{E}\left[\mathbf{h}^\top\mathbf{A}\mathbf{h}\right]=\operatorname{tr}(\mathbf{A})$ for any $\mathbf{h}\sim\mathrm{N}(\mathbf{0},\mathbf{I})$ and matrix $\mathbf{A}$, we have
\begin{equation*}
\Pr\left[|\mathbf{h}_i^\top\mathbf{A}_i\mathbf{h}_i-\operatorname{tr}(\mathbf{A}_i)|\geq t\right]\leq2\exp\left\{-c\min\left(\frac{t^2}{\|\mathbf{A}_i\|_F^2},\frac{t}{\|\mathbf{A}_i\|_2}\right)\right\}.
\end{equation*}
Now, to take the sum over $i=1,\dots,m$, we use that
\begin{equation*}
\sum_{i=1}^m\left\{\mathbf{h}_i^\top\mathbf{A}_i\mathbf{h}_i-\operatorname{tr}(\mathbf{A}_i)\right\}=
\begin{pmatrix}
\mathbf{h}_1 \\ 
\vdots \\
\mathbf{h}_m
\end{pmatrix}^\top
\begin{pmatrix}
\mathbf{A}_1 & \hdots & \mathbf{0} \\
\vdots & \ddots & \vdots \\
\mathbf{0} & \hdots & \mathbf{A}_m
\end{pmatrix}
\begin{pmatrix}
\mathbf{h}_1 \\ 
\vdots \\
\mathbf{h}_m
\end{pmatrix}
-\operatorname{tr}\left\{
\begin{pmatrix}
\mathbf{A}_1 & \hdots & \mathbf{0} \\
\vdots & \ddots & \vdots \\
\mathbf{0} & \hdots & \mathbf{A}_m
\end{pmatrix}
\right\}
\end{equation*}
with
\begin{equation*}
\left\|
\begin{pmatrix}
\mathbf{A}_1 & \hdots & \mathbf{0} \\
\vdots & \ddots & \vdots \\
\mathbf{0} & \hdots & \mathbf{A}_m
\end{pmatrix}
\right\|_F^2=\sum_{i=1}^m\|\mathbf{A}_i\|_F^2
\end{equation*}
and
\begin{equation*}
\left\|
\begin{pmatrix}
\mathbf{A}_1 & \hdots & \mathbf{0} \\
\vdots & \ddots & \vdots \\
\mathbf{0} & \hdots & \mathbf{A}_m
\end{pmatrix}
\right\|_2=\max_{i=1,\dots,m}\|\mathbf{A}_i\|_2.
\end{equation*}
It immediately follows
\begin{equation*}
\Pr\left[\left|\sum_{i=1}^m\left\{\mathbf{h}_i^\top\mathbf{A}_i\mathbf{h}_i-\operatorname{tr}(\mathbf{A}_i)\right\}\right|\geq t\right]\leq 2\exp\left\{-c\min\left(\frac{t^2}{\sum_{i=1}^m\|\mathbf{A}_i\|_F^2},\frac{t}{\underset{i=1,\dots,m}{\max}\|\mathbf{A}_i\|_2}\right)\right\}.
\end{equation*}
We now upper bound the right-hand side by upper bounding the terms in the denominator on the right-hand side. First, we have
\begin{equation*}
\|\mathbf{A}_i\|_2=\|\mathbf{V}_i^{-1}(\boldsymbol{\gamma})\mathbf{V}_i(\boldsymbol{\gamma}^0)\|_2\leq\|\mathbf{V}_i^{-1}(\boldsymbol{\gamma})\|_2\|\mathbf{V}_i(\boldsymbol{\gamma}^0)\|_2\leq1+\bar{\gamma}s,
\end{equation*}
where we use that
\begin{equation*}
\|\mathbf{V}_i^{-1}(\boldsymbol{\gamma})\|_2=\frac{1}{\lambda_\mathrm{min}\{\mathbf{V}_i(\boldsymbol{\gamma})\}}\leq\frac{1}{1}=1,
\end{equation*}
which follows from the lower bound on $\lambda_\mathrm{min}\{\mathbf{V}_i(\boldsymbol{\gamma})\}$ in the proof of Lemma~\ref{lemma:bounded}, and
\begin{equation*}
\|\mathbf{V}_i(\boldsymbol{\gamma}^0)\|_2\leq1+\sum_{k:\gamma_k^0\neq0}\gamma_k^0\|\mathbf{x}_{ik}\|_2^2\leq1+s\bar{\gamma},
\end{equation*}
since the columns of $\mathbf{X}$ have unit $\ell_2$ norm and $\boldsymbol{\gamma}^0$ has $s$ nonzero components. Next, we use that $\|\mathbf{A}_i\|_F^2\leq n_i\|\mathbf{A}_i\|_2^2$ in tandem with the above result to get
\begin{equation*}
\sum_{i=1}^m\|\mathbf{A}_i\|_F^2\leq\sum_{i=1}^mn_i\|\mathbf{A}_i\|_2^2\leq\sum_{i=1}^mn_i(1+\bar{\gamma}s)^2=\ndot(1+\bar{\gamma}s)^2.
\end{equation*}
Finally, we conclude
\begin{equation*}
\Pr\left[\left|\sum_{i=1}^m\left\{\mathbf{h}_i^\top\mathbf{A}_i\mathbf{h}_i-\operatorname{tr}(\mathbf{A}_i)\right\}\right|\geq t\right]\leq2\exp\left\{-c\min\left(\frac{t^2}{\ndot(1+\bar{\gamma}s)^2},\frac{t}{1+\bar{\gamma}s}\right)\right\}.
\end{equation*}
\end{proof}

\begin{lemma}
\label{lemma:prob2}
Let $(\boldsymbol{\beta},\boldsymbol{\gamma})\in\mathcal{C}(s)$. Then, for any $t>0$, it holds
\begin{equation*}
\Pr\left[\left|\frac{1}{\sigma^2}\sum_{i=1}^m2(\boldsymbol{\beta}^0-\boldsymbol{\beta})^\top\mathbf{X}_i^\top\mathbf{V}_i^{-1}(\boldsymbol{\gamma})\boldsymbol{\nu}_i\right|\geq t\right]\leq2\exp\left(-\frac{t^2}{128\sigma^{-2}(1+\bar{\gamma}s)s^2\bar{\beta}^2}\right).
\end{equation*}
\end{lemma}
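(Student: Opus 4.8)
\textbf{Proof proposal for Lemma~\ref{lemma:prob2}.} The plan is to recognize the quantity in question as a mean-zero Gaussian random variable and then apply a standard Gaussian tail bound, so the only real work is computing (and bounding) its variance. Write
\begin{equation*}
Z:=\frac{1}{\sigma^2}\sum_{i=1}^m2(\boldsymbol{\beta}^0-\boldsymbol{\beta})^\top\mathbf{X}_i^\top\mathbf{V}_i^{-1}(\boldsymbol{\gamma})\boldsymbol{\nu}_i=\sum_{i=1}^m\mathbf{a}_i^\top\boldsymbol{\nu}_i,\qquad\mathbf{a}_i:=\frac{2}{\sigma^2}\mathbf{V}_i^{-1}(\boldsymbol{\gamma})\mathbf{X}_i(\boldsymbol{\beta}^0-\boldsymbol{\beta}).
\end{equation*}
Since the $\boldsymbol{\nu}_i$ are independent mean-zero Gaussian vectors with $\operatorname{Cov}(\boldsymbol{\nu}_i)=\sigma^2\mathbf{V}_i(\boldsymbol{\gamma}^0)$, $Z$ is a fixed linear functional of them, hence $Z\sim\mathrm{N}(0,v)$ with
\begin{equation*}
v=\sum_{i=1}^m\sigma^2\mathbf{a}_i^\top\mathbf{V}_i(\boldsymbol{\gamma}^0)\mathbf{a}_i=\frac{4}{\sigma^2}\sum_{i=1}^m(\boldsymbol{\beta}^0-\boldsymbol{\beta})^\top\mathbf{X}_i^\top\mathbf{V}_i^{-1}(\boldsymbol{\gamma})\mathbf{V}_i(\boldsymbol{\gamma}^0)\mathbf{V}_i^{-1}(\boldsymbol{\gamma})\mathbf{X}_i(\boldsymbol{\beta}^0-\boldsymbol{\beta}).
\end{equation*}

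Next I would bound $v$ by reusing the spectral-norm estimates already established in the proof of Lemma~\ref{lemma:bounded} (and invoked in Lemma~\ref{lemma:prob1}): namely $\|\mathbf{V}_i^{-1}(\boldsymbol{\gamma})\|_2\leq1$ and, because $\boldsymbol{\gamma}^0$ is $s$-sparse with $\|\boldsymbol{\gamma}^0\|_\infty\leq\bar{\gamma}$ and the columns of $\mathbf{X}$ have unit $\ell_2$ norm, $\|\mathbf{V}_i(\boldsymbol{\gamma}^0)\|_2\leq1+\bar{\gamma}s$. Writing $\mathbf{w}_i:=\mathbf{X}_i(\boldsymbol{\beta}^0-\boldsymbol{\beta})$, each summand is at most $\|\mathbf{V}_i^{-1}(\boldsymbol{\gamma})\|_2^2\|\mathbf{V}_i(\boldsymbol{\gamma}^0)\|_2\|\mathbf{w}_i\|_2^2\leq(1+\bar{\gamma}s)\|\mathbf{w}_i\|_2^2$, so $v\leq4\sigma^{-2}(1+\bar{\gamma}s)\sum_{i=1}^m\|\mathbf{w}_i\|_2^2=4\sigma^{-2}(1+\bar{\gamma}s)\|\mathbf{X}(\boldsymbol{\beta}^0-\boldsymbol{\beta})\|_2^2$. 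The key point here—and the only place one must resist an easy but lossy step—is to bound the \emph{aggregate} $\|\mathbf{X}(\boldsymbol{\beta}^0-\boldsymbol{\beta})\|_2$ directly rather than bounding each cluster block separately (which would cost a spurious factor of $m$): since $(\boldsymbol{\beta},\boldsymbol{\gamma}),(\boldsymbol{\beta}^0,\boldsymbol{\gamma}^0)\in\mathcal{C}(s)$, the vector $\boldsymbol{\beta}^0-\boldsymbol{\beta}$ has at most $2s$ nonzero entries each of magnitude at most $2\bar{\beta}$, and the triangle inequality with unit-norm columns gives $\|\mathbf{X}(\boldsymbol{\beta}^0-\boldsymbol{\beta})\|_2\leq\sum_k|\beta_k^0-\beta_k|\,\|\mathbf{x}_k\|_2\leq(2s)(2\bar{\beta})=4s\bar{\beta}$, hence $\|\mathbf{X}(\boldsymbol{\beta}^0-\boldsymbol{\beta})\|_2^2\leq16s^2\bar{\beta}^2$. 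Combining yields $v\leq64\sigma^{-2}(1+\bar{\gamma}s)s^2\bar{\beta}^2$.

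Finally, applying the standard Gaussian concentration bound $\Pr[|Z|\geq t]\leq2\exp(-t^2/(2v))$ and substituting the bound on $v$ gives exactly
\begin{equation*}
\Pr\left[\left|\frac{1}{\sigma^2}\sum_{i=1}^m2(\boldsymbol{\beta}^0-\boldsymbol{\beta})^\top\mathbf{X}_i^\top\mathbf{V}_i^{-1}(\boldsymbol{\gamma})\boldsymbol{\nu}_i\right|\geq t\right]\leq2\exp\left(-\frac{t^2}{128\sigma^{-2}(1+\bar{\gamma}s)s^2\bar{\beta}^2}\right),
\end{equation*}
as claimed. I do not anticipate a genuine obstacle: the argument is entirely linear-Gaussian, and the supporting spectral bounds are already in hand from the earlier lemmas; the one thing to be careful about is the bookkeeping in the variance computation (using independence across clusters correctly) and, as noted, bounding $\|\mathbf{X}(\boldsymbol{\beta}^0-\boldsymbol{\beta})\|_2$ globally to avoid an extraneous dimension factor.
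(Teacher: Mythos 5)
Your proposal is correct and follows essentially the same route as the paper's proof: identify the quantity as a fixed linear functional of the independent Gaussian noise vectors, compute its variance, bound it using $\|\mathbf{V}_i^{-1}(\boldsymbol{\gamma})\|_2\leq1$, $\|\mathbf{V}_i(\boldsymbol{\gamma}^0)\|_2\leq1+\bar{\gamma}s$, and $\|\mathbf{X}(\boldsymbol{\beta}^0-\boldsymbol{\beta})\|_2\leq4s\bar{\beta}$ via the $2s$-sparsity and $2\bar{\beta}$-magnitude bound on $\boldsymbol{\beta}^0-\boldsymbol{\beta}$, then apply the standard Gaussian tail bound. Your emphasis on bounding $\|\mathbf{X}(\boldsymbol{\beta}^0-\boldsymbol{\beta})\|_2$ in aggregate matches exactly what the paper does.
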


\begin{proof}
Let $\mathbf{h}_i=\mathbf{X}_i^\top\mathbf{V}_i^{-1}(\boldsymbol{\gamma})\boldsymbol{\nu}_i$ so that
\begin{equation*}
\frac{1}{\sigma^2}\sum_{i=1}^m2(\boldsymbol{\beta}^0-\boldsymbol{\beta})^\top\mathbf{X}_i^\top\mathbf{V}_i^{-1}(\boldsymbol{\gamma})\boldsymbol{\nu}_i=\frac{1}{\sigma^2}\sum_{i=1}^m2(\boldsymbol{\beta}^0-\boldsymbol{\beta})^\top\mathbf{h}_i.
\end{equation*}
Since $\mathbf{X}_i$ and $\mathbf{V}_i^{-1}(\boldsymbol{\gamma})$ are fixed and $\boldsymbol{\nu}_i\sim\mathrm{N}(\mathbf{0},\sigma^2\mathbf{V}_i(\boldsymbol{\gamma}^0))$, it holds
\begin{equation*}
\mathbf{h}_i\sim\mathrm{N}(\mathbf{0},\sigma^2\mathbf{X}_i^\top\mathbf{V}_i^{-1}(\boldsymbol{\gamma})\mathbf{V}_i(\boldsymbol{\gamma}^0)\mathbf{V}_i^{-1}(\boldsymbol{\gamma})\mathbf{X}_i).
\end{equation*}
Pre-multiplying $\mathbf{h}_i$ by $2\sigma^{-2}(\boldsymbol{\beta}^0-\boldsymbol{\beta})^\top$, which is also a fixed quantity, we get the scalar random variable
\begin{equation*}
\frac{2}{\sigma^2}(\boldsymbol{\beta}^0-\boldsymbol{\beta})^\top\mathbf{h}_i\sim\mathrm{N}(0,\nu_i^2),
\end{equation*}
where
\begin{equation*}
\nu_i^2=\frac{4}{\sigma^2}(\boldsymbol{\beta}^0-\boldsymbol{\beta})^\top\mathbf{X}_i^\top\mathbf{V}_i^{-1}(\boldsymbol{\gamma})\mathbf{V}_i(\boldsymbol{\gamma}^0)\mathbf{V}_i^{-1}(\boldsymbol{\gamma})\mathbf{X}_i(\boldsymbol{\beta}^0-\boldsymbol{\beta}).
\end{equation*}
Since $\boldsymbol{\nu}_i$ and $\boldsymbol{\nu}_j$ are independent for all $i\neq j$, we can sum over $i=1,\dots,m$ to get
\begin{equation*}
\frac{1}{\sigma^2}\sum_{i=1}^m2(\boldsymbol{\beta}^0-\boldsymbol{\beta})^\top\mathbf{h}_i\sim\mathrm{N}\left(0,\sum_{i=1}^m\nu_i^2\right).
\end{equation*}
We can now invoke the standard Gaussian tail bound $\Pr[|X|>t]\leq 2\exp\{-t^2/(2\nu^2)\}$ for $X\sim\mathrm{N}(0,\nu^2)$ and $t>0$ to get
\begin{equation*}
\Pr\left[\left|\frac{1}{\sigma^2}\sum_{i=1}^m2(\boldsymbol{\beta}^0-\boldsymbol{\beta})^\top\mathbf{h}_i\right|\geq t\right]\leq2\exp\left(-\frac{t^2}{2\sum_{i=1}^m\nu_i^2}\right).
\end{equation*}
Now, using that $\|\mathbf{V}_i^{-1}(\boldsymbol{\gamma})\|_2\leq1$ and $\|\mathbf{V}_i(\boldsymbol{\gamma}^0)\|_2\leq1+\bar{\gamma}s$ as in the proof of Lemma~\ref{lemma:prob1}, we can upper bound the right-hand side by upper bounding the variance term appearing in the denominator as
\begin{equation*}
\begin{split}
\sum_{i=1}^m\nu_i^2&=\sum_{i=1}^m\frac{4}{\sigma^2}(\boldsymbol{\beta}^0-\boldsymbol{\beta})^\top\mathbf{X}_i^\top\mathbf{V}_i^{-1}(\boldsymbol{\gamma})\mathbf{V}_i(\boldsymbol{\gamma}^0)\mathbf{V}_i^{-1}(\boldsymbol{\gamma})\mathbf{X}_i(\boldsymbol{\beta}^0-\boldsymbol{\beta}) \\
&\leq\sum_{i=1}^m\frac{4}{\sigma^2}(1+\bar{\gamma}s)(\boldsymbol{\beta}^0-\boldsymbol{\beta})^\top\mathbf{X}_i^\top\mathbf{X}_i(\boldsymbol{\beta}^0-\boldsymbol{\beta}) \\
&=\sum_{i=1}^m\frac{4}{\sigma^2}(1+\bar{\gamma}s)\|\mathbf{X}_i(\boldsymbol{\beta}^0-\boldsymbol{\beta})\|_2^2 \\
&=\frac{4}{\sigma^2}(1+\bar{\gamma}s)\|\mathbf{X}(\boldsymbol{\beta}^0-\boldsymbol{\beta})\|_2^2.
\end{split}
\end{equation*}
Moreover, for any $\boldsymbol{\Delta}$ satisfying $\|\boldsymbol{\Delta}\|_0\leq2s$ and $\|\boldsymbol{\Delta}\|_\infty\leq2\bar{\Delta}$ for some $\bar{\Delta}>0$, we have
\begin{equation*}
\|\mathbf{X}\boldsymbol{\Delta}\|_2=\left\|\sum_{k:\Delta_k\neq0}\Delta_k\mathbf{x}_k\right\|_2\leq\sum_{k:\Delta_k\neq0}|\Delta_k|\|\mathbf{x}_k\|_2\leq\sum_{k:\Delta_k\neq0}|\Delta_k|\leq4s\bar{\Delta}.
\end{equation*}
Taking $\boldsymbol{\Delta}=\boldsymbol{\beta}^0-\boldsymbol{\beta}$, we have $\|\boldsymbol{\Delta}\|_0\leq2s$ and $\|\boldsymbol{\Delta}\|_\infty\leq2\bar{\beta}$, and hence
\begin{equation*}
\frac{4}{\sigma^2}(1+\bar{\gamma}s)\|\mathbf{X}(\boldsymbol{\beta}^0-\boldsymbol{\beta})\|_2^2\leq\frac{64}{\sigma^2}(1+\bar{\gamma}s)s^2\bar{\beta}^2.
\end{equation*}
Finally, we conclude
\begin{equation*}
\Pr\left[\left|\frac{1}{\sigma^2}\sum_{i=1}^m2(\boldsymbol{\beta}^0-\boldsymbol{\beta})^\top\mathbf{h}_i\right|\geq t\right]\leq2\exp\left(-\frac{t^2}{128\sigma^{-2}(1+\bar{\gamma}s)s^2\bar{\beta}^2}\right).
\end{equation*}
\end{proof}

\begin{lemma}
\label{lemma:lipschitz2}
Let $\mathcal{S}_s$ denote the collection of all sets $\mathcal{A}\subseteq\{1,\dots,p\}$ with cardinality $|\mathcal{A}|\leq s$. Fix any $\mathcal{A}\in\mathcal{S}_s$. Define the constraint set
\begin{equation*}
\mathcal{B}(\mathcal{A}):=\left\{(\boldsymbol{\beta},\boldsymbol{\gamma})\in\mathbb{R}^p\times\mathbb{R}_+^p:\|\boldsymbol{\beta}\|_\infty\leq\bar{\beta},\|\boldsymbol{\gamma}\|_\infty\leq\bar{\gamma},\operatorname{Supp}(\boldsymbol{\gamma})\subseteq\operatorname{Supp}(\boldsymbol{\beta})\subseteq\mathcal{A}\right\}
\end{equation*}
and its convex hull
\begin{equation*}
\tilde{\mathcal{B}}(\mathcal{A}):=\left\{(\boldsymbol{\beta},\boldsymbol{\gamma})\in\mathbb{R}^p\times\mathbb{R}_+^p:\|\boldsymbol{\beta}\|_\infty\leq\bar{\beta},\|\boldsymbol{\gamma}\|_\infty\leq\bar{\gamma},\operatorname{Supp}(\boldsymbol{\gamma})\subseteq\mathcal{A},\operatorname{Supp}(\boldsymbol{\beta})\subseteq\mathcal{A}\right\}.
\end{equation*}
Define the random quantity $L_\mathcal{A}(\mathbf{y})$ as
\begin{equation*}
L_\mathcal{A}(\mathbf{y}):=\sup_{(\boldsymbol{\beta},\boldsymbol{\gamma})\in\tilde{\mathcal{B}}(\mathcal{A})}\;\|\nabla_{\beta_\mathcal{A},\gamma_\mathcal{A}}l(\boldsymbol{\beta},\boldsymbol{\gamma})\|_2.
\end{equation*}
Then almost surely $L_\mathcal{A}(\mathbf{y})<\infty$ and on that event $l(\boldsymbol{\beta},\boldsymbol{\gamma})$ is Lipschitz continuous on $\mathcal{B}(\mathcal{A})$, i.e., it holds
\begin{equation*}
|l(\boldsymbol{\beta},\boldsymbol{\gamma})-l(\boldsymbol{\beta}',\boldsymbol{\gamma}')|\leq L_\mathcal{A}(\mathbf{y})\sqrt{\|\boldsymbol{\beta}-\boldsymbol{\beta}'\|_2^2+\|\boldsymbol{\gamma}-\boldsymbol{\gamma}'\|_2^2}
\end{equation*}
for any $(\boldsymbol{\beta},\boldsymbol{\gamma})\in\mathcal{B}(\mathcal{A})$ and $(\boldsymbol{\beta}',\boldsymbol{\gamma}')\in\mathcal{B}(\mathcal{A})$. Moreover, for any $\delta\in(0,1]$, it holds
\begin{equation*}
\Pr\left[\max_{\mathcal{A}\in\mathcal{S}_s}\;L_\mathcal{A}(\mathbf{y})\leq L(\delta)\right]\geq1-\delta,
\end{equation*}
where $L(\delta)$ is a fixed quantity defined as
\begin{equation*}
L(\delta):=\sqrt{2s}\max\left(\frac{2}{\sigma^2}\{2\bar{\beta}s+g(\delta)\},1+\frac{1}{\sigma^2}\{2\bar{\beta}s+g(\delta)\}^2\right)
\end{equation*}
with
\begin{equation*}
g(\delta):=\sigma\sqrt{1+\bar{\gamma}s}\sqrt{\ndot+2\sqrt{\ndot\log\left(\frac{1}{\delta}\right)}+2\log\left(\frac{1}{\delta}\right)}.
\end{equation*}
\end{lemma}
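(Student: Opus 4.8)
The plan is to establish Lipschitz continuity by bounding the gradient of $l$ uniformly over the convex hull $\tilde{\mathcal{B}}(\mathcal{A})$, then promote the per-set bound to a union bound over all $\binom{p}{s}$ subsets. Since $l$ is continuously differentiable on the convex set $\tilde{\mathcal{B}}(\mathcal{A})$ (the matrices $\mathbf{V}_i(\boldsymbol{\gamma})$ stay positive definite there, as shown in the proof of Lemma~\ref{lemma:bounded}), the mean value inequality immediately gives $|l(\boldsymbol{\beta},\boldsymbol{\gamma})-l(\boldsymbol{\beta}',\boldsymbol{\gamma}')|\leq L_\mathcal{A}(\mathbf{y})\sqrt{\|\boldsymbol{\beta}-\boldsymbol{\beta}'\|_2^2+\|\boldsymbol{\gamma}-\boldsymbol{\gamma}'\|_2^2}$ provided $L_\mathcal{A}(\mathbf{y})<\infty$, where only the coordinates in $\mathcal{A}$ can change so the relevant gradient is $\nabla_{\beta_\mathcal{A},\gamma_\mathcal{A}}l$. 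So the substance of the lemma is the high-probability upper bound on $\max_{\mathcal{A}\in\mathcal{S}_s}L_\mathcal{A}(\mathbf{y})$.

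First I would write out the coordinate gradients from Proposition~\ref{prop:update}: $\nabla_{\beta_k}l=-2n A_k/S$ and $\nabla_{\gamma_k}l=\sum_i\mathbf{x}_{ik}^\top\mathbf{V}_i^{-1}\mathbf{x}_{ik}-nA_k^2/S$, where $S=\sum_i(\mathbf{y}_i-\mathbf{X}_i\boldsymbol{\beta})^\top\mathbf{V}_i^{-1}(\mathbf{y}_i-\mathbf{X}_i\boldsymbol{\beta})$ and $A_k=\sum_i\mathbf{x}_{ik}^\top\mathbf{V}_i^{-1}(\mathbf{y}_i-\mathbf{X}_i\boldsymbol{\beta})$. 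Using $\|\mathbf{V}_i^{-1}(\boldsymbol{\gamma})\|_2\leq1$, $\|\mathbf{x}_{ik}\|_2\leq1$ (unit-norm columns), and the key quantity $\|\mathbf{y}-\mathbf{X}\boldsymbol{\beta}\|_2$, Cauchy--Schwarz gives $|A_k|\leq\|\mathbf{X}_i^\top\mathbf{V}_i^{-1}(\mathbf{y}_i-\mathbf{X}_i\boldsymbol{\beta})\|_2\leq\|\mathbf{y}-\mathbf{X}\boldsymbol{\beta}\|_2$ (summed appropriately) and $|nA_k/S|\leq \sqrt{n}\cdot\|\mathbf{y}-\mathbf{X}\boldsymbol{\beta}\|_2/\sqrt{S}\cdot\sqrt{n}/\sqrt{S}$-type estimates; more cleanly, $nA_k^2/S\leq n\|\mathbf{y}-\mathbf{X}\boldsymbol{\beta}\|_2^2/S$ and the denominator $S$ can be related back to $\|\mathbf{y}-\mathbf{X}\boldsymbol{\beta}\|_2^2$ via $S\geq\|\mathbf{y}-\mathbf{X}\boldsymbol{\beta}\|_2^2/\lambda_{\max}$-style bounds. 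The upshot is that each $|\nabla_{\beta_k}l|$ and $|\nabla_{\gamma_k}l|$ is controlled by an expression in $\|\mathbf{y}-\mathbf{X}\boldsymbol{\beta}\|_2$, and since $\|\boldsymbol{\beta}\|_\infty\leq\bar{\beta}$ with support in $\mathcal{A}$ (size $\leq s$), we have $\|\mathbf{X}\boldsymbol{\beta}\|_2\leq s\bar{\beta}$, hence $\|\mathbf{y}-\mathbf{X}\boldsymbol{\beta}\|_2\leq\|\mathbf{y}\|_2+s\bar{\beta}$. Summing the squared coordinate gradients over the $\leq 2s$ active coordinates yields $L_\mathcal{A}(\mathbf{y})\leq\sqrt{2s}\max\bigl(\tfrac{2}{\sigma^2}\{\text{something}+s\bar{\beta}\},\,1+\tfrac{1}{\sigma^2}\{\text{something}+s\bar{\beta}\}^2\bigr)$ where the ``something'' replaces $\|\mathbf{y}\|_2$ (up to factors of two absorbed into the $2\bar{\beta}s$ term). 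The deterministic finiteness claim follows since this bound is finite for every realization of $\mathbf{y}$.

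The probabilistic piece then reduces to a tail bound on $\|\mathbf{y}\|_2$, or rather on $\|\mathbf{y}-\mathbf{X}\boldsymbol{\beta}^0\|_2=\|\boldsymbol{\nu}\|_2$ plus the deterministic $\|\mathbf{X}\boldsymbol{\beta}^0\|_2\leq s\bar{\beta}$. Since $\boldsymbol{\nu}\sim\mathrm{N}(\mathbf{0},\sigma^2\mathbf{V}(\boldsymbol{\gamma}^0))$ with $\|\mathbf{V}(\boldsymbol{\gamma}^0)\|_2\leq1+\bar{\gamma}s$, we have $\|\boldsymbol{\nu}\|_2^2\leq\sigma^2(1+\bar{\gamma}s)\|\mathbf{g}\|_2^2$ for a standard Gaussian vector $\mathbf{g}\in\mathbb{R}^n$, and a chi-squared concentration bound (e.g., Laurent--Massart) gives $\|\mathbf{g}\|_2^2\leq n+2\sqrt{n\log(1/\delta)}+2\log(1/\delta)$ with probability at least $1-\delta$; taking square roots produces exactly $g(\delta)$. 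Crucially, this bound on $\|\boldsymbol{\nu}\|_2$ is \emph{uniform} in $\mathcal{A}$ --- it doesn't depend on which subset we pick --- so no union bound over $\mathcal{S}_s$ is needed; the single event $\{\|\mathbf{g}\|_2^2\leq\cdots\}$ simultaneously controls $L_\mathcal{A}(\mathbf{y})$ for every $\mathcal{A}$. On that event, $\|\mathbf{y}-\mathbf{X}\boldsymbol{\beta}\|_2\leq\|\boldsymbol{\nu}\|_2+\|\mathbf{X}(\boldsymbol{\beta}^0-\boldsymbol{\beta})\|_2\leq g(\delta)+2s\bar{\beta}$ for any feasible $\boldsymbol{\beta}$, which plugged into the deterministic gradient bound gives $\max_{\mathcal{A}}L_\mathcal{A}(\mathbf{y})\leq L(\delta)$.

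The main obstacle I anticipate is getting the constants in the deterministic gradient bound to line up exactly with the stated form of $L(\delta)$ --- in particular, carefully tracking how $\|\mathbf{y}\|_2$ versus $\|\mathbf{y}-\mathbf{X}\boldsymbol{\beta}^0\|_2$ enters, and justifying the factor $2\bar{\beta}s$ (rather than $\bar{\beta}s$) which should come from bounding $\|\mathbf{X}(\boldsymbol{\beta}^0-\boldsymbol{\beta})\|_2\leq\|\mathbf{X}\boldsymbol{\beta}^0\|_2+\|\mathbf{X}\boldsymbol{\beta}\|_2\leq 2s\bar{\beta}$. A secondary subtlety is handling the ratio $A_k/S$ and $A_k^2/S$: one needs a clean argument that $S$ in the denominator does not cause blow-up, which should follow from $\mathbf{y}\notin\operatorname{col}(\mathbf{X})$-type positivity or, more simply here, from noting the bound $nA_k^2/S\leq n\|\mathbf{y}-\mathbf{X}\boldsymbol{\beta}\|_2^2\cdot\lambda_{\max}(\mathbf{V}_i^{-1})/S$ and that $S\geq\lambda_{\min}(\mathbf{V}_i^{-1})\|\mathbf{y}-\mathbf{X}\boldsymbol{\beta}\|_2^2$ cancels the troublesome factor, leaving a term bounded by $n(1+\bar{\gamma}s)$ — though the exact route to the stated $1+\sigma^{-2}\{\cdots\}^2$ form will require some bookkeeping with the trace term $\sum_i\mathbf{x}_{ik}^\top\mathbf{V}_i^{-1}\mathbf{x}_{ik}\leq n$ in $\nabla_{\gamma_k}l$.
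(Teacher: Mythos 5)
Your overall skeleton (the mean value inequality on the convex hull $\tilde{\mathcal{B}}(\mathcal{A})$, a deterministic gradient bound expressed through $\|\mathbf{y}-\mathbf{X}\boldsymbol{\beta}\|_2$, a Laurent--Massart tail bound on $\|\boldsymbol{\nu}\|_2$ producing $g(\delta)$, and the correct observation that no union bound over $\mathcal{S}_s$ is needed because the bound is uniform in $\mathcal{A}$) matches the paper's argument. The genuine gap is that you differentiate the wrong objective. Lemma~\ref{lemma:lipschitz2} concerns the negative log-likelihood of the statistical-guarantee section, $l(\boldsymbol{\beta},\boldsymbol{\gamma})=\sum_{i=1}^m\log\det\{\sigma^2\mathbf{V}_i(\boldsymbol{\gamma})\}+\sigma^{-2}\sum_{i=1}^m(\mathbf{y}_i-\mathbf{X}_i\boldsymbol{\beta})^\top\mathbf{V}_i^{-1}(\boldsymbol{\gamma})(\mathbf{y}_i-\mathbf{X}_i\boldsymbol{\beta})$, in which $\sigma^2$ is known and \emph{not} profiled out---this is exactly why $\sigma^{-2}$ appears in the stated $L(\delta)$. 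You instead take the gradients from Proposition~\ref{prop:update}, which belong to the profiled likelihood \eqref{eq:ll} with its $n\log\{\cdot\}$ term. That substitution creates the ratio terms $A_k/S$ and $A_k^2/S$ that occupy most of your analysis, forces you to lean on the $\mathbf{y}\notin\operatorname{col}(\mathbf{X})$ assumption (a hypothesis of Theorem~\ref{thrm:converge}, not of Theorem~\ref{thrm:bound}), and yields bounds of the form $n(1+\bar{\gamma}s)$ that cannot be reconciled with the stated $L(\delta)$; the ``bookkeeping'' you defer at the end is not bookkeeping but a mismatch of objectives.

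With the correct $l$ there are no denominators at all: writing $l=l_1+l_2$, one has $\nabla_{\gamma_k}l_1=\sum_i\mathbf{x}_{ik}^\top\mathbf{V}_i^{-1}(\boldsymbol{\gamma})\mathbf{x}_{ik}$, $\nabla_{\beta_k}l_2=-\frac{2}{\sigma^2}\sum_i\mathbf{x}_{ik}^\top\mathbf{V}_i^{-1}(\boldsymbol{\gamma})(\mathbf{y}_i-\mathbf{X}_i\boldsymbol{\beta})$, and $\nabla_{\gamma_k}l_2=-\frac{1}{\sigma^2}\sum_i\left\{\mathbf{x}_{ik}^\top\mathbf{V}_i^{-1}(\boldsymbol{\gamma})(\mathbf{y}_i-\mathbf{X}_i\boldsymbol{\beta})\right\}^2$, so the elementary bounds $\|\mathbf{V}_i^{-1}(\boldsymbol{\gamma})\|_2\leq1$, $\|\mathbf{x}_{ik}\|_2\leq1$, and $\|\mathbf{X}\boldsymbol{\beta}\|_2\leq\bar{\beta}s$ give $|\nabla_{\gamma_k}l_1|\leq1$, $|\nabla_{\beta_k}l_2|\leq\frac{2}{\sigma^2}(\|\mathbf{y}\|_2+\bar{\beta}s)$, and $|\nabla_{\gamma_k}l_2|\leq\frac{1}{\sigma^2}(\|\mathbf{y}\|_2+\bar{\beta}s)^2$. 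Plugging in $\|\mathbf{y}\|_2\leq\bar{\beta}s+g(\delta)$ (your Gaussian step, which is correct and is the paper's) and aggregating over the at most $2s$ active coordinates yields exactly $L(\delta)$; the extra $\bar{\beta}s$ turning $\bar{\beta}s$ into $2\bar{\beta}s$ is the one you anticipated. The remainder of your outline---almost-sure finiteness from continuity on the compact convex hull, Lipschitz continuity via the mean value theorem, and uniformity in $\mathcal{A}$ without a union bound---then goes through as in the paper.
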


\begin{proof}
We begin by proving the first claim of the lemma that $l$ is Lipschitz with constant $L_\mathcal{A}(\mathbf{y})$. Towards this end, we evaluate the components of $\nabla_{\beta_\mathcal{A},\gamma_{\mathcal{A}}}l(\boldsymbol{\beta},\boldsymbol{\gamma})$. First, rewrite $l(\boldsymbol{\beta},\boldsymbol{\gamma})=l_1(\boldsymbol{\gamma})+l_2(\boldsymbol{\beta},\boldsymbol{\gamma})$, where
\begin{equation*}
l_1(\boldsymbol{\gamma})=\sum_{i=1}^m\log\det\{\sigma^2\mathbf{V}_i(\boldsymbol{\gamma})\}
\end{equation*}
and
\begin{equation*}
l_2(\boldsymbol{\beta},\boldsymbol{\gamma})=\frac{1}{\sigma^2}\sum_{i=1}^m(\mathbf{y}_i-\mathbf{X}_i\boldsymbol{\beta})^\top\mathbf{V}_i^{-1}(\boldsymbol{\gamma})(\mathbf{y}_i-\mathbf{X}_i\boldsymbol{\beta}).
\end{equation*}
For $l_1$, a routine derivation gives the components of the gradient with respect to $\boldsymbol{\gamma}$ as
\begin{equation*}
\nabla_{\gamma_k}l_1(\boldsymbol{\gamma})=\sum_{i=1}^m\mathbf{x}_{ik}^\top\mathbf{V}_i^{-1}(\boldsymbol{\gamma})\mathbf{x}_{ik}.
\end{equation*}
Likewise, for $l_2$, the components of the gradient with respect to $\boldsymbol{\beta}$ and $\boldsymbol{\gamma}$ are respectively given by
\begin{equation*}
\nabla_{\beta_k}l_2(\boldsymbol{\beta},\boldsymbol{\gamma})=-\frac{1}{\sigma^2}\sum_{i=1}^m2\mathbf{x}_{ik}^\top\mathbf{V}_i^{-1}(\boldsymbol{\gamma})(\mathbf{y}_i-\mathbf{X}_i\boldsymbol{\beta})
\end{equation*}
and
\begin{equation*}
\nabla_{\gamma_k}l_2(\boldsymbol{\beta},\boldsymbol{\gamma})=-\frac{1}{\sigma^2}\sum_{i=1}^m\left\{\mathbf{x}_{ik}^\top\mathbf{V}_i^{-1}(\boldsymbol{\gamma})(\mathbf{y}_i-\mathbf{X}_i\boldsymbol{\beta})\right\}^2.
\end{equation*}
All three of the above gradients are continuous and $\tilde{\mathcal{B}}(\mathcal{A})$ is compact. Recall also that $\mathbf{y}\sim\mathrm{N}(\mathbf{X}\boldsymbol{\beta}^0,\sigma^2\mathbf{V}(\boldsymbol{\gamma}^0))$, so all components of $\mathbf{y}$ are almost surely finite. Hence, the gradient norm $\|\nabla_{\beta_\mathcal{A},\gamma_{\mathcal{A}}}l(\boldsymbol{\beta},\boldsymbol{\gamma})\|_2$ attains a maximum and thus
\begin{equation*}
L_\mathcal{A}(\mathbf{y})=\sup_{(\boldsymbol{\beta},\boldsymbol{\gamma})\in\tilde{\mathcal{B}}(\mathcal{A})}\;\|\nabla_{\beta_\mathcal{A},\gamma_\mathcal{A}}l(\boldsymbol{\beta},\boldsymbol{\gamma})\|_2<\infty
\end{equation*}
almost surely. Moreover, since $\tilde{\mathcal{B}}(\mathcal{A})$ is convex, we can conclude by the mean value theorem that
\begin{equation*}
|l(\boldsymbol{\beta},\boldsymbol{\gamma})-l(\boldsymbol{\beta}',\boldsymbol{\gamma}')|\leq L_\mathcal{A}(\mathbf{y})\sqrt{\|\boldsymbol{\beta}-\boldsymbol{\beta}'\|_2^2+\|\boldsymbol{\gamma}-\boldsymbol{\gamma}'\|_2^2}
\end{equation*}
for any $\boldsymbol{\beta}'\neq\boldsymbol{\beta}$ and any $\boldsymbol{\gamma}'\neq\boldsymbol{\gamma}$ in $\tilde{\mathcal{B}}(\mathcal{A})$ almost surely. Furthermore, because $\mathcal{B}(\mathcal{A})\subseteq\tilde{\mathcal{B}}(\mathcal{A})$, the above inequality also holds for any $\boldsymbol{\beta}'\neq\boldsymbol{\beta}$ and any $\boldsymbol{\gamma}'\neq\boldsymbol{\gamma}$ in $\mathcal{B}(\mathcal{A})$, thereby establishing the first claim of the lemma that $l(\boldsymbol{\beta},\boldsymbol{\gamma})$ is almost surely $L_\mathcal{A}(\mathbf{y})$ Lipschitz on $\mathcal{B}(\mathcal{A})$. To establish the second claim of the lemma, we bound the components of $L_\mathcal{A}(\mathbf{y})$. Observe that 
\begin{equation*}
|\nabla_{\gamma_k}l_1(\boldsymbol{\gamma})|=\left|\sum_{i=1}^m\mathbf{x}_{ik}^\top\mathbf{V}_i^{-1}(\boldsymbol{\gamma})\mathbf{x}_{ik}\right|\leq\sum_{i=1}^m\|\mathbf{x}_{ik}\|_2^2\|\mathbf{V}_i^{-1}(\boldsymbol{\gamma})\|_2\leq\sum_{i=1}^m\|\mathbf{x}_{ik}\|_2^2=1,
\end{equation*}
where we use that $\|\mathbf{V}_i^{-1}(\boldsymbol{\gamma})\|_2\leq1$ as in the proof of Lemma~\ref{lemma:prob1} and that the columns of $\mathbf{X}$ have unit $\ell_2$ norm. Next, see that
\begin{equation*}
\begin{split}
|\nabla_{\beta_k}l_2(\boldsymbol{\beta},\boldsymbol{\gamma})|&=\left|-\frac{1}{\sigma^2}\sum_{i=1}^m2\mathbf{x}_{ik}^\top\mathbf{V}_i^{-1}(\boldsymbol{\gamma})(\mathbf{y}_i-\mathbf{X}_i\boldsymbol{\beta})\right| \\
&\leq\frac{1}{\sigma^2}\sum_{i=1}^m2\|\mathbf{x}_{ik}\|_2\|\mathbf{y}_i-\mathbf{X}_i\boldsymbol{\beta}\|_2 \\
&\leq\frac{2}{\sigma^2}\sqrt{\sum_{i=1}^m\|\mathbf{x}_{ik}\|_2^2}\sqrt{\sum_{i=1}^m\|\mathbf{y}_i-\mathbf{X}_i\boldsymbol{\beta}\|_2^2} \\
&=\frac{2}{\sigma^2}\|\mathbf{y}-\mathbf{X}\boldsymbol{\beta}\|_2 \\
&\leq\frac{2}{\sigma^2}(\|\mathbf{y}\|_2+\|\mathbf{X}\boldsymbol{\beta}\|_2) \\
&\leq\frac{2}{\sigma^2}(\|\mathbf{y}\|_2+\bar{\beta}s),
\end{split}
\end{equation*}
where the second inequality follows from the Cauchy-Schwarz inequality and we use that
\begin{equation*}
\|\mathbf{X}\boldsymbol{\beta}\|_2=\|\sum_{k:\beta_k\neq0}\beta_k\mathbf{x}_k\|_2\leq\sum_{k:\beta_k\neq0}|\beta_k|\|\mathbf{x}_k\|_2\leq\sum_{k:\beta_k\neq0}|\beta_k|\leq\bar{\beta}s.
\end{equation*}
Finally, observe that
\begin{equation*}
\begin{split}
|\nabla_{\gamma_k}l_2(\boldsymbol{\beta},\boldsymbol{\gamma})|&=\left|-\frac{1}{\sigma^2}\sum_{i=1}^m\left[\mathbf{x}_{ik}^\top\mathbf{V}_i^{-1}(\boldsymbol{\gamma})(\mathbf{y}_i-\mathbf{X}_i\boldsymbol{\beta})\right]^2\right| \\
&\leq\frac{1}{\sigma^2}\sum_{i=1}^m\|\mathbf{x}_{ik}\|_2^2\|\mathbf{y}_i-\mathbf{X}_i\boldsymbol{\beta}\|_2^2 \\
&=\frac{1}{\sigma^2}\|\mathbf{y}-\mathbf{X}\boldsymbol{\beta}\|_2^2 \\
&\leq\frac{1}{\sigma^2}(\|\mathbf{y}\|_2+\|\mathbf{X}\boldsymbol{\beta}\|_2)^2 \\
&\leq\frac{1}{\sigma^2}(\|\mathbf{y}\|_2+\bar{\beta}s)^2.
\end{split}
\end{equation*}
Now, to bound $\|\mathbf{y}\|_2$, we use the inequality
\begin{equation*}
\|\mathbf{y}\|_2\leq\|\mathbf{X}\boldsymbol{\beta}^0\|_2+\|\boldsymbol{\nu}\|_2=\|\mathbf{X}\boldsymbol{\beta}^0\|_2+\sigma\|\mathbf{V}^{\frac{1}{2}}(\boldsymbol{\gamma}^0)\mathbf{z}\|_2\leq\bar{\beta}s+\sigma\sqrt{\lambda_{\max}\left\{\mathbf{V}(\boldsymbol{\gamma}^0)\right\}}\|\mathbf{z}\|_2,
\end{equation*}
where $\mathbf{z}\sim\mathrm{N}(\mathbf{0},\mathbf{I})$. As in the proof of Lemma~\ref{lemma:prob1}, it holds
\begin{equation*}
\lambda_{\max}\left\{\mathbf{V}(\boldsymbol{\gamma}^0)\right\}=\max_{i=1,\dots,m}\;\lambda_{\max}\left\{\mathbf{V}_i(\boldsymbol{\gamma}^0)\right\}\leq1+\bar{\gamma}s.
\end{equation*}
Since $\mathbf{z}$ has standard normal components, a Laurent-Massart $\chi^2$ tail bound \citep{Laurent2000} gives
\begin{equation*}
\Pr\left[\|\mathbf{z}\|_2^2\geq\ndot+2\sqrt{\ndot t}+2t\right]\leq\exp(-t).
\end{equation*}
Substituting $t=\log(1/\delta)$ and taking square roots gives
\begin{equation*}
\Pr\left[\|\mathbf{z}\|_2\geq\sqrt{\ndot+2\sqrt{\ndot\log\left(\frac{1}{\delta}\right)}+2\log\left(\frac{1}{\delta}\right)}\right]\leq\delta
\end{equation*}
and hence it holds with probability at least $1-\delta$ that
\begin{equation*}
\|\mathbf{y}\|_2\leq \bar{\beta}s+g(\delta),
\end{equation*}
where
\begin{equation*}
g(\delta)=\sigma\sqrt{1+\bar{\gamma}s}\sqrt{\ndot+2\sqrt{\ndot\log\left(\frac{1}{\delta}\right)}+2\log\left(\frac{1}{\delta}\right)}.
\end{equation*}
Substituting the above result into the gradient bounds and taking the maximum yields
\begin{equation*}
L_\mathcal{A}(\mathbf{y})\leq\sqrt{2s}\max\left(\frac{2}{\sigma^2}\{2\bar{\beta}s+g(\delta)\},1+\frac{1}{\sigma^2}\{2\bar{\beta}s+g(\delta)\}^2\right)=L(\delta)
\end{equation*}
with probability at least $1-\delta$. Finally, since $L(\delta)$ does not depend on $\mathcal{A}$ it holds
\begin{equation*}
\Pr\left[\max_{\mathcal{A}\in\mathcal{S}_s}\;L_\mathcal{A}(\mathbf{y})\leq L(\delta)\right]\geq1-\delta.
\end{equation*}
\end{proof}

\begin{lemma}
\label{lemma:lipschitz3}
Let $\mathcal{S}_s$ denote the collection of all sets $\mathcal{A}\subseteq\{1,\dots,p\}$ with cardinality $|\mathcal{A}|\leq s$. Fix any $\mathcal{A}\in\mathcal{S}_s$. Define the constraint set
\begin{equation*}
\mathcal{B}(\mathcal{A}):=\left\{(\boldsymbol{\beta},\boldsymbol{\gamma})\in\mathbb{R}^p\times\mathbb{R}_+^p:\|\boldsymbol{\beta}\|_\infty\leq\bar{\beta},\|\boldsymbol{\gamma}\|_\infty\leq\bar{\gamma},\operatorname{Supp}(\boldsymbol{\gamma})\subseteq\operatorname{Supp}(\boldsymbol{\beta})\subseteq\mathcal{A}\right\}
\end{equation*}
and its convex hull
\begin{equation*}
\tilde{\mathcal{B}}(\mathcal{A}):=\left\{(\boldsymbol{\beta},\boldsymbol{\gamma})\in\mathbb{R}^p\times\mathbb{R}_+^p:\|\boldsymbol{\beta}\|_\infty\leq\bar{\beta},\|\boldsymbol{\gamma}\|_\infty\leq\bar{\gamma},\operatorname{Supp}(\boldsymbol{\gamma})\subseteq\mathcal{A},\operatorname{Supp}(\boldsymbol{\beta})\subseteq\mathcal{A}\right\}.
\end{equation*}
Define the quantity
\begin{equation*}
L_\mathcal{A}:=\sup_{(\boldsymbol{\beta},\boldsymbol{\gamma})\in\tilde{\mathcal{B}}(\mathcal{A})}\;\|\nabla_{\beta_\mathcal{A},\gamma_\mathcal{A}}\operatorname{E}[l(\boldsymbol{\beta},\boldsymbol{\gamma})]\|_2.
\end{equation*}
Then $L_\mathcal{A}<\infty$ and $\operatorname{E}[l(\boldsymbol{\beta},\boldsymbol{\gamma})]$ is Lipschitz continuous on $\mathcal{B}(\mathcal{A})$, i.e., it holds
\begin{equation*}
|\operatorname{E}[l(\boldsymbol{\beta},\boldsymbol{\gamma})]-\operatorname{E}[l(\boldsymbol{\beta}',\boldsymbol{\gamma}')]|\leq L_\mathcal{A}\sqrt{\|\boldsymbol{\beta}-\boldsymbol{\beta}'\|_2^2+\|\boldsymbol{\gamma}-\boldsymbol{\gamma}'\|_2^2}
\end{equation*}
for any $(\boldsymbol{\beta},\boldsymbol{\gamma})\in\mathcal{B}(\mathcal{A})$ and $(\boldsymbol{\beta}',\boldsymbol{\gamma}')\in\mathcal{B}(\mathcal{A})$. Moreover, it holds
\begin{equation*}
\max_{\mathcal{A}\in\mathcal{S}_s}\;L_\mathcal{A}\leq L:=\sqrt{2s}\max\left(\frac{8}{\sigma^2}\bar{\beta}s,2+\frac{16}{\sigma^2}\bar{\beta}^2s^2+\bar{\gamma}s\right).
\end{equation*}
\end{lemma}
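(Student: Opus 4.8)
The plan is to follow the template of the proof of Lemma~\ref{lemma:lipschitz2}, but exploiting the fact that $\operatorname{E}[l(\boldsymbol{\beta},\boldsymbol{\gamma})]$ is a deterministic function admitting a closed form, so no concentration argument is needed. First I would compute this expectation. Writing $\mathbf{y}_i=\mathbf{X}_i\boldsymbol{\beta}^0+\boldsymbol{\nu}_i$ with $\boldsymbol{\nu}_i\sim\mathrm{N}(\mathbf{0},\sigma^2\mathbf{V}_i(\boldsymbol{\gamma}^0))$, decompose $l=l_1+l_2$ exactly as in the proof of Lemma~\ref{lemma:lipschitz2}: the term $l_1(\boldsymbol{\gamma})=\sum_i\log\det\{\sigma^2\mathbf{V}_i(\boldsymbol{\gamma})\}$ is nonrandom, and for $l_2$ I would substitute $\mathbf{y}_i-\mathbf{X}_i\boldsymbol{\beta}=\mathbf{X}_i(\boldsymbol{\beta}^0-\boldsymbol{\beta})+\boldsymbol{\nu}_i$, expand the quadratic form into three pieces, and take expectations: the cross term vanishes by $\operatorname{E}[\boldsymbol{\nu}_i]=\mathbf{0}$, and the purely stochastic piece equals $\sigma^2\sum_i\operatorname{tr}\{\mathbf{V}_i^{-1}(\boldsymbol{\gamma})\mathbf{V}_i(\boldsymbol{\gamma}^0)\}$ since $\operatorname{E}[\boldsymbol{\nu}_i\boldsymbol{\nu}_i^\top]=\sigma^2\mathbf{V}_i(\boldsymbol{\gamma}^0)$. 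This gives
\begin{equation*}
\operatorname{E}[l(\boldsymbol{\beta},\boldsymbol{\gamma})]=l_1(\boldsymbol{\gamma})+\frac{1}{\sigma^2}\sum_{i=1}^m(\boldsymbol{\beta}^0-\boldsymbol{\beta})^\top\mathbf{X}_i^\top\mathbf{V}_i^{-1}(\boldsymbol{\gamma})\mathbf{X}_i(\boldsymbol{\beta}^0-\boldsymbol{\beta})+\sum_{i=1}^m\operatorname{tr}\{\mathbf{V}_i^{-1}(\boldsymbol{\gamma})\mathbf{V}_i(\boldsymbol{\gamma}^0)\},
\end{equation*}
which is finite since $l_1$ is finite and $l_2$ is a quadratic form in a Gaussian.

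Next I would differentiate this closed form directly, so there is no need to justify exchanging $\nabla$ and $\operatorname{E}$, using the matrix-calculus identities from the proof of Proposition~\ref{prop:update}, namely $\nabla\log\det\{\mathbf{A}(x)\}=\operatorname{tr}\{\mathbf{A}^{-1}(x)\nabla\mathbf{A}(x)\}$ and $\nabla\mathbf{A}^{-1}(x)=-\mathbf{A}^{-1}(x)\nabla\mathbf{A}(x)\mathbf{A}^{-1}(x)$, together with $\nabla_{\gamma_k}\mathbf{V}_i(\boldsymbol{\gamma})=\mathbf{x}_{ik}\mathbf{x}_{ik}^\top$. This produces $\nabla_{\beta_k}\operatorname{E}[l]=-2\sigma^{-2}\sum_i\mathbf{x}_{ik}^\top\mathbf{V}_i^{-1}(\boldsymbol{\gamma})\mathbf{X}_i(\boldsymbol{\beta}^0-\boldsymbol{\beta})$ and $\nabla_{\gamma_k}\operatorname{E}[l]=\sum_i\mathbf{x}_{ik}^\top\mathbf{V}_i^{-1}(\boldsymbol{\gamma})\mathbf{x}_{ik}-\sigma^{-2}\sum_i\{\mathbf{x}_{ik}^\top\mathbf{V}_i^{-1}(\boldsymbol{\gamma})\mathbf{X}_i(\boldsymbol{\beta}^0-\boldsymbol{\beta})\}^2-\sum_i\mathbf{x}_{ik}^\top\mathbf{V}_i^{-1}(\boldsymbol{\gamma})\mathbf{V}_i(\boldsymbol{\gamma}^0)\mathbf{V}_i^{-1}(\boldsymbol{\gamma})\mathbf{x}_{ik}$. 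These are continuous and $\tilde{\mathcal{B}}(\mathcal{A})$ is compact, so $L_\mathcal{A}<\infty$; the mean value theorem on the convex set $\tilde{\mathcal{B}}(\mathcal{A})\supseteq\mathcal{B}(\mathcal{A})$ then gives the stated Lipschitz inequality, exactly as in the proof of Lemma~\ref{lemma:lipschitz2}.

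It then remains to bound each gradient component on $\tilde{\mathcal{B}}(\mathcal{A})$. I would reuse $\|\mathbf{V}_i^{-1}(\boldsymbol{\gamma})\|_2\leq1$ and $\|\mathbf{V}_i(\boldsymbol{\gamma}^0)\|_2\leq1+\bar{\gamma}s$ (both established in the proof of Lemma~\ref{lemma:prob1}), the unit $\ell_2$ norms of the columns of $\mathbf{X}$, and $\|\mathbf{X}(\boldsymbol{\beta}^0-\boldsymbol{\beta})\|_2\leq4\bar{\beta}s$, which holds since $\boldsymbol{\beta}^0-\boldsymbol{\beta}$ has at most $2s$ nonzeros of magnitude at most $2\bar{\beta}$ (using $\bar{\beta}\geq\|\boldsymbol{\beta}^0\|_\infty$). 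Applying Cauchy--Schwarz to the $\beta_k$ gradient gives $|\nabla_{\beta_k}\operatorname{E}[l]|\leq8\bar{\beta}s/\sigma^2$, and estimating the three pieces of the $\gamma_k$ gradient term by term gives $|\nabla_{\gamma_k}\operatorname{E}[l]|\leq1+16\bar{\beta}^2s^2/\sigma^2+(1+\bar{\gamma}s)=2+16\bar{\beta}^2s^2/\sigma^2+\bar{\gamma}s$. Since $\nabla_{\beta_\mathcal{A},\gamma_\mathcal{A}}\operatorname{E}[l]$ has at most $2s$ components, $\|\nabla_{\beta_\mathcal{A},\gamma_\mathcal{A}}\operatorname{E}[l]\|_2\leq\sqrt{2s}\max(8\bar{\beta}s/\sigma^2,\,2+16\bar{\beta}^2s^2/\sigma^2+\bar{\gamma}s)=L$, and since $L$ is independent of $\mathcal{A}$ this yields $\max_{\mathcal{A}\in\mathcal{S}_s}L_\mathcal{A}\leq L$. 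There is no genuine obstacle here, as the deterministic closed form sidesteps the concentration machinery of Lemma~\ref{lemma:lipschitz2}; the only care required is the bookkeeping across the three contributions to the $\gamma_k$ gradient and keeping the sparsity and box factors straight.
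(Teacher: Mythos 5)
Your proposal is correct and follows essentially the same route as the paper's proof: write out the closed form of $\operatorname{E}[l]$ (the paper states the same expression for $\operatorname{E}[l_2]$ directly), differentiate it with the same matrix-calculus identities, invoke continuity plus compactness of $\tilde{\mathcal{B}}(\mathcal{A})$ and the mean value theorem for the Lipschitz claim, and then bound the gradient components using $\|\mathbf{V}_i^{-1}(\boldsymbol{\gamma})\|_2\leq1$, $\|\mathbf{V}_i(\boldsymbol{\gamma}^0)\|_2\leq1+\bar{\gamma}s$, unit column norms, and $\|\mathbf{X}(\boldsymbol{\beta}^0-\boldsymbol{\beta})\|_2\leq4\bar{\beta}s$, arriving at exactly the same constants. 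The only cosmetic difference is that you spell out the expectation computation (vanishing cross term, trace identity) which the paper leaves implicit.
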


\begin{proof}
We start by establishing the first claim of the lemma that the expectation of $l$ is Lipschitz with constant $L_\mathcal{A}$. Towards that end, we consider the components of $\nabla_{\beta_\mathcal{A},\gamma_\mathcal{A}}\operatorname{E}[l(\boldsymbol{\beta},\boldsymbol{\gamma})]$. First, rewrite $\operatorname{E}[l(\boldsymbol{\beta},\boldsymbol{\gamma})]=\operatorname{E}[l_1(\boldsymbol{\gamma})]+\operatorname{E}[l_2(\boldsymbol{\beta},\boldsymbol{\gamma})]$, where
\begin{equation*}
\operatorname{E}[l_1(\boldsymbol{\gamma})]=\operatorname{E}\left[\sum_{i=1}^m\log\det\{\sigma^2\mathbf{V}_i(\boldsymbol{\gamma})\}\right]=\sum_{i=1}^m\log\det\{\sigma^2\mathbf{V}_i(\boldsymbol{\gamma})\}
\end{equation*}
and
\begin{equation*}
\begin{split}
\operatorname{E}[l_2(\boldsymbol{\beta},\boldsymbol{\gamma})]&=\operatorname{E}\left[\frac{1}{\sigma^2}\sum_{i=1}^m(\mathbf{y}_i-\mathbf{X}_i\boldsymbol{\beta})^\top\mathbf{V}_i^{-1}(\boldsymbol{\gamma})(\mathbf{y}_i-\mathbf{X}_i\boldsymbol{\beta})\right] \\
&=\frac{1}{\sigma^2}\sum_{i=1}^m\left[(\boldsymbol{\beta}^0-\boldsymbol{\beta})^\top\mathbf{X}_i^\top\mathbf{V}_i^{-1}(\boldsymbol{\gamma})\mathbf{X}_i(\boldsymbol{\beta}^0-\boldsymbol{\beta})+\sigma^2\operatorname{tr}\left\{\mathbf{V}_i^{-1}(\boldsymbol{\gamma})\mathbf{V}_i(\boldsymbol{\gamma}^0)\right\}\right].
\end{split}
\end{equation*}
For the expectation of $l_1$, some standard calculations give the components of the gradient with respect to $\boldsymbol{\gamma}$ as
\begin{equation*}
\nabla_{\gamma_k}\operatorname{E}[l_1(\boldsymbol{\gamma})]=\sum_{i=1}^m\mathbf{x}_{ik}^\top\mathbf{V}_i^{-1}(\boldsymbol{\gamma})\mathbf{x}_{ik}.
\end{equation*}
Similarly, for the expectation of $l_2$, the components of the gradient with respect to $\boldsymbol{\beta}$ and $\boldsymbol{\gamma}$ are respectively given as
\begin{equation*}
\nabla_{\beta_k}\operatorname{E}[l_2(\boldsymbol{\beta},\boldsymbol{\gamma})]=-\frac{1}{\sigma^2}\sum_{i=1}^m2\mathbf{x}_{ik}^\top\mathbf{V}_i^{-1}(\boldsymbol{\gamma})\mathbf{X}_i(\boldsymbol{\beta}^0-\boldsymbol{\beta})
\end{equation*}
and
\begin{equation*}
\nabla_{\gamma_k}\operatorname{E}[l_2(\boldsymbol{\beta},\boldsymbol{\gamma})]=-\frac{1}{\sigma^2}\sum_{i=1}^m\left\{\mathbf{x}_{ik}^\top\mathbf{V}_i^{-1}(\boldsymbol{\gamma})\mathbf{X}_i(\boldsymbol{\beta}^0-\boldsymbol{\beta})\right\}^2-\sum_{i=1}^m\mathbf{x}_{ik}^\top\mathbf{V}_i^{-1}(\boldsymbol{\gamma})\mathbf{V}_i(\boldsymbol{\gamma}^0)\mathbf{V}_i^{-1}(\boldsymbol{\gamma})\mathbf{x}_{ik}.
\end{equation*}
The above three gradients are continuous and $\tilde{\mathcal{B}}(\mathcal{A})$ is compact. Thus, the gradient norm $\|\nabla_{\beta_\mathcal{A},\gamma_\mathcal{A}}\operatorname{E}[l(\boldsymbol{\beta},\boldsymbol{\gamma})]\|_2$ attains a maximum and so it holds
\begin{equation*}
L_\mathcal{A}=\sup_{(\boldsymbol{\beta},\boldsymbol{\gamma})\in\tilde{\mathcal{B}}(\mathcal{A})}\;\|\nabla_{\beta_\mathcal{A},\gamma_\mathcal{A}}\operatorname{E}[l(\boldsymbol{\beta},\boldsymbol{\gamma})]\|_2<\infty.
\end{equation*}
Furthermore, since $\tilde{\mathcal{B}}(\mathcal{A})$ is convex, we can apply the mean value theorem to get
\begin{equation*}
|\operatorname{E}[l(\boldsymbol{\beta},\boldsymbol{\gamma})]-\operatorname{E}[l(\boldsymbol{\beta}',\boldsymbol{\gamma}')]|\leq L_\mathcal{A}\sqrt{\|\boldsymbol{\beta}-\boldsymbol{\beta}'\|_2^2+\|\boldsymbol{\gamma}-\boldsymbol{\gamma}'\|_2^2}
\end{equation*}
for any $\boldsymbol{\beta}'\neq\boldsymbol{\beta}$ and any $\boldsymbol{\gamma}'\neq\boldsymbol{\gamma}$ in $\tilde{\mathcal{B}}(\mathcal{A})$. Moreover, because $\mathcal{B}(\mathcal{A})\subseteq\tilde{\mathcal{B}}(\mathcal{A})$, the above inequality also holds for any $\boldsymbol{\beta}'\neq\boldsymbol{\beta}$ and any $\boldsymbol{\gamma}'\neq\boldsymbol{\gamma}$ in $\mathcal{B}(\mathcal{A})$, thereby proving the first claim of the lemma that $\operatorname{E}[l(\boldsymbol{\beta},\boldsymbol{\gamma})]$ is $L_\mathcal{A}$ Lipschitz on $\mathcal{B}(\mathcal{A})$. To prove the second claim of the lemma, we bound the components of $L_\mathcal{A}$. See that 
\begin{equation*}
|\nabla_{\gamma_k}\operatorname{E}[l_1(\boldsymbol{\gamma})]|=\left|\sum_{i=1}^m\mathbf{x}_{ik}^\top\mathbf{V}_i^{-1}(\boldsymbol{\gamma})\mathbf{x}_{ik}\right|\leq\sum_{i=1}^m\|\mathbf{x}_{ik}\|_2^2\|\mathbf{V}_i^{-1}(\boldsymbol{\gamma})\|_2\leq\sum_{i=1}^m\|\mathbf{x}_{ik}\|_2^2=1.
\end{equation*}
Next, observe that
\begin{equation*}
\begin{split}
|\nabla_{\beta_k}\operatorname{E}[l_2(\boldsymbol{\beta},\boldsymbol{\gamma})]|&=\left|-\frac{1}{\sigma^2}\sum_{i=1}^m2\mathbf{x}_{ik}^\top\mathbf{V}_i^{-1}(\boldsymbol{\gamma})\mathbf{X}_i(\boldsymbol{\beta}^0-\boldsymbol{\beta})\right| \\
&\leq\frac{1}{\sigma^2}\sum_{i=1}^m2\|\mathbf{x}_{ik}\|_2\|\mathbf{X}_i(\boldsymbol{\beta}^0-\boldsymbol{\beta})\|_2 \\
&\leq\frac{2}{\sigma^2}\sqrt{\sum_{i=1}^m\|\mathbf{x}_{ik}\|_2^2}\sqrt{\sum_{i=1}^m\|\mathbf{X}_i(\boldsymbol{\beta}^0-\boldsymbol{\beta})\|_2^2} \\
&=\frac{2}{\sigma^2}\|\mathbf{X}(\boldsymbol{\beta}^0-\boldsymbol{\beta})\|_2 \\
&\leq\frac{8}{\sigma^2}\bar{\beta}s,
\end{split}
\end{equation*}
where we employ the same tricks used in the previous lemma. Finally, see that
\begin{equation*}
\begin{split}
|\nabla_{\gamma_k}\operatorname{E}[l_2(\boldsymbol{\beta},\boldsymbol{\gamma})]|&=\left|-\frac{1}{\sigma^2}\sum_{i=1}^m\left\{\mathbf{x}_{ik}^\top\mathbf{V}_i^{-1}(\boldsymbol{\gamma})\mathbf{X}_i(\boldsymbol{\beta}^0-\boldsymbol{\beta})\right\}^2\right. \\
&\hspace{2in}\left.-\sum_{i=1}^m\mathbf{x}_{ik}^\top\mathbf{V}_i^{-1}(\boldsymbol{\gamma})\mathbf{V}_i(\boldsymbol{\gamma}^0)\mathbf{V}_i^{-1}(\boldsymbol{\gamma})\mathbf{x}_{ik}\right| \\
&\leq\frac{1}{\sigma^2}\sum_{i=1}^m\|\mathbf{x}_{ik}\|_2^2\|\mathbf{X}_i(\boldsymbol{\beta}^0-\boldsymbol{\beta})\|_2^2+\sum_{i=1}^m\|\mathbf{x}_{ik}\|_2^2\|\mathbf{V}_i(\boldsymbol{\gamma}^0)\|_2 \\
&\leq\frac{1}{\sigma^2}\|\mathbf{X}(\boldsymbol{\beta}^0-\boldsymbol{\beta})\|_2^2+1+\bar{\gamma}s \\
&\leq\frac{16}{\sigma^2}\bar{\beta}^2s^2+1+\bar{\gamma}s.
\end{split}
\end{equation*}
Taking the maximum over the gradient bounds yields
\begin{equation*}
L_\mathcal{A}\leq\sqrt{2s}\max\left(\frac{8}{\sigma^2}\bar{\beta}s,2+\frac{16}{\sigma^2}\bar{\beta}^2s^2+\bar{\gamma}s\right)=L,
\end{equation*}
and, since $L$ does not depend on $\mathcal{A}$, it holds
\begin{equation*}
\max_{\mathcal{A}\in\mathcal{S}_s}\;L_\mathcal{A}\leq L.
\end{equation*}
\end{proof}

\subsection{Proof of Main Result}

Since $(\hat{\boldsymbol{\beta}},\hat{\boldsymbol{\gamma}})$ minimizes the negative log-likelihood, we necessarily have
\begin{equation*}
l(\hat{\boldsymbol{\beta}},\hat{\boldsymbol{\gamma}})-l(\boldsymbol{\beta}^0,\boldsymbol{\gamma}^0)\leq0.
\end{equation*}
Subtracting and adding $\operatorname{E}[l(\hat{\boldsymbol{\beta}},\hat{\boldsymbol{\gamma}})]$ and $\operatorname{E}[l(\boldsymbol{\beta}^0,\boldsymbol{\gamma}^0)]$ on the left-hand side gives
\begin{equation*}
l(\hat{\boldsymbol{\beta}},\hat{\boldsymbol{\gamma}})-\operatorname{E}[l(\hat{\boldsymbol{\beta}},\hat{\boldsymbol{\gamma}})]+\operatorname{E}[l(\hat{\boldsymbol{\beta}},\hat{\boldsymbol{\gamma}})]-\operatorname{E}[l(\boldsymbol{\beta}^0,\boldsymbol{\gamma}^0)]+\operatorname{E}[l(\boldsymbol{\beta}^0,\boldsymbol{\gamma}^0)]-l(\boldsymbol{\beta}^0,\boldsymbol{\gamma}^0)\leq0.
\end{equation*}
Now, isolating $\operatorname{E}[l(\hat{\boldsymbol{\beta}},\hat{\boldsymbol{\gamma}})]-\operatorname{E}[l(\boldsymbol{\beta}^0,\boldsymbol{\gamma}^0)]$ on the left-hand side gives the upper bound
\begin{equation}
\label{eq:bound1}
\begin{split}
\operatorname{E}[l(\hat{\boldsymbol{\beta}},\hat{\boldsymbol{\gamma}})]-\operatorname{E}[l(\boldsymbol{\beta}^0,\boldsymbol{\gamma}^0)]&\leq\operatorname{E}[l(\hat{\boldsymbol{\beta}},\hat{\boldsymbol{\gamma}})]-l(\hat{\boldsymbol{\beta}},\hat{\boldsymbol{\gamma}})+l(\boldsymbol{\beta}^0,\boldsymbol{\gamma}^0)-\operatorname{E}[l(\boldsymbol{\beta}^0,\boldsymbol{\gamma}^0)] \\
&\leq\left|\operatorname{E}[l(\hat{\boldsymbol{\beta}},\hat{\boldsymbol{\gamma}})]-l(\hat{\boldsymbol{\beta}},\hat{\boldsymbol{\gamma}})\right|+\left|l(\boldsymbol{\beta}^0,\boldsymbol{\gamma}^0)-\operatorname{E}[l(\boldsymbol{\beta}^0,\boldsymbol{\gamma}^0)]\right| \\
&\leq2\sup_{(\boldsymbol{\beta},\boldsymbol{\gamma})\in\mathcal{C}(s)}\;\left|l(\boldsymbol{\beta},\boldsymbol{\gamma})-\operatorname{E}[l(\boldsymbol{\beta},\boldsymbol{\gamma})]\right|.
\end{split}
\end{equation}
We now seek a high-probability upper bound for the supremum on the right-hand side. Towards this end, consider the difference inside the supremum, given by
\begin{equation*}
\begin{split}
&l(\boldsymbol{\beta},\boldsymbol{\gamma})-\operatorname{E}[l(\boldsymbol{\beta},\boldsymbol{\gamma})]=\sum_{i=1}^m\log\det\{\sigma^2\mathbf{V}_i(\boldsymbol{\gamma})\}+\frac{1}{\sigma^2}\sum_{i=1}^m(\mathbf{y}_i-\mathbf{X}_i\boldsymbol{\beta})^\top\mathbf{V}_i^{-1}(\boldsymbol{\gamma})(\mathbf{y}_i-\mathbf{X}_i\boldsymbol{\beta}) \\
&\hspace{1.2in}-\operatorname{E}\left[\sum_{i=1}^m\log\det\{\sigma^2\mathbf{V}_i(\boldsymbol{\gamma})\}+\frac{1}{\sigma^2}\sum_{i=1}^m(\mathbf{y}_i-\mathbf{X}_i\boldsymbol{\beta})^\top\mathbf{V}_i^{-1}(\boldsymbol{\gamma})(\mathbf{y}_i-\mathbf{X}_i\boldsymbol{\beta})\right].
\end{split}
\end{equation*}
Since the log determinant terms do not depend on $\boldsymbol{\nu}_i$, which is the only source of randomness, we can simplify the above expression to
\begin{equation*}
\begin{split}
l(\boldsymbol{\beta},\boldsymbol{\gamma})-\operatorname{E}[l(\boldsymbol{\beta},\boldsymbol{\gamma})]&=\frac{1}{\sigma^2}\sum_{i=1}^m(\mathbf{y}_i-\mathbf{X}_i\boldsymbol{\beta})^\top\mathbf{V}_i^{-1}(\boldsymbol{\gamma})(\mathbf{y}_i-\mathbf{X}_i\boldsymbol{\beta}) \\
&\hspace{1in}-\operatorname{E}\left[\frac{1}{\sigma^2}\sum_{i=1}^m(\mathbf{y}_i-\mathbf{X}_i\boldsymbol{\beta})^\top\mathbf{V}_i^{-1}(\boldsymbol{\gamma})(\mathbf{y}_i-\mathbf{X}_i\boldsymbol{\beta})\right].
\end{split}
\end{equation*}
Expanding the quadratic terms and evaluating the expectations yields
\begin{equation*}
\begin{split}
l(\boldsymbol{\beta},\boldsymbol{\gamma})-\operatorname{E}[l(\boldsymbol{\beta},\boldsymbol{\gamma})]&=\frac{1}{\sigma^2}\sum_{i=1}^m\left[\boldsymbol{\nu}_i^\top\mathbf{V}_i^{-1}(\boldsymbol{\gamma})\boldsymbol{\nu}_i-\sigma^2\operatorname{tr}\left\{\mathbf{V}_i^{-1}(\boldsymbol{\gamma})\mathbf{V}_i(\boldsymbol{\gamma}^0)\right\}\right] \\
&\hspace{1.5in}+\frac{1}{\sigma^2}\sum_{i=1}^m2(\boldsymbol{\beta}^0-\boldsymbol{\beta})^\top\mathbf{X}_i^\top\mathbf{V}_i^{-1}(\boldsymbol{\gamma})\boldsymbol{\nu}_i.
\end{split}
\end{equation*}
For the first term on the right-hand side, Lemma~\ref{lemma:prob1} gives the probabilistic upper bound
\begin{equation*}
\begin{split}
&\Pr\left[\left|\frac{1}{\sigma^2}\sum_{i=1}^m\left[\boldsymbol{\nu}_i^\top\mathbf{V}_i^{-1}(\boldsymbol{\gamma})\boldsymbol{\nu}_i-\sigma^2\operatorname{tr}\left\{\mathbf{V}_i^{-1}(\boldsymbol{\gamma})\mathbf{V}_i(\boldsymbol{\gamma}^0)\right\}\right]\right|\geq t\right] \\
&\hspace{2.5in}\leq2\exp\left\{-c\min\left(\frac{t^2}{\ndot(1+\bar{\gamma}s)^2},\frac{t}{1+\bar{\gamma}s}\right)\right\},
\end{split}
\end{equation*}
where $c>0$ is a universal constant and $t>0$. Similarly, for the second term, Lemma~\ref{lemma:prob2} gives the probabilistic upper bound
\begin{equation*}
\Pr\left[\left|\frac{1}{\sigma^2}\sum_{i=1}^m2(\boldsymbol{\beta}^0-\boldsymbol{\beta})^\top\mathbf{X}_i^\top\mathbf{V}_i^{-1}(\boldsymbol{\gamma})\boldsymbol{\nu}_i\right|\geq t\right]\leq2\exp\left(-\frac{t^2}{128\sigma^{-2}(1+\bar{\gamma}s)s^2\bar{\beta}^2}\right).
\end{equation*}
These bounds hold for a single fixed $(\boldsymbol{\beta},\boldsymbol{\gamma})$. We now extend them to hold uniformly over $\mathcal{C}(s)$ via an $\epsilon$-net argument. Towards this end, define the set $\mathcal{B}(\mathcal{A})$ as
\begin{equation*}
\mathcal{B}(\mathcal{A}):=\left\{(\boldsymbol{\beta},\boldsymbol{\gamma})\in\mathbb{R}^p\times\mathbb{R}_+^p:\|\boldsymbol{\beta}\|_\infty\leq\bar{\beta},\|\boldsymbol{\gamma}\|_\infty\leq\bar{\gamma},\operatorname{Supp}(\boldsymbol{\gamma})\subseteq\operatorname{Supp}(\boldsymbol{\beta})\subseteq\mathcal{A}\right\}.
\end{equation*}
Then, letting $\mathcal{S}_s$ be the set of all possible active sets $\mathcal{A}$ of size at most $s$, we have
\begin{equation*}
\sup_{(\boldsymbol{\beta},\boldsymbol{\gamma})\in\mathcal{C}(s)}\;\left|l(\boldsymbol{\beta},\boldsymbol{\gamma})-\operatorname{E}\left[l(\boldsymbol{\beta},\boldsymbol{\gamma})\right]\right|=\max_{\mathcal{A}\in\mathcal{S}_s}\;\sup_{(\boldsymbol{\beta},\boldsymbol{\gamma})\in\mathcal{B}(\mathcal{A})}\;\left|l(\boldsymbol{\beta},\boldsymbol{\gamma})-\operatorname{E}\left[l(\boldsymbol{\beta},\boldsymbol{\gamma})\right]\right|.
\end{equation*}
Next, we require a useful inequality. For any $\boldsymbol{\beta}'\neq\boldsymbol{\beta}$ and any $\boldsymbol{\gamma}'\neq\boldsymbol{\gamma}$ in $\mathcal{B}(\mathcal{A})$ it holds
\begin{equation*}
\begin{split}
&\left|l(\boldsymbol{\beta},\boldsymbol{\gamma})-\operatorname{E}\left[l(\boldsymbol{\beta},\boldsymbol{\gamma})\right]\right| \\
&\hspace{0.5in}=\left|l(\boldsymbol{\beta},\boldsymbol{\gamma})-l(\boldsymbol{\beta}',\boldsymbol{\gamma}')+l(\boldsymbol{\beta}',\boldsymbol{\gamma}')-\operatorname{E}\left[l(\boldsymbol{\beta}',\boldsymbol{\gamma}')\right]+\operatorname{E}\left[l(\boldsymbol{\beta}',\boldsymbol{\gamma}')\right]-\operatorname{E}\left[l(\boldsymbol{\beta},\boldsymbol{\gamma})\right]\right| \\
&\hspace{0.5in}\leq\left|l(\boldsymbol{\beta}',\boldsymbol{\gamma}')-\operatorname{E}\left[l(\boldsymbol{\beta}',\boldsymbol{\gamma}')\right]\right|+\left|l(\boldsymbol{\beta},\boldsymbol{\gamma})-l(\boldsymbol{\beta}',\boldsymbol{\gamma}')\right|+\left|\operatorname{E}\left[l(\boldsymbol{\beta}',\boldsymbol{\gamma}')\right]-\operatorname{E}\left[l(\boldsymbol{\beta},\boldsymbol{\gamma})\right]\right| \\
&\hspace{0.5in}\leq\left|l(\boldsymbol{\beta}',\boldsymbol{\gamma}')-\operatorname{E}\left[l(\boldsymbol{\beta}',\boldsymbol{\gamma}')\right]\right|+(L_\mathcal{A}(\mathbf{y})+L_\mathcal{A})\sqrt{\|\boldsymbol{\beta}-\boldsymbol{\beta}'\|_2^2+\|\boldsymbol{\gamma}-\boldsymbol{\gamma}'\|_2^2},
\end{split}
\end{equation*}
where the last inequality follows from the first results of Lemmas~\ref{lemma:lipschitz2} and \ref{lemma:lipschitz3}, which give that $l(\boldsymbol{\beta},\boldsymbol{\gamma})$ and $\operatorname{E}[l(\boldsymbol{\beta},\boldsymbol{\gamma})]$ are Lipschitz on $\mathcal{B}(\mathcal{A})$ with constants $L_\mathcal{A}(\mathbf{y})$ (a random quantity) and $L_\mathcal{A}$ (a fixed quantity), respectively. Now, let $\mathcal{E}(\mathcal{A})$ be an $\epsilon$-net of $\mathcal{B}(\mathcal{A})$ such that for any $(\boldsymbol{\beta},\boldsymbol{\gamma})\in\mathcal{B}(\mathcal{A})$ there exists a $(\boldsymbol{\beta}',\boldsymbol{\gamma}')\in\mathcal{E}(\mathcal{A})$ with $\sqrt{\|\boldsymbol{\beta}-\boldsymbol{\beta}'\|_2^2+\|\boldsymbol{\gamma}-\boldsymbol{\gamma}'\|_2^2}\leq\epsilon$. We then have
\begin{equation*}
(L_\mathcal{A}(\mathbf{y})+L_\mathcal{A})\sqrt{\|\boldsymbol{\beta}-\boldsymbol{\beta}'\|_2^2+\|\boldsymbol{\gamma}-\boldsymbol{\gamma}'\|_2^2}\leq(L_\mathcal{A}(\mathbf{y})+L_\mathcal{A})\epsilon\leq2\max(L_\mathcal{A}(\mathbf{y}),L_\mathcal{A})\epsilon,
\end{equation*}
from which it follows
\begin{equation*}
\begin{split}
&\max_{\mathcal{A}\in\mathcal{S}_s}\;\sup_{(\boldsymbol{\beta},\boldsymbol{\gamma})\in\mathcal{B}(\mathcal{A})}\;\left|l(\boldsymbol{\beta},\boldsymbol{\gamma})-\operatorname{E}\left[l(\boldsymbol{\beta},\boldsymbol{\gamma})\right]\right| \\
&\hspace{1in}\leq\max_{\mathcal{A}\in\mathcal{S}_s}\;\max_{(\boldsymbol{\beta}',\boldsymbol{\gamma}')\in\mathcal{E}(\mathcal{A})}\;\left(\left|l(\boldsymbol{\beta}',\boldsymbol{\gamma}')-\operatorname{E}\left[l(\boldsymbol{\beta}',\boldsymbol{\gamma}')\right]\right|+2\max(L_\mathcal{A}(\mathbf{y}),L_\mathcal{A})\epsilon\right).
\end{split}
\end{equation*}
Invoking the second result of Lemma~\ref{lemma:lipschitz2} gives the high-probability bound
\begin{equation}
\label{eq:lipschitzprob}
\Pr\left[\max_{\mathcal{A}\in\mathcal{S}_s}\;L_\mathcal{A}(\mathbf{y})\leq L(\delta_1)\right]\geq1-\delta_1
\end{equation}
for $\delta_1\in(0,1]$. Similarly, by the second result of Lemma~\ref{lemma:lipschitz3}, we have the deterministic bound
\begin{equation*}
\max_{\mathcal{A}\in\mathcal{S}_s}\;L_\mathcal{A}\leq L.
\end{equation*}
Both $L(\delta_1)$ and $L$ are deterministic quantities. Define $L^\star(\delta_1)$ to be
\begin{equation*}
L^\star(\delta_1):=\max(L(\delta_1),L).
\end{equation*}
Then, with probability $1-\delta_1$, it holds
\begin{equation*}
\begin{split}
&\max_{\mathcal{A}\in\mathcal{S}_s}\;\max_{(\boldsymbol{\beta}',\boldsymbol{\gamma}')\in\mathcal{E}(\mathcal{A})}\;\left(\left|l(\boldsymbol{\beta}',\boldsymbol{\gamma}')-\operatorname{E}\left[l(\boldsymbol{\beta}',\boldsymbol{\gamma}')\right]\right|+2\max(L_\mathcal{A}(\mathbf{y}),L_\mathcal{A})\epsilon\right) \\
&\hspace{1in}\leq\max_{\mathcal{A}\in\mathcal{S}_s}\;\max_{(\boldsymbol{\beta}',\boldsymbol{\gamma}')\in\mathcal{E}(\mathcal{A})}\;\left|l(\boldsymbol{\beta}',\boldsymbol{\gamma}')-\operatorname{E}\left[l(\boldsymbol{\beta}',\boldsymbol{\gamma}')\right]\right|+2L^\star(\delta_1)\epsilon.
\end{split}
\end{equation*}
In what follows, every inequality that contains $L^\star(\delta_1)$ is stated on the event in \eqref{eq:lipschitzprob}. We now have
\begin{equation*}
\begin{split}
&\Pr\left[\sup_{(\boldsymbol{\beta},\boldsymbol{\gamma})\in\mathcal{C}(s)}\;\left|l(\boldsymbol{\beta},\boldsymbol{\gamma})-\operatorname{E}\left[l(\boldsymbol{\beta},\boldsymbol{\gamma})\right]\right|\geq t\right] \\
&\hspace{1in}\leq\Pr\left[\max_{\mathcal{A}\in\mathcal{S}_s}\;\max_{(\boldsymbol{\beta}',\boldsymbol{\gamma}')\in\mathcal{E}(\mathcal{A})}\;\left|l(\boldsymbol{\beta}',\boldsymbol{\gamma}')-\operatorname{E}\left[l(\boldsymbol{\beta}',\boldsymbol{\gamma}')\right]\right|+2L^\star(\delta_1)\epsilon\geq t\right].
\end{split}
\end{equation*}
Setting $\epsilon=t/[4L^\star(\delta_1)]$ and applying a union bound on the right-hand side gives
\begin{equation*}
\begin{split}
&\Pr\left[\max_{\mathcal{A}\in\mathcal{S}_s}\;\max_{(\boldsymbol{\beta}',\boldsymbol{\gamma}')\in\mathcal{E}(\mathcal{A})}\;\left|l(\boldsymbol{\beta}',\boldsymbol{\gamma}')-\operatorname{E}\left[l(\boldsymbol{\beta}',\boldsymbol{\gamma}')\right]\right|\geq\frac{1}{2}t\right] \\
&\hspace{2in}\leq\sum_{\mathcal{A}\in\mathcal{S}_s}\sum_{(\boldsymbol{\beta}',\boldsymbol{\gamma}')\in\mathcal{E}(\mathcal{A})}\Pr\left[\left|l(\boldsymbol{\beta}',\boldsymbol{\gamma}')-\operatorname{E}\left[l(\boldsymbol{\beta}',\boldsymbol{\gamma}')\right]\right|\geq\frac{1}{2}t\right].
\end{split}
\end{equation*}
A standard binomial coefficient bound \citep[see, e.g.,][Section 1.2.6, Exercise 67]{Knuth1997} gives
\begin{equation*}
|\mathcal{S}_s|=\binom{p}{s}\leq\left(\frac{ep}{s}\right)^s.
\end{equation*}
Next, since $\|\boldsymbol{\beta}'\|_2\leq\sqrt{s}\bar{\beta}$ and $\|\boldsymbol{\gamma}'\|_2\leq\sqrt{s}\bar{\gamma}$, a standard $\ell_2$ norm $\epsilon$-net cardinality bound \citep[see, e.g.,][Corollary 4.2.13]{Vershynin2018} gives
\begin{equation*}
|\mathcal{E}(\mathcal{A})|\leq\left(1+\frac{2\sqrt{s}\bar{\beta}}{\epsilon}\right)^s\left(1+\frac{2\sqrt{s}\bar{\gamma}}{\epsilon}\right)^s\leq\exp\left(2s\sqrt{s}\frac{\bar{\beta}+\bar{\gamma}}{\epsilon}\right)=\exp\left(8L^\star(\delta_1)s\sqrt{s}\frac{\bar{\beta}+\bar{\gamma}}{t}\right).
\end{equation*}
It now follows that
\begin{equation}
\label{eq:probbound}
\begin{split}
&\sum_{\mathcal{A}\in\mathcal{S}_s}\sum_{(\boldsymbol{\beta}',\boldsymbol{\gamma}')\in\mathcal{E}(\mathcal{A})}\Pr\left[\left|l(\boldsymbol{\beta}',\boldsymbol{\gamma}')-\operatorname{E}\left[l(\boldsymbol{\beta}',\boldsymbol{\gamma}')\right]\right|\geq\frac{1}{2}t\right]\leq\left(\frac{ep}{s}\right)^s\exp\left(8L^\star(\delta_1)s\sqrt{s}\frac{\bar{\beta}+\bar{\gamma}}{t}\right) \\
&\hspace{0.1in}\times2\left[\exp\left\{-c\min\left(\frac{t^2}{4\ndot(1+\bar{\gamma}s)^2},\frac{t}{2(1+\bar{\gamma}s)}\right)\right\}+\exp\left(-\frac{t^2}{512\sigma^{-2}(1+\bar{\gamma}s)s^2\bar{\beta}^2}\right)\right].
\end{split}
\end{equation}
We wish to bound the right-hand side by a user chosen $\delta_2\in(0,1]$. To this end, we solve for a $t$ that simultaneously satisfies
\begin{equation}
\label{eq:t1}
2\left(\frac{ep}{s}\right)^s\exp\left(8L^\star(\delta_1)s\sqrt{s}\frac{\bar{\beta}+\bar{\gamma}}{t}\right)\exp\left\{-c\min\left(\frac{t^2}{4\ndot(1+\bar{\gamma}s)^2},\frac{t}{2(1+\bar{\gamma}s)}\right)\right\}\leq\frac{\delta_2}{2}
\end{equation}
and
\begin{equation}
\label{eq:t2}
2\left(\frac{ep}{s}\right)^s\exp\left(8L^\star(\delta_1)s\sqrt{s}\frac{\bar{\beta}+\bar{\gamma}}{t}\right)\exp\left(-\frac{t^2}{512\sigma^{-2}(1+\bar{\gamma}s)s^2\bar{\beta}^2}\right)\leq\frac{\delta_2}{2}.
\end{equation}
Some routine algebra shows that \eqref{eq:t1} is satisfied whenever
\begin{equation*}
t\geq t_1=(1+\bar{\gamma}s)\max\left(\sqrt{\frac{4\ndot}{c}\Pi},\frac{2}{c}\Pi\right),
\end{equation*}
where
\begin{equation*}
\Pi=s\log\left(\frac{ep}{s}\right)+8L^\star(\delta_1)s\sqrt{s}(\bar{\beta}+\bar{\gamma})+\log\left(\frac{4}{\delta_2}\right).
\end{equation*}
The above holds provided $t\geq1$, which is true for our final choice of $t$. Similarly, some standard derivations give that \eqref{eq:t2} is satisfied if
\begin{equation*}
\begin{split}
t\geq t_2&=\sqrt{512\sigma^{-2}(1+\bar{\gamma}s)s^2\bar{\beta}^2\Pi} \\
&=\frac{s\bar{\beta}}{\sigma}\sqrt{512(1+\bar{\gamma}s)\Pi}.
\end{split}
\end{equation*}
Hence, \eqref{eq:probbound} is guaranteed less than $\delta_2$ provided
\begin{equation*}
t\geq t_0=\max(t_1,t_2).
\end{equation*}
Now, setting $\delta_1=\delta/2$ and $\delta_2=\delta/2$, it follows from the previous inequalities that
\begin{equation}
\label{eq:bound2}
2\sup_{(\boldsymbol{\beta},\boldsymbol{\gamma})\in\mathcal{C}(s)}\;\left|l(\boldsymbol{\beta},\boldsymbol{\gamma})-\operatorname{E}[l(\boldsymbol{\beta},\boldsymbol{\gamma})]\right|\leq 2t_0
\end{equation}
with probability at least $1-\delta$. Finally, by the definition of KL divergence, we have
\begin{equation*}
\begin{split}
2\operatorname{KL}\left[\mathrm{N}\left(\mathbf{X}\boldsymbol{\beta}^0,\sigma^2\mathbf{V}(\boldsymbol{\gamma}^0)\right)\parallel\mathrm{N}\left(\mathbf{X}\hat{\boldsymbol{\beta}},\sigma^2\mathbf{V}(\hat{\boldsymbol{\gamma}})\right)\right]&=\operatorname{E}[-l(\boldsymbol{\beta}^0,\boldsymbol{\gamma}^0)]-\operatorname{E}[-l(\hat{\boldsymbol{\beta}},\hat{\boldsymbol{\gamma}})] \\
&=\operatorname{E}[l(\hat{\boldsymbol{\beta}},\hat{\boldsymbol{\gamma}})]-\operatorname{E}[l(\boldsymbol{\beta}^0,\boldsymbol{\gamma}^0)],
\end{split}
\end{equation*}
from which we can combine \eqref{eq:bound1} and \eqref{eq:bound2} to complete the proof.

\section{Implementation Details}
\label{app:implementation}

The configuration of each method is as follows.
\begin{itemize}
\item \texttt{glmmsel} is configured with a sequence of 100 values of $\lambda$ chosen using Proposition~\ref{prop:lambda}, where the first $\lambda$ sets all coefficients to zero. For each value of $\lambda$, we use a sequence of 10 values of $\alpha$ equispaced between $0.1$ and $1$.
\item \texttt{pysr3} is configured with the $\ell_0$ regularizer and a grid of $11\times 11$ values of $(k_1,k_2)$ equispaced between $0$ and $10$, where $k_1$ and $k_2$ are the maximum allowable number of nonzeros in the fixed effects and random effects, respectively.
\item \texttt{L0Learn} is configured with the $\ell_0$ regularizer and a sequence of 100 values of $\lambda$ chosen using the default method of the package.
\item \texttt{glmnet} is configured with the relaxed $\ell_1$ regularizer and a sequence of 100 values of $\lambda$ chosen using the default method of the package. The relaxation parameter $\gamma=0$.
\item \texttt{ncvreg} is configured with the minimax concave penalty and a sequence of 100 values of $\lambda$ chosen using the default method of the package. The concavity parameter $\gamma=1+10^{-4}$.
\item \texttt{glmmPen} is configured with the minimax concave penalty and a grid of $10\times 10$ values of $(\lambda_1,\lambda_2)$ chosen using the default method of the package, where $\lambda_1$ and $\lambda_2$ are the penalty parameters for the fixed effects and random effects, respectively.\footnote{\texttt{glmmPen} uses only 10 values each of $\lambda_1$ and $\lambda_2$ because its run time is significantly longer than the other methods. However, it only runs in the $p=10$ setting where \texttt{glmmsel} also uses at most 10 values of $\lambda$. Indeed, by Proposition~\ref{prop:lambda}, the number of active predictors changes between consecutive values of $\lambda$, and hence there can be at most 10 values when $p=10$.} The concavity parameter $\gamma=1+10^{-4}$.
\item \texttt{rpql} is configured with the minimax concave penalty and a grid of $30\times 30$ values of $(\lambda_1,\lambda_2)$ chosen using the default method of \texttt{glmmPen}, where $\lambda_1$ and $\lambda_2$ are the penalty parameters for the fixed effects and random effects, respectively. The package uses \texttt{glmmPen}'s grid construction method as it does not provide a method for computing the $\lambda$ sequence. The concavity parameter $\gamma=1+10^{-4}$.
\end{itemize}
The specific choices of $\gamma=0$ in \texttt{glmnet} and $\gamma=1+10^{-4}$ in \texttt{ncvreg}, \texttt{glmmPen}, and \texttt{rpql} are made to eliminate continuous shrinkage effects as far as possible. These parameter choices bring the behavior of these methods as close as possible to an $\ell_0$ regularizer, ensuring that the comparison across methods focuses on sparsity rather than bias.

\section{Small-Scale Experiments}
\label{app:small}

To accommodate a broader set of baselines, namely \texttt{glmmPen} and \texttt{rpql}, we run several small-scale experiments with just $p=10$ predictors. Figures~\ref{fig:gaussian-10-5} and \ref{fig:bernoulli-10-5} report the results for Gaussian and Bernoulli responses, respectively.
\begin{figure}[t]
\centering
\input{Figures/gaussian-10-5.tex}
\caption{Comparisons on synthetic data with Gaussian response. The number of predictors $p=10$ with 5 nonzero fixed effects and 3 nonzero random effects. The correlation level $\rho=0.5$. The averages (points) and standard errors (error bars) are measured over 100 datasets. In the upper right panel, the dashed horizontal line indicates the true number of predictors with nonzero effects.}
\label{fig:gaussian-10-5}
\end{figure}
\begin{figure}[t]
\centering
\input{Figures/bernoulli-10-5.tex}
\caption{Comparisons on synthetic data with Bernoulli response. The number of predictors $p=10$ with 5 nonzero fixed effects and 3 nonzero random effects. The correlation level $\rho=0.5$. The averages (points) and standard errors (error bars) are measured over 100 datasets. In the upper right panel, the dashed horizontal line indicates the true number of predictors with nonzero effects.}
\label{fig:bernoulli-10-5}
\end{figure}
In this low-dimensional setting, \texttt{glmmPen} emerges as the next best performer behind \texttt{glmmsel}, eventually delivering predictions on par with \texttt{glmmsel} when the sample size reaches $\ndot=1,000$. Nonetheless, its ability to accurately distinguish zeros from nonzeros and fixed effects from random effects is typically weaker. The underperformance of \texttt{rpql} is likely due to it modeling the random effect covariance matrix as unstructured, whereas the other methods use a more efficient diagonal representation.

\section{Denser Experiments}
\label{app:denser}

To assess performance beyond the highly sparse regime, we consider denser settings with 15 nonzero fixed effects and 5 nonzero random effects. Figures~\ref{fig:gaussian-1000-15} and \ref{fig:gaussian-10000-15} report the results for $p=1,000$ and $p=10,000$, respectively.
\begin{figure}[t]
\centering
\input{Figures/gaussian-1000-15.tex}
\caption{Comparisons on synthetic data with Gaussian response. The number of predictors $p=1,000$ with 15 nonzero fixed effects and 5 nonzero random effects. The correlation level $\rho=0.5$. The averages (points) and standard errors (error bars) are measured over 100 datasets. In the upper right panel, the dashed horizontal line indicates the true number of predictors with nonzero effects.}
\label{fig:gaussian-1000-15}
\end{figure}
\begin{figure}[t]
\centering
\input{Figures/gaussian-10000-15.tex}
\caption{Comparisons on synthetic data with Gaussian response. The number of predictors $p=10,000$ with 15 nonzero fixed effects and 5 nonzero random effects. The correlation level $\rho=0.5$. The averages (points) and standard errors (error bars) are measured over 100 datasets. In the upper right panel, the dashed horizontal line indicates the true number of predictors with nonzero effects.}
\label{fig:gaussian-10000-15}
\end{figure}
As expected, these settings are more challenging for all methods, especially at smaller sample sizes. Even so, the main conclusions are unchanged: \texttt{glmmsel} continues to perform strongly in prediction, achieves competitive recovery of effect type, particularly at larger sample sizes, and selects models with sparsity close to the truth. Overall, the results suggest that \texttt{glmmsel} remains stable beyond the highly sparse regime, including in higher-dimensional settings.

\section{Sensitivity to \texorpdfstring{$\gamma$}{gamma} Tuning}
\label{app:sensitivity}

To examine the sensitivity of the competing continuous shrinkage methods to their additional tuning parameter $\gamma$, we run additional experiments for \texttt{glmnet} and \texttt{ncvreg} in which $\gamma$ is selected by validation over a 10-point grid. Figure~\ref{fig:gaussian-gamma-tuning-1000-5} reports the results.
\begin{figure}[t]
\centering
\input{Figures/gaussian-gamma-tuning-1000-5.tex}
\caption{Comparisons on synthetic data with Gaussian response when the $\gamma$ parameter is tuned for \texttt{glmnet} and \texttt{ncvreg}. The number of predictors $p=1,000$ with 5 nonzero fixed effects and 3 nonzero random effects. The correlation level $\rho=0.5$. The averages (points) and standard errors (error bars) are measured over 100 datasets. In the upper right panel, the dashed horizontal line indicates the true number of predictors with nonzero effects.}
\label{fig:gaussian-gamma-tuning-1000-5}
\end{figure}
Relative to the main experiments, tuning $\gamma$ improves prediction for \texttt{glmnet} and \texttt{ncvreg}, but substantially worsens selection performance. The fitted models become much denser, and both F1 measures decline. Figure~\ref{fig:gaussian-gamma-tuning-1000-5} also shows that the number of selected predictors tends to increase with sample size. This increase in selected predictors appears to reflect misspecification when random effects are present in the data-generating process, since methods without random effects can improve their fit by selecting more fixed effects to absorb heterogeneity. This pattern is not observed in unreported experiments under a classical data-generating process with fixed effects only.

\section{Mashable Dataset}
\label{app:mashable}

The \texttt{mashable} dataset from \citet{Fernandes2015} contains data on the popularity of news articles posted to the online news platform Mashable. In addition to a binary response variable indicating whether an article is popular (classified as achieving more than the median number of social media shares), the dataset comprises $p=53$ predictor variables that characterize the article (e.g., number of total words, positive words, and images). The task is to identify the article characteristics that are reliable determinants of popularity. Towards that end, the dataset includes observations over $m=27$ days in December 2014, with $n_i\in\{20,\dots,100\}$ articles posted each day, giving a total of $\ndot=1,700$ observations.

We randomly split the dataset into training, validation, and testing sets in 0.70-0.15-0.15 proportions. Table~\ref{tab:mashable} reports the results from 100 of these splits.
\begin{table}[ht]
\centering
\small

\begin{tabular}{lrrrr}
\toprule
 &  & \multicolumn{3}{c}{Sparsity} \\ 
\cmidrule(lr){3-5}
 & Prediction error & Total & Fixed & Random \\ 
\midrule
\texttt{glmmsel} & 0.908 (0.003) & 14.4 (0.9) & 13.4 (0.8) & 1.0 (0.2) \\ 
\texttt{L0Learn} & 0.916 (0.004) & 19.0 (1.5) & 19.0 (1.5) & 0.0 (0.0) \\ 
\texttt{glmnet} & 0.918 (0.004) & 21.9 (1.4) & 21.9 (1.4) & 0.0 (0.0) \\ 
\texttt{ncvreg} & 0.916 (0.004) & 17.3 (1.4) & 17.3 (1.4) & 0.0 (0.0) \\ 
\bottomrule
\end{tabular}

\caption{Comparisons on the \texttt{mashable} dataset. The averages and standard errors (parentheses) are measured over 100 splits of the data. Prediction error is the cross entropy loss of the fitted linear predictor relative to the fixed intercept-only model.}
\label{tab:mashable}
\end{table}
\texttt{glmmsel} again clearly stands out by producing GLMMs that attain the lowest prediction loss and contain the fewest predictors (approximately 14 article characteristics on average). In contrast, the GLMs from \texttt{L0Learn} and \texttt{ncvreg}, which rely solely on fixed effects and thus cannot capture heterogeneity across days, select roughly 20–30\% more article characteristics, resulting in substantially worse prediction performance. \texttt{glmnet} is even less competitive, averaging 50\% more predictors and achieving notably higher prediction loss. Consistent with the results reported on the \texttt{riboflavin} dataset, \texttt{glmmsel} again emerges as the most performant method and the only method capable of successfully fitting sparse GLMMs on the \texttt{mashable} dataset.

\bibliographystyle{agsm}
\bibliography{library}

\end{document}